\definecolor{mydarkgreen}{rgb}{0,1,0.18}
\definecolor{mydarkblue}{rgb}{0,0.18,1}
\theoremstyle{plain}
\newtheorem{theorem}{Theorem}[section]
\newtheorem{proposition}[theorem]{Proposition}
\newtheorem{lemma}[theorem]{Lemma}
\theoremstyle{definition}
\newtheorem{definition}[theorem]{Definition}
\theoremstyle{remark}
\tikzstyle{yellow rect}=[fill=yellow, draw={rgb,255: red,255; green,248; blue,44}, shape=rectangle]
\tikzstyle{blue rect}=[fill=cyan, draw=cyan, shape=rectangle]
\tikzstyle{lightblue-rect}=[fill={rgb,255: red,197; green,255; blue,254}, draw={rgb,255: red,197; green,255; blue,254}, shape=rectangle]
\tikzstyle{light yellow}=[fill={rgb,255: red,252; green,255; blue,176}, draw={rgb,255: red,253; green,255; blue,176}, shape=rectangle]
\tikzstyle{purple rect}=[fill={rgb,255: red,255; green,1; blue,251}, draw={rgb,255: red,206; green,8; blue,255}, shape=rectangle]
\tikzstyle{blue draw}=[fill=white, draw=blue, shape=rectangle]
\tikzstyle{prediction}=[fill={rgb,255: red,212; green,255; blue,252}, draw=cyan, shape=rectangle]
\tikzstyle{new edge style 0}=[draw={rgb,255: red,7; green,81; blue,255}, <->]
\tikzstyle{new edge style 1}=[draw={rgb,255: red,7; green,81; blue,255}, ->]
\newtheorem{myThm}{\textbf{Theorem}}
\newcommand{\nh}{\mathcal{N}}
\newcommand{\bi}{\mathbb{I}}
\newcommand{\be}{\mathbb{E}}
\newcommand{\cx}{\mathcal{X}}
\newcommand{\cy}{\mathcal{Y}}
\newcommand{\cg}{\mathcal{G}}
\newcommand{\cf}{\mathcal{F}}
\newcommand{\ca}{\mathcal{A}}
\newcommand{\ch}{\mathcal{H}}
\newcommand{\alink}[1]{\href{#1}{paper-link}}
\definecolor{citecolor}{HTML}{0071BC}
\definecolor{linkcolor}{HTML}{ED1C24}
\def\eqref#1{equation~\ref{#1}}
\def\1{\bm{1}}
\DeclareMathAlphabet{\mathsfit}{\encodingdefault}{\sfdefault}{m}{sl}
\SetMathAlphabet{\mathsfit}{bold}{\encodingdefault}{\sfdefault}{bx}{n}
\newcommand{\R}{\mathbb{R}}
\title{Trade-off Between Efficiency and Consistency \\for Removal-based Explanations}
\author{%
  \vspace{2.5mm}
  \textbf{Yifan Zhang}\thanks{Equal Contribution.}$^{\,\,\,1}$ \hspace{6mm} \textbf{Haowei He}$^{*}$$^{1}$ \hspace{6mm} \textbf{Zhiquan Tan}$^{2}$ \hspace{6mm} \textbf{Yang Yuan}$^{1,3,4}$\thanks{Corresponding Author} \\
  $^{1}$IIIS, Tsinghua University\\
  $^{2}$Department of Math, Tsinghua Univesity\\
  $^{3}$Shanghai Artificial Intelligence Laboratory\\
  $^{4}$Shanghai Qizhi Institute\\
  \texttt{\{zhangyif21,hhw19,tanzq21\}@mails.tsinghua.edu.cn}\\
  \texttt{yuanyang@tsinghua.edu.cn} \\
}
\begin{document}

\maketitle

\renewcommand{\thefootnote}{\fnsymbol{footnote}}
\setcounter{footnote}{2}

\begin{abstract}
In the current landscape of explanation methodologies, most predominant approaches, such as SHAP and LIME, employ removal-based techniques to evaluate the impact of individual features by simulating various scenarios with specific features omitted. Nonetheless, these methods primarily emphasize efficiency in the original context, often resulting in general inconsistencies. In this paper, we demonstrate that such inconsistency is an inherent aspect of these approaches by establishing the Impossible Trinity Theorem, which posits that interpretability, efficiency, and consistency cannot hold simultaneously. Recognizing that the attainment of an ideal explanation remains elusive, we propose the utilization of interpretation error as a metric to gauge inefficiencies and inconsistencies. To this end, we present two novel algorithms founded on the standard polynomial basis, aimed at minimizing interpretation error. Our empirical findings indicate that the proposed methods achieve a substantial reduction in interpretation error, up to 31.8 times lower when compared to alternative techniques\footnote{Code is available at \url{https://github.com/trusty-ai/efficient-consistent-explanations}.}.
\end{abstract}

\section{Introduction}
Most existing explanation approaches are removal-based \citep{covert2021explaining}, which involve the sequential process of eliminating certain input features, examining the subsequent alterations in the model's behavior, and ascertaining each feature's impact through observation. However, most of these  methods~\citep{sundararajan2017axiomatic, lundberg2017unified} primarily focus on the original input with all features,  
often yielding inconsistent outcomes in alternative scenarios. Consequently, the resulting interpretations may not explain the network's behavior consistently even in a small neighborhood of the input, as demonstrated in Figure~\ref{fig:movie-review-shap}.

Inconsistency is a non-trivial concern. Imagine a doctor treating a diabetic patient with the help of an AI system. The patient has features A, B, and C, representing three positive signals from various tests. The AI recommends administering $4$ units of insulin with the following explanation: A, B, and C have weights $1$, $1$, and $2$, respectively, amounting to $4$ units in total. The doctor might then ask the AI: \emph{what if} the patient only has A and B, but not C? One might expect the answer to be close to $2$, as A+B has a weight of $2$. However, the network, being highly non-linear, might output a different suggestion like $3$ units, explaining that both A and B have a weight of $1.5$. Such inconsistent behaviors can significantly reduce the doctor's confidence in the interpretations, limiting the practical value of the AI system.

\begin{figure*}[htbp]
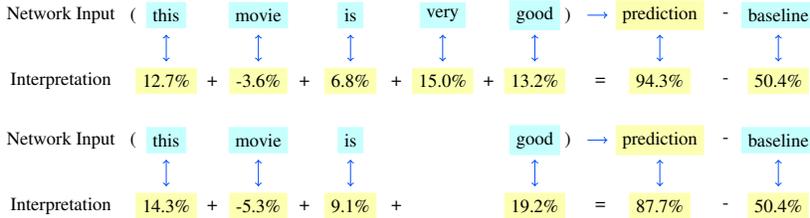

\begin{center}
\scalebox{0.7}{\tikzfig{images/fig-shap}}
\caption{Interpretations generated by SHAP on movie review.}
\label{fig:movie-review-shap}
\end{center}
\vspace{-2mm}
\end{figure*}

Consistency (see Definition~\ref{def:consistent}) is certainly not the only objective for interpretability. 
Being equally important, 
efficiency is a commonly used axiom in the attribution methods~\citep{weber1988probabilistic, friedman1999three, sundararajan2020many}, also called local accuracy~\citep{lundberg2017unified} or
 completeness~\citep{sundararajan2017axiomatic}, stating that 
the model's output should be equal to the network's output for the given input (see Definition~\ref{def:efficiency}).
Naturally, one may ask the following question:

\begin{center}
\fbox{
	\parbox{0.95\columnwidth}{
\textbf{
Q1: Can we generate an interpreting model that is both consistent and efficient?
}}}
\end{center}

Unfortunately, this is generally unattainable.
We have proved the following theorem in Section~\ref{sec:framework}: 

\textbf{Theorem~\ref{thm:trinity}}(Impossible trinity, informal version). 
\emph{For post-hoc interpreting models, interpretability, efficiency and consistency cannot hold simultaneously.}

 A few examples following Theorem~\ref{thm:trinity}:
\vspace{-0.2cm}
\begin{itemize}[leftmargin=1cm, parsep= -0.05 cm]
    \item[(a)]
Most attribution methods are interpretable and efficient, but not consistent.
\item[(b)] The original (deep) network is consistent and efficient, but not interpretable.
\item[(c)] If one model is interpretable and consistent, it cannot be efficient. 
\end{itemize}

However, consistency is necessary for many scenarios, leading to the follow-up question:
\begin{center}
\fbox{
	\parbox{0.95\columnwidth}{
\textbf{
Q2: For consistent interpreting models, can they be approximately efficient?
}}}
\end{center}

The answer depends on the definition of ``approximately efficient". We introduce a new notion called \textit{truthfulness}, which serves as a natural relaxation of efficiency, or partial efficiency. We divide the functional space of a network $f$ into two subspaces: the readable part and the unreadable part. We call the interpreting model $g$ truthful if it can accurately represent the readable part of $f$, denoted as~$V$  (see Definition~\ref{def:truthful-gap}). The unreadable part is not merely a ``fitting error"; it truthfully represents the higher-order non-linearities in the network that our interpretation model $g$, even at its best, cannot cover. In short, what $g$ conveys is true, though it may not encompass the entire truth. Due to Theorem~\ref{thm:trinity}, \textbf{this is essentially the best that consistent algorithms can achieve.}

Truthfulness is a parameterized notion, depending on the choice of the readable subspace. While there are theoretically infinite possible choices, we follow previous researchers on interpretability with non-linearities~\citep{sundararajan2020shapley,directionalfeature,tsang2020does}, using the basis that induces interpretable terms like $x_ix_j$ or $x_ix_jx_k$. These capture higher-order correlations and are easy to understand. The resulting subspace has the standard polynomial basis (or equivalently, the Fourier basis). 

When truthfulness is parameterized with the polynomial basis, designing consistent and truthful interpreting models is equivalent to learning the Fourier spectrum (see Algorithm~\ref{alg:cap}  and Lemma~\ref{lem:truthful-gap}). However, exact consistency is not always necessary for most real-world applications, as approximate consistency is usually sufficient. We formally use the number of different interpreting models (in log scale) as a metric for inconsistency. Our last question is:

\begin{center}
\fbox{
	\parbox{0.95\columnwidth}{
\textbf{
Q3: When a little inconsistency is allowed, can we get better interpreting models? 
}}}
\end{center}

We affirmatively answer this question in Section~\ref{sec:tradeoff} by introducing a new algorithm called Harmonica-local. We then apply it multiple times to develop Harmonica-anchor and Harmonica-anchor-constrained, which offer smaller interpretation errors with a small degree of inconsistency.

In this paper, we focus on removal-based explanations~\citep{ covert2021explaining}, which implies that $f$ and $g$ are Boolean functions. 
We remark that empirically most networks are not Boolean functions, i.e., the input variables are real valued. 
However, for every explanation algorithm (including LIME~\citep{ribeiro2016should} and SHAP~\citep{lundberg2017unified}) in the removal-based framework, the input $x$ is not modified to an arbitrary real value. Instead, $x$ is fixed, and the algorithm only retain or remove each feature of $x$ for generating the explanations. 
When $x$ is fixed with $n$ features, it becomes natural to use Boolean functions to represent both the network $f$ and the interpreting model $g$ for their outputs in all $2^n$ feature-removal scenarios. In other words, the algorithms in the framework only consider a fixed $x$ each time, and given this $x$, there are only $2^n$ possible outcomes, even for the models with real inputs. 
Therefore, treating $f$ and $g$ as Boolean functions is not a simplification, but an accurate characterization of the removal-based framework (see Figure~1 in \citep{covert2021explaining} for an illustration).

Our new algorithms, including Harmonica-local, Harmonica-anchor, and Harmonica-anchor-constrained, are all based on the Harmonica algorithm from Boolean functional analysis~\citep{hazan2017hyperparameter}, which has rigorous theoretical guarantees on recovery performance and sampling complexities. In Section~\ref{sec:experiments}, we demonstrate that on datasets like IMDb and ImageNet, our algorithms achieve up to $31.8$x lower interpretation error compared with other methods.

In summary, our contributions are:
\vspace{-0.2cm}
\begin{itemize}[leftmargin=0.7cm, parsep=-0.0 cm]

\item We prove the impossible trinity theorem for removal-based explanations, demonstrating that interpretable algorithms cannot be consistent and efficient simultaneously.

\item When a small inconsistency is allowed, we propose new algorithms using Harmonica-local and empirically demonstrate that these algorithms achieve significantly lower interpretation errors compared with other methods.

\item For interpretable algorithms that are consistent but not efficient, we introduce a new notion called truthfulness, which can be regarded as partial efficiency. Due to the impossible trinity theorem, this is the best achievable outcome when consistency is required.
\end{itemize}

\section{Our Framework on Interpretability}
\label{sec:framework}

We consider a Hilbert space $\ch$ equipped with inner product $\langle\cdot,\cdot \rangle$, and induced norm $\|\cdot \|$. 
We denote the input space by $\cx$, the output space by $\cy$, which means $\ch\subseteq \cx\rightarrow \cy$. We use $\cg \subset \ch$ to denote the set of interpretable functions,  and $\cf\subset \ch$ to denote the set of machine learning models that need interpretation. In this paper, if not mentioned otherwise we focus on models that are not self-interpretable, i.e., $f \in \cf \setminus \cg$. 

\begin{definition}[Interpretable and Interpretation Algorithm]
We call A model $g$ is \textit{interpretable}, if $g\in \cg$. An \textit{interpretation algorithm} $\ca$ takes $f\in \ch, x\in \cx$ as inputs, and outputs $\ca(f,x)\in \cg$ for interpreting $f$ on $x$.
\end{definition}

As we mentioned previously, for many interdisciplinary fields, the interpretation algorithm should be consistent. 
\begin{definition}[Consistent]
\label{def:consistent}
Given $f\in \ch$,
an interpretation algorithm $\ca$ is \textit{consistent} with respect to $f$, if $\ca(f,x)$ remains the same (function) for every $x\in \cx$.
\end{definition}
Efficiency is an important property of the attribution methods. 
\begin{definition}[Efficient]
\label{def:efficiency}
A model $g\in \ch$ is \textit{efficient} with respect to $f\in \cf$ on $x\in \cx$, if 
$g(x)=f(x)$.
\end{definition}

The following theorem states that one cannot expect to achieve the best of all three worlds. 

\begin{myThm}[\textbf{Impossible Trinity for Removal-based Explanations}]
\label{thm:trinity}
For any interpretation algorithm $\ca$ and function sets $\cg\subset \cf\subseteq \ch$, 
there exists $f\in \cf$ such that with respect to $f$, 
either $\ca$ is not consistent, or 
$\ca(f,x)$ is not efficient on $x$ for some $x\in \cx$. 
\end{myThm}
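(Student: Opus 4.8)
The plan is to argue by contradiction, using the observation that consistency together with efficiency \emph{at every} $x$ pins down the interpreting function completely, which then collides with the strict inclusion $\cg \subsetneq \cf$. Concretely, I would fix a witness $f \in \cf \setminus \cg$ (such an $f$ exists precisely because the inclusion $\cg \subset \cf$ is strict, as already used in the paragraph defining non-self-interpretable models), and assume toward a contradiction that for this $f$ the algorithm $\ca$ is \emph{both} consistent with respect to $f$ \emph{and} efficient on $x$ for every $x \in \cx$.

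Next I would unpack the two hypotheses and let them interlock. By consistency (Definition~\ref{def:consistent}), the output $\ca(f,x)$ does not depend on $x$; denote the common value by $g_f := \ca(f,x) \in \cg$. By efficiency (Definition~\ref{def:efficiency}) applied pointwise, $g_f(x) = f(x)$ for all $x \in \cx$. Since elements of $\ch \subseteq \cx \to \cy$ are genuine functions, agreement at every point of the domain means $g_f = f$ as elements of $\ch$.

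The contradiction is then immediate: $g_f \in \cg$ forces $f = g_f \in \cg$, contradicting the choice $f \in \cf \setminus \cg$. Hence for this $f$ the two properties cannot hold together, i.e.\ $\ca$ is not consistent with respect to $f$, or $\ca(f,x)$ fails to be efficient for some $x$, which is exactly the claimed dichotomy.

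I do not expect a genuine technical obstacle; the substance lies entirely in making the definitions interlock, so the argument is essentially a bookkeeping exercise once the setup is fixed. The one step that deserves explicit care is the identification $g_f = f$: I must read ``efficient on $x$ for every $x$'' as genuine pointwise equality across the whole domain, and I must ensure that such pointwise equality is equality in $\ch$ rather than equality of equivalence classes (as would arise in an $L^2$-type space). The excerpt's description of $\ch$ as a space of maps $\cx \to \cy$ makes this immediate, but since it is the hinge of the whole argument I would state it explicitly.
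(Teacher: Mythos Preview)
Your proposal is correct and essentially identical to the paper's own proof: pick $f\in\cf\setminus\cg$, use consistency to get a single $g\in\cg$, and then note that pointwise efficiency would force $g=f\in\cg$, a contradiction. The paper's argument is terser but follows exactly the same steps; your extra remark about pointwise equality implying equality in $\ch$ is a reasonable clarification but not something the paper spells out.
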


\begin{proof}
    Please refer to Appendix~\ref{sec:proofs} for all the proofs.
\end{proof}

Theorem~\ref{thm:trinity} says efficiency is too restrictive for consistent interpretations. However, being inefficient does not mean the interpretation is wrong, it can still be truthful. Recall a subspace $V\subset \ch$ is \emph{closed} if whenever $\{f_n\}\subset V$ converges to some $f\in \ch$, then $f\in V$. We have:

\begin{definition}[Truthful gap and truthful]
\label{def:truthful-gap}
Given a closed subspace $V\subseteq \ch$, 
$g\in \cg\subseteq V$ and $f\in \cf$, the \textit{truthful gap} of $g$ to $f$ for $V$ is:
\begin{equation}
\mathbb{T}_{V}(f,g) \triangleq
\|f-g\|^2-
\inf_{v\in V}\| f - v\|^2.
\label{eq:truthful-gap-general}
\end{equation}
When $\mathbb{T}_{V}(f,g)=0$, we say $g$
is \emph{truthful} for subspace $V$ with respect to $f$, and we know (see e.g. Lemma 4.1 in \citep{stein2009real})
$\forall v \in V, \langle f-g, v\rangle =0$.
\end{definition}

Truthfulness means $g$ fully captures the information in the subspace $V$ of $f$, therefore it can be seen as a natural relaxation of efficiency. 
To characterize the interpretation quality, we introduce the following notion. 

\begin{definition}[Interpretation error]
\label{def:interpretation-error}
Given functions $f,g\in \cx\rightarrow \cy$, 
the interpretation error between $f$ and $g$ with respect to measure $\mu$ is
\begin{equation}
\bi_{p, \mu}(f,g) \triangleq
\left(\int_{\cx}
|f(x)-g(x)|^p d\mu(x)\right)^{1/p}.
\label{eq:interpretation-error}
\end{equation}
\end{definition}
Notice that interpretation error is only a \emph{loss function} that measures the quality of the interpretation, instead of a metric in $\ell_p$ space. Therefore, $\mu$ can be a non-uniform weight distribution following the data distribution.
If $\mu$ is uniform distribution over $\cx$, we abbreviate $\bi_{p, \mu}(f,g)$ as $\bi_{p}(f,g)$. For real-world applications, interpreting the model over the whole $\cx$ is unnecessary, so $\mu$ is usually defined as a uniform distribution on the neighborhood of input $x$ (under a certain metric), in which case we denote the distribution as $\nh_x$. 

\section{Applying Our Framework to Removal-based Explanations}

Now we focus on interpreting removal-based explanations~\citep{lundberg2017unified, covert2021explaining}. 
Removal-based explanations are post-hoc, which means they are generated based on a target network for a fixed input, by removing features from that input. There are three choices affecting the removal-based explanations: how features are removed, what behavior is analyzed after feature removal, and how to summarize the feature influence. For example, SHAP~\citep{lundberg2017unified} considers all possible subsets of the features, and analyzes how holding out different features affects functional value, and finally summarizes the differences based on the Shapley value calculation. 

Therefore, removal-based explanations can be represented as Boolean functions, as feature subset $S \subseteq [n]$ can be represented as Boolean input: $f \in\{-1,1\}^n \rightarrow \mathbb{R}$, where $-1$ means the specific feature $i \in S$ is removed, $1$ means the specific feature is retained. We use $-1/1$ instead of $0/1$ to represent the binary variables for ease of exposition using the Fourier basis.

\subsection{Fourier Basis and Truthful Gap}
Fourier analysis is a handy tool for analyzing Boolean functions. 
Due to the space limit, we defer the more comprehensive introduction on Fourier analysis for Boolean functions to Appendix~\ref{sec:prelim}, and only present the necessary notions here. 

\begin{definition}[Fourier basis]
For any subset of variables $S\subseteq [n]$, we define the corresponding Fourier basis as $\chi_S(x) \triangleq \Pi_{i\in S} x_i 
~\in \{-1,1\}^n \rightarrow \{-1,1\}$.
\end{definition}

The Fourier basis is also called polynomial basis in the literature.  It is a complete orthonormal basis for Boolean functions, under the uniform distribution on $\{-1, 1\}^n$. We remark that this uniform distribution is used for theoretical analysis and algorithm design, and is 
different from the measure $\mu$ for interpretation quality assessment in Definition~\ref{def:interpretation-error}. 
\begin{definition}[Fourier expansion]
 Any Boolean function $f\in \{-1,1\}^n \rightarrow \R $ can be expanded as $$f(x)=\sum_{S\subseteq [n]} \hat f_S \chi_S(x),$$
where $\hat f_S=\langle f, \chi_S\rangle$ is the Fourier coefficient on $S$. 
\end{definition}

Now we define the notion of $C$-Readable function.

\begin{definition}[$C$-Readable function]
\label{def:readable}
Given a set of Fourier bases $C$, a function $f$ is $C$-readable if it is supported on $C$. That is, for any $\chi_S\not\in C$, $\langle f, \chi_S\rangle =0$. Denote the corresponding subspace as $V_C$.
\end{definition}

The Readable notion is parameterized with $C$, because it may differ case by case. If we set $C$ to be all the single variable bases, only linear functions are readable; if we set $C$ to be all the bases with the degree at most $2$, functions with pairwise interactions are also readable. Moreover, if we further add one higher order term to $C$, e.g., $\chi_{\{x_1, x_2, x_3, x_4\}}$, it means we can also reason about the factor $x_1 x_2 x_3 x_4$ in the interpretation, which might be an important empirical factor that people can easily understand. 
Starting from the bases set $C$, we have the following formula for computing the truthful gap. 

\begin{lemma}[Truthful gap for Boolean functions]
\label{lem:truthful-gap}
Given a set of Fourier bases $C$, two functions $f, g\in \{-1,1\}^n\rightarrow \R$, 
the truthful gap of $g$ to $f$ for $C$ is 
\begin{equation}
\mathbb{T}_{V_C}(f,g)= \sum_{\chi_S\in C}  \langle f-g, \chi_S\rangle^2.
\label{eq:truthful-gap}
\end{equation}
\end{lemma}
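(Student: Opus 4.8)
The plan is to unfold Definition~\ref{def:truthful-gap} and reduce everything to Fourier coefficients via Parseval's identity. First I would note that since $g\in\cg\subseteq V_C$, by Definition~\ref{def:readable} the function $g$ is supported on $C$, i.e.\ $\hat{g}_S=0$ whenever $\chi_S\notin C$. Applying Parseval to $f-g$ (whose Fourier coefficient on $S$ is $\hat{f}_S-\hat{g}_S$) gives
$$\|f-g\|^2 = \sum_{\chi_S\in C}(\hat{f}_S-\hat{g}_S)^2 + \sum_{\chi_S\notin C}\hat{f}_S^2,$$
where the second sum collects exactly those coefficients of $f$ lying outside $C$, which $g$ cannot affect.

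The key step is to evaluate $\inf_{v\in V_C}\|f-v\|^2$. For any $v\in V_C$ we likewise have $\hat{v}_S=0$ for $\chi_S\notin C$, so the same Parseval expansion yields
$$\|f-v\|^2 = \sum_{\chi_S\in C}(\hat{f}_S-\hat{v}_S)^2 + \sum_{\chi_S\notin C}\hat{f}_S^2.$$
Only the first sum depends on $v$, and it is minimized to $0$ by choosing $\hat{v}_S=\hat{f}_S$ on $C$ — that is, by taking $v$ to be the orthogonal projection of $f$ onto $V_C$. Since $V_C$ is finite dimensional (a subspace of the $2^n$-dimensional space of functions on $\{-1,1\}^n$) it is closed and this minimizer exists, so $\inf_{v\in V_C}\|f-v\|^2=\sum_{\chi_S\notin C}\hat{f}_S^2$.

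Subtracting, the ``outside-$C$'' tails cancel, leaving $\mathbb{T}_{V_C}(f,g)=\sum_{\chi_S\in C}(\hat{f}_S-\hat{g}_S)^2$, and rewriting each coefficient difference as $\hat{f}_S-\hat{g}_S=\langle f,\chi_S\rangle-\langle g,\chi_S\rangle=\langle f-g,\chi_S\rangle$ gives the claimed formula. I expect no genuine obstacle here: the one point deserving a word of care is justifying that the infimum is attained by the orthogonal projection, which follows from the finite dimensionality (hence closedness) of $V_C$ together with the orthonormality of the Fourier basis; the remainder is bookkeeping with Parseval's identity.
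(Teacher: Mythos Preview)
Your proposal is correct and follows essentially the same approach as the paper: both expand $\|f-g\|^2$ and $\|f-v\|^2$ via Parseval into sums over $C$ and its complement, observe that the $\bar C$ part is independent of $g$ and $v$, and minimize the $C$ part by choosing $\hat v_S=\hat f_S$. Your version is arguably a touch more careful in explicitly justifying attainment of the infimum via finite dimensionality of $V_C$, but otherwise the arguments coincide.
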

With the previous definitions, it becomes clear that finding a truthful interpretation $g$ is equivalent to accurately learning a Boolean function with respect to the readable bases set $C$. Intuitively, it means we want to find algorithms that can compute the coefficients for the bases in $C$. In other words, we want to find the importance of the bases like $x_1, x_2x_5, x_2x_6x_7$, etc.

\subsection{Representative Algorithms}
Applying the impossible trinity theorem to removal-based explanations, there are two notable algorithms on the extremes.

\textbf{Shapley values: efficient but not consistent.} Let $N = \{1, 2, ..., n\}$ represent a set of $n$ players, and let $v: 2^N \rightarrow \mathbb{R}$ be a characteristic function that assigns a real value to each coalition of players. The Shapley value of player $i$ is given by:
\begin{equation}
\label{eq:shapley}
\phi_i(v) = \sum_{S\subseteq N\setminus{i}} \frac{(n-|S|-1)!\cdot |S|!}{n!} [v(S\cup{i}) - v(S)],
\end{equation}
where $|S|$ is the number of players in coalition $S$, and the sum is taken over all possible coalitions $S$ that do not include player $i$. One of the most important properties of Shapley values is \textit{efficiency}, i.e. $\sum_{i \in N}\phi_i(v)=v(N)-v(\emptyset)$ (or denoting explanation function $g \triangleq \sum_{i \in N}\phi_i(v) + v(\emptyset)$).
As Shapley value based explanations do \textbf{not} focus on consistency, its explanation is efficient only for the original input, but not for other scenarios when certain features are removed. 

\textbf{Harmonica: consistent but not efficient.} What can we do if we want to address consistency?  As mentioned in Definition~\ref{def:truthful-gap}, we do not expect the algorithm to be efficient, but it can still be truthful with respect to a subspace $V$, which is naturally represented with the polynomial basis. Therefore, in order to learn truthful explanations for given subspace $V$, it is natural to consider LASSO regression over the coefficients on the polynomial basis. Harmonica (Algorithm \ref{alg:cap}) fulfills this requirement~\citep{hazan2017hyperparameter} and has superior complexity shown in Theorem~\ref{thm:harmonica}.

\begin{algorithm}
\caption{Harmonica~\citep{hazan2017hyperparameter}}\label{alg:cap}
\begin{algorithmic}
\STATE 1. Given uniformly randomly sampled $x_1, \cdots, x_T$, evaluate them on $f$: $\{f(x_1), ...., f(x_T )\}$. 
\STATE 2. Solve the following regularized regression problem. 

\begin{equation}
    \underset{\alpha \in \mathbb{R}^{|C|}}{\operatorname{argmin}} \left\{ \sum_{i = 1}^T  \left (\sum_{S, \chi_S\in C} \alpha_S \chi_S(x_i)   - f(x_i)\right)^2 + \lambda \| \alpha \|_1 \right\}
\end{equation}

\STATE 3. Output the polynomial
$g(x) = \sum _{S, \chi_S\in C} \alpha _{S} \chi_ {S} (x)
$.
\end{algorithmic}
\end{algorithm}

\begin{myThm}[Complexity of Harmonica]
\label{thm:harmonica}
Given $f\in\{-1, 1\}^n \rightarrow \R$, a $(\epsilon/4, s, C)$-bounded function, Algorithm~\ref{alg:cap} finds a function $g$ with interpretation error at most $\epsilon$ in time $O((T\log \frac{1}{\epsilon} +|C|/\epsilon)\cdot |C|)$ and sample complexity $T=\tilde{O}( s^2/ \epsilon \cdot \log |C|)$.
\end{myThm}

We defer detailed descriptions and theoretical guarantees of these algorithms to Appendix~ \ref{sec:harmonica}, \ref{sec:uncertainty_principle_details} and \ref{sec:low-degree}, discussion on the comparison of our algorithms with the existing algorithms to Appendix~\ref{sec:discussion}.

\subsection{Quantifying Inconsistency}

All the Fourier coefficients together are called the Fourier spectrum of $f$. We can define the following distance between two functions based on their spectrums as the quantitative measure of the inconsistency between different explanations.

\begin{definition}[Spectrum (Fourier) distance for Boolean functions]
Given two interpretations $g(\cdot), h(\cdot) \in \{-1,1\}^n \rightarrow \mathbb{R}$, we define the \textit{spectrum $p$-distance} between them as:
\begin{equation}
\mathbb{D}_{p}\left(g(\cdot), h(\cdot)\right) \triangleq  \hat{\|} g - h \hat{\|}_p = \left(\sum_{S \subseteq[n]}\left|\hat{g}_S-\hat{h}_S\right|^p\right)^{1 / p}.
\end{equation}
\end{definition}
As there may be many candidates $p$ for evaluating interpretation error and spectrum distance. In the following, we will investigate the most natural choice for $p$. As the Fourier expansion uniquely determined a Boolean function. We would like the distance between their expansion to have the same tendency as the interpretation error, thus the spectrum can encode enough information to reflect the accuracy of the interpretation. We shall first discuss the most ``strict" case, i.e. the $L_0$ norm.

Denote the support of an Boolean function $f$ as $\operatorname{supp}(f) \triangleq \left\{x \in \{-1, 1 \}^n \mid f(x) \neq 0\right\}$. The support of Fourier coefficients (Fourier spectrum), is defined by $\operatorname{supp}(\hat{f}) \triangleq \{S \subseteq[n] \mid \hat{f}(S) \neq 0\}$. Based on Proposition~\ref{prop:uncertainty-principle-boolean-function}, we have the following \textit{uncertainty principle for removal-based explanations}:

\begin{myThm}[Uncertainty Principle for Removal-based Explanations]
\label{thm:theorem2}
Assume $f,g \in \{-1, 1\}^n \rightarrow \mathbb{R}$ and $f \neq g$. When evaluated under $L_0$ norm, the interpretation error and spectrum distance (inconsistency value) have the following uncertainty principle (can be seen as a quantitative version of Impossible Trinity Theorem~\ref{thm:trinity}): 

\begin{equation}
    \bi_{0}(f,g)\mathbb{D}_0(f, g) \geq 1.
\end{equation}
\end{myThm}

We can see that the $L_0$ norm is too strict for evaluating the differences between interpretations (inconsistency). Fortunately, the following proposition shows that the $L_2$ relaxed version is much better, or to say, natural, due to Parseval's identity.

\begin{proposition}[$L_2$ norm is natural]
\label{prop:l2-canonical}
When taking $\mu$ as a uniform distribution, for $f,g \in \{-1, 1\}^n \rightarrow \mathbb{R}$, we have $\bi_2(f,g)=\mathbb{D}_2(f,g)$.
\end{proposition}

Now we present a theorem on the trade-off between efficiency and consistency, quantified by total interpretation error and total inconsistency value (spectrum distance).

\begin{myThm}[Trade-off between Efficiency and Consistency]
\label{thm:trade-off}
Given a function \( f: \{-1, 1\}^n \rightarrow \mathbb{R} \) and a set of \( N \) interpretable functions \( \{g_1, g_2, \ldots, g_N\} \) defined on disjointed sets $\mathcal{X}_i$ ($\mathcal{X} = \bigcup \mathcal{X}_i$), each \( g_i: \{-1, 1\}^n \rightarrow \mathbb{R} \), let \( \bar{g} \) denote the average (spectrum) of \( g_i \) across the space \( \mathcal{X} = \{-1,1\}^n \). Define (we use $L_2$ norm as default):
\[
\mathbb{I}_{\text{total}} = \sum_{i=1}^{N} \| f|_{x \in \mathcal{X}_i} - g_i|_{x \in \mathcal{X}_i} \|_2,
\]
\[
\mathbb{D}_{\text{total}} = \sum_{i=1}^{N} \hat{\|} g_i|_{x \in \mathcal{X}_i} - \bar{g}|_{x \in \mathcal{X}_i} \hat{\|}_2,
\]
Then, we have the following inequality:
\[
\mathbb{I}_{\text{total}} + \mathbb{D}_{\text{total}} \geq \sum_{i=1}^{N} \| f|_{x \in \mathcal{X}_i} - \bar{g}|_{x \in \mathcal{X}_i} \|_2.
\]
In addition, denote ${g}^{\dagger}$ as the globally consistent interpretation ${g}^{\dagger}$ defined on $\{-1,1\}^n$ with minimal interpretation error (can be obtained by Harmonica), we have:
\[
\sum_{i=1}^{N} \| f|_{x \in \mathcal{X}_i} - \bar{g}|_{x \in \mathcal{X}_i} \|_2 \geq \sum_{i=1}^{N} \| f|_{x \in \mathcal{X}_i} - {g}^{\dagger}|_{x \in \mathcal{X}_i} \|_2 = \mathbb{I}_2(f, {g}^{\dagger}) = \mathbb{D}_2(f, {g}^{\dagger}).
\]
\end{myThm}

This theorem illuminates the inherent trade-offs in designing interpretable models. Specifically, to reduce \( \mathbb{I}_{\text{total}} \), one would typically need to increase \( \mathbb{D}_{\text{total}} \) unless \( f \) itself is close to an interpretable model \( \bar{g} \). Theorem~\ref{thm:trade-off} can also be seen as a quantitative version of the Impossible Trinity Theorem~\ref{thm:trinity} for removal-based explanations.

Now we established the lower bound of $\mathbb{I}_{\text{total}} + \mathbb{D}_{\text{total}}$ by $\mathbb{I}_2(f, {g}^{\dagger})$ (the same as $\mathbb{D}_2(f, {g}^{\dagger})$), showing the trade-off between efficiency and consistency, and also bridging the locally consistent interpretations $g_i$ and globally consistent interpretation ${g}^{\dagger}$.

\section{Trade-off Between Efficiency and Consistency}
\label{sec:tradeoff}

Harmonica recovers functions across the entire function space. However, in our setting, if $\nh_x$ is small, it is sufficient to recover a small neighborhood of the function. This inspires us to apply Harmonica to a local space instead of the entire space. By doing so, we obtain more concentrated samples in the local neighborhood. Consequently, minimizing the interpretation error in this neighborhood becomes more manageable. We refer to Harmonica with samples in the local neighborhood as Harmonica-local.

If we relax the consistency requirement, meaning that users are willing to accept minor inconsistencies in interpretations when a slight modification is made to the input, we can achieve smaller interpretation error with Harmonica-anchor, as shown in Algorithm~\ref{algorithm:harmonica-anchor-simple}. Intuitively, Harmonic-anchor applies multiple Harmonica-local algorithms to different subspaces of the input. Specifically, for a given neighborhood region $\mathcal{N}x$, we now randomly select $k_x$ bases $b_{x,i}$, $(i=1, 2, 3,..., k_x)$ as interpretation anchors instead of only one basis as Harmonica does. We calculate $k_x$ interpretation models $g_{x,i}$ on each anchor and denote the coefficient of $g_{x,i}$ as $\alpha_{x,i}$. For simplicity, we omit the subscript $x$ in the following.

The number of bases $k$ is highly related to consistency, so we could define the log value $\log k$ to evaluate the inconsistency of the interpretation models. The minimum inconsistency value is $0$, as in Harmonica ($k=1$), while for attribution methods, the inconsistency should be almost $n$, which is the number of input variables.
The maximum inconsistency value depends on the neighborhood region $\mathcal{N}_x$, since the number of interpretation models should not exceed the number of points in $\mathcal{N}_x$. As a special case, the Harmonica algorithm achieves $0$ inconsistency, making it the most consistent algorithm.

\begin{algorithm}[t]
\caption{\label{algorithm:harmonica-anchor-simple}Harmonica-anchor}
\hspace*{0.02in} {\bf Input:} 
anchor number $k$, model $f$, a distance metric function $d(\cdot, \cdot): (\chi_S, \chi_S)\rightarrow \R$ which calculates distance between two anchors~(e.g., $L_p$ norm or Hamming distance), sampling number $T$, regularization coefficient $\lambda_1$.\\
\hspace*{0.02in} {\bf Output:}
interpretation models $g_i$ with coefficients $\alpha_i \in \R^{|C|}$, $i=1, 2, 3, ..., k$.
\begin{algorithmic}[1]
\STATE Fix random bases $b_i, i=1, 2, 3,..., k$ as anchors.
\STATE Randomly sample $T$ bases $b_1, b_2, ..., b_T$ and accordingly, we calculate $f(x_1), f(x_2), ..., f(x_T)$.
\STATE Assign each basis to the anchor with minimal distance and index using $d_i = \operatorname{min} \{ \operatorname{argmin}_{j} d(b_i, b^x_j) \}, i=1, 2, 3,... T$ and $j = 1, 2, 3,..., k$.
\FOR{$i = 1, 2, 3, ..., k$}
    \STATE Solve the following regularized regression problem:
    \begin{equation}
    \begin{split}
        \underset{\alpha_i \in \mathbb{R}^{|C|}}{\operatorname{argmin}} \Bigg\{ \sum_{n = 1}^T \bigg ( \mathbb{I}(i=d_n)  \Big ( \sum_{S, \chi_S\in C} \alpha_{i, S} \chi_S(x_n)   
        - f(x_n) \Big )\bigg)^2 + \lambda \| \alpha_i \|_1 \Bigg\}.
    \end{split}
    \end{equation}
\ENDFOR
\end{algorithmic}
\end{algorithm}
\vspace{-2mm}

However, if one only considers the number of different interpretation algorithms, users may still encounter a high level of inconsistency empirically when the interpretations significantly differ from one another. Thus, another critical constraint to add is the spectrum distance constraint among different interpretations of different anchors.
Based on Theorem~\ref{thm:trade-off}, we introduce Harmonica-anchor-constrained in Algorithm~\ref{algorithm:harmonica-anchor-constrained} (in Appendix). In this algorithm, $\lambda_2$ serves as a penalty coefficient for the difference between $g_i$, and we solve the problem using iterative gradient descent. Intuitively, a larger $\lambda_2$ restricts the expressive power of the interpretation models $g_i$, leading to a larger interpretation error.

\section{Experiments}
\label{sec:experiments}
\subsection{Analysis on Polynomial Functions}
To investigate the performance of different interpretation methods, we \emph{manually} examine the output of various algorithms including LIME~\citep{ribeiro2016should}, SHAP~\citep{lundberg2017unified}, Shapley Interaction Index~\citep{owen1972multilinear}, Shapley Taylor~\citep{sundararajan2020shapley, hamilton2021axiomatic}, Faith-SHAP~\citep{tsai2022faith}, Harmonica, and Low-degree~(Appendix~\ref{sec:low-degree}) for lower-order polynomial functions.

We observe that all algorithms can accurately learn the coefficients of the first-order polynomial. For the second-order polynomial function, only Shapley Taylor, Faith-SHAP, Harmonica, and Low-degree can learn all the coefficients accurately. For the third-order polynomial function, only Faith-SHAP, Harmonica, and Low-degree succeed. Due to space constraints, we defer the details to Appendix~\ref{sec:polynomial-test}.

\subsection{Experimental Setup}
In the rest of this section, we conduct experiments to evaluate the interpretation error $\bi_{p,\nh_x}(f,g)$ and truthful gap $\mathbb{T}_{V_C}(f,g)$ of Harmonica and other baseline algorithms on language and vision tasks quantitatively. In our experiments, we choose 2nd order and 3rd order Harmonica algorithms, which correspond to setting $C$ to include all terms with order at most $2$ and $3$.

The baseline algorithms chosen for comparison include LIME~\citep{ribeiro2016should}, Integrated Gradients~\citep{sundararajan2017axiomatic}, SHAP~\citep{lundberg2017unified}, Integrated Hessians~\citep{janizek2021explaining}, Shapley Taylor interaction index, and Faith-SHAP, where the first three are first-order algorithms, and the last three are second-order algorithms.

The two language tasks we select are the SST-2~\citep{socher2013recursive} dataset for sentiment analysis and the IMDb~\citep{maas2011learning} dataset for movie review classification. The vision task is the ImageNet~\citep{krizhevsky2017imagenet} for image classification. To demonstrate the capability of our interpretation framework applied to vision tasks, we have generated two examples in Figure~\ref{fig:illustation-interpretation-plot}, compared with LIME, Integrated Gradients (IG), and SHAP. Note that all of these methods are applied to the same ground-truth image segmentation provided by the MS-COCO dataset~\citep{lin2015microsoft}. For ablations on using the SLIC superpixels~\citep{achanta2010slic}, please refer to Appendix~\ref{sec:more-experiments}.

\begin{figure}[htbp]
    \centering
    \includegraphics[width=0.7\linewidth]{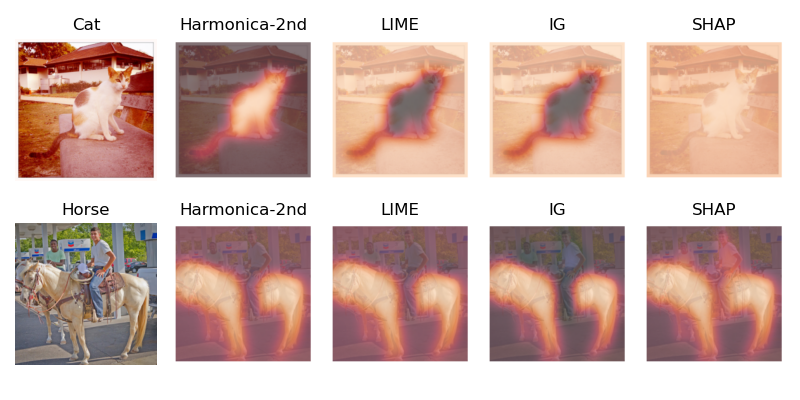}
    \vspace{-2mm}
    \caption{Illustrative examples for applying our interpretation method on MS-COCO dataset.}
    \label{fig:illustation-interpretation-plot}
\end{figure}

For the SST-2 dataset, we attempt to interpret a convolutional neural network~(see details in Appendix~\ref{sec:experiment-settings}) trained with the Adam~\citep{kingma2014adam} optimizer for 10 epochs. 
The IMDb dataset contains long paragraphs, and each paragraph has multiple sentences. 
By default, we use periods, colons, and exclamations to separate sentences.
For the ImageNet~\citep{krizhevsky2017imagenet} dataset, we aim to provide class-specific interpretation, meaning that only the class with the maximum predicted probability is considered for each sample. We use the official pre-trained ResNet-101~\citep{he2016deep} model from PyTorch.

\vspace{-6mm}
\subsection{Results on Interpretation Error}
For a given input sentence $x$ with length $l$, we define the induced neighborhood $\nh_x$ by introducing a masking operation on this sentence. The radius $0 \leq r \leq l$ is defined as the maximum number of masked words.

Figure~\ref{fig:all-all-interpretation-error-plot} displays the $L^2$ interpretation error evaluated under different neighborhoods with a radius ranging from $1$ to $\infty$ for all the considered datasets.
Here, $\infty$ represents the maximum sentence length, which may vary for different data points. We also inspect $L^1$ and $L^0$ norms. Here, $L^2$ and $L^1$ are defined according to Eqn.~(\ref{eq:interpretation-error}) with $p=2$ and $p=1$, respectively. And $L^0$ denotes $\int_{\cx}
\mathds{1}{|f(x)-g(x)|\geq 0.1} d\mu(x)$.
We can see that Harmonica consistently outperforms all the other baselines on all radii.

\begin{figure}
    \centering
    \includegraphics[width=0.85\textwidth]{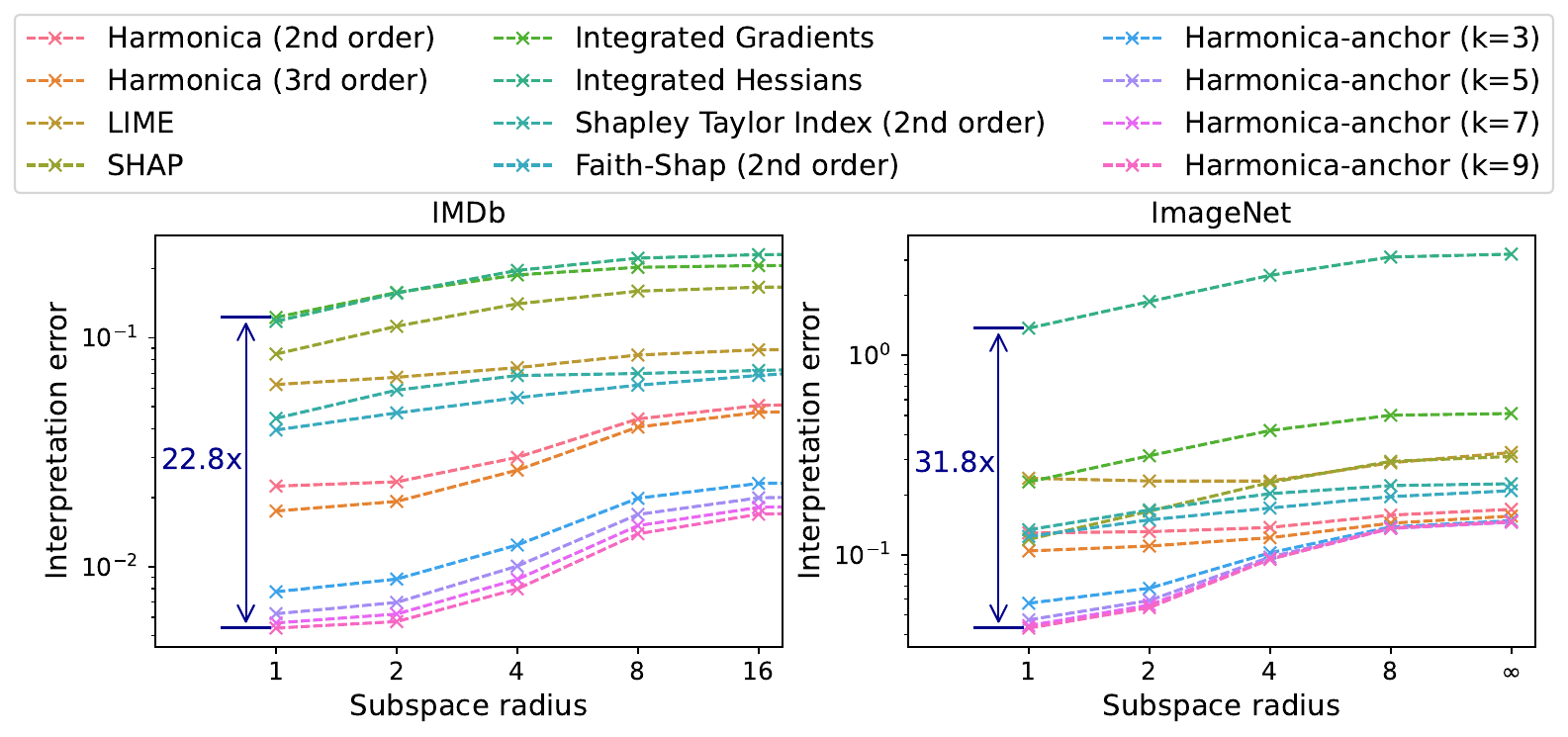}
    \caption{Visualization of $L^2$ interpretation error $\bi_{p,\,\mathcal{N}_x}(f,g)$ of several state-of-the-art interpretation methods evaluated on IMDb and ImageNet datasets.}
    \label{fig:all-all-interpretation-error-plot}
    \vspace{-1ex}
\end{figure}

Note that for IMDb in Figure~\ref{fig:all-all-interpretation-error-plot}, we make a slight modification such that the masking operation is performed on sentences in one input paragraph instead (we also change the definition of radii accordingly). For ImageNet in Figure~\ref{fig:all-all-interpretation-error-plot}, the interpretation error is evaluated on 1000 random images, and the masking operation is performed on 16 superpixels in one input image instead (we also change the definition of radii accordingly). We can see that when the neighborhood's radius is greater than 1, Harmonica outperforms all the other baselines. Specifically, when evaluated on ImageNet dataset, Harmonica-anchor ($k=9$) achieves $31.8$ times lower interpretation error $\mathbb{I}_{2,\,\mathcal{N}_1}$ compare to Integrated Gradients~\citep{sundararajan2017axiomatic}. Limited by space, the detailed numerical results and interpretation error under other norms are presented in Appendix~\ref{sec:tables-in-appendix}.
In contrast,
Harmonica-local achieves more accurate local consistency but high error outside the neighborhood. The relevant results are deferred in Appendix~\ref{sec:detailed_harmonica_local_results} due to limited space.

As mentioned in Section~\ref{sec:tradeoff}, Harmonica-anchor further reduces the interpretation error by learning each subspace separately. Figure~\ref{fig:all-all-interpretation-error-plot} shows that the $L_2$ interpretation error of Harmonica-anchor achieves lower error than Harmonica on various datasets, and the error further declines as we increase the number of anchors. Full results of other norms and the results of Harmonica-anchor-constrained are shown in Appendix~\ref{sec:detailed_harmonica_anchor_results}.

\vspace{-3mm}
\begin{table}[!htb]
    \centering
    \caption{Comparison of \(C^3\) truthful gap \(\mathbb{T}_C(f,g)\) results evaluated on IMDb and ImageNet datasets (lower truthful gap means better truthfulness of the interpretation).}
    \label{tab:main-truthful-gap}
    \begin{tabular}{lrrrrrrrr}
        \toprule
        Method & \makecell{Harm. \\ 2nd} & \makecell{Harm. \\ 3rd} & LIME & SHAP & IG & IH & \makecell{Shapley \\ Taylor}  & \makecell{Faith- \\ Shap} \\
        \midrule
        IMDb  & 0.540 & \textbf{0.174} & 5.343 & 1.390 & 1.438 & 1.948 & 7.005 & 15.528 \\
        ImageNet & 0.660 & \textbf{0.246} & 0.738 & 2.023 & 1.848 & 175.368 & 0.474 & 0.430 \\
        \bottomrule
    \end{tabular}
\end{table}

\vspace{-2ex}
\subsection{Results on Truthful Gap}
For convenience, we define the set of bases $C^d$ up to degree $d$ as $C^d = \{\chi_S | S \subseteq [n], |S| \leq d \}$. We evaluate the truthful gap on the set of bases $C^3$, $C^2$, and $C^1$. For implementation details on calculating the truthful gap, please refer to Appendix~\ref{sec:detailed-truthful-gap}.
Table~\ref{tab:main-truthful-gap} shows the $C^3$ truthful gap evaluated on different datasets. We can see that Harmonica outperforms all the other baseline algorithms. Due to space limitations, $C^2$ and $C^1$ results are provided in Appendix~\ref{sec:detailed-truthful-gap}.

\vspace{-1ex}
\subsection{More Experimental Results}
For more experimental results on different image segmentation methods, different neural network architectures, and different choices of baselines, please refer to Appendix~\ref{sec:more-experiments}. 
For more results on the Low-degree algorithm, please refer to Appendix~\ref{sec:discussion-on-low-degree-algorithm}.

\vspace{-1ex}
\section{Related Work}
\label{sec:related}
Interpretability is a critical topic in machine learning, and we refer the reader to
\citep{doran2017does, lipton2018mythos} for insightful general discussions. Below we discuss different types of interpretable models. 

\textbf{Removal-based explanations. \,}
\citet{covert2021explaining} presents a unified framework for removal-based model explanation methods, connecting 26 existing techniques such as LIME~\citep{ribeiro2016should}, SHAP~\citep{lundberg2017unified}, Meaningful Perturbations~\citep{DBLP:conf/iccv/FongV17}, and permutation tests~\citep{breiman2001random}. LIME~\citep{ribeiro2016should} is a classical method that samples data points following a predefined sampling distribution and computes a function that empirically satisfies local fidelity.

\label{sec:shapley}
The Shapley value~\citep{shapley1953quota, weber1988probabilistic, grabisch1999axiomatic} originates from cooperative game theory and is used to allocate the total value generated by a group of players among individual players. It is the unique kind of method that satisfies a few important properties, including efficiency, symmetry, dummy, additivity, etc. 
Shapley values have been extensively applied to machine learning model explanations~\citep{lundberg2017unified, lundberg2018consistent, vstrumbelj2014explaining, sundararajan2020many, shapleyflow, multivariateshapley, frye2020asymmetric, yuan2021explainability} and feature importance~\citep{covert2020understanding}. Recent research has focused on developing efficient approximation methods for Shapley values~\citep{lundberg2017unified, chen2018shapley, ancona2019explaining, covert2021improving, jethani2021fastshap, hamilton2021axiomatic, DBLP:conf/iclr/WangWI21}. Many works have generalized Shapley values to higher-order feature interactions~\citep{owen1972multilinear,grabisch1999axiomatic, sundararajan2020shapley, directionalfeature, tsang2020does, aas2021explaining, tsai2022faith}.

\textbf{Gradient-based explanations. \,}
Gradient-based explanations are popular for deep learning models, such as CNNs. These methods include SmoothGrad, Integrated Gradient, GradCAM, DeepLift, LRP, etc.~\citep{simonyan2013deep, smilkov2017smoothgrad, shrikumar2017learning, sundararajan2017axiomatic, DBLP:conf/cvpr/XuVS20, selvaraju2017grad, bach2015pixel, DBLP:journals/corr/ShrikumarGSK16, DBLP:conf/cvpr/CheferGW21, montavon2017explaining, shrikumar2017learning, schwab2019cxplain}. 
Although these methods have seen extensive use in various areas, gradient-based methods are often time-consuming and can be insensitive to random model parameterization~\citep{DBLP:conf/nips/AdebayoGMGHK18}.

\textbf{Model-specific interpretable models. \,}
Interpretable or transparent (white-box) models are inherently ante-hoc and model-specific. One primary goal of utilizing interpretable models is to achieve inherent model interpretability. Prominent approaches include Decision Trees (\citep{wang2015trading, balestriero2017neural, yang2018deep}), Decision Rules (\citep{wang2015or,su2015interpretable}), Decision Sets (\citep{lakkaraju2019faithful,wang2017bayesian}), and Linear Models (\citep{ustun2014methods,ustun2013supersparse}). Moreover, another research direction attempts to use an interpretable model surrogate to approximate the original black-box models. \citep{chen2018interpretable} employs a two-layer additive risk model for interpreting credit risk assessments. \citep{bastani2017interpretability}~suggests an approach called model extraction that greedily learns a decision tree to approximate $f$. However, these methods primarily rely on heuristics and lack theoretical guarantees.

\vspace{-1ex}
\section{Conclusion}
In this paper, we tackled the problem of generating consistent interpretations and introduced the impossible trinity theorem of interpretability, under a formal framework for understanding the interplay between interpretability, consistency, and efficiency. Since a consistent interpretation cannot be efficient, we relaxed efficiency to truthfulness, meaning the interpretation matches the target function in a specific subspace. This led to the problem of learning Boolean functions and the proposal of new algorithms based on the Fourier spectrum and a localized version of Harmonica. Our methods showed lower interpretation errors and improved consistency compared to the existing approaches.

While our work offers theoretical insights, many open questions and challenges remain in building more interpretable, consistent, and efficient models. We hope our work serves as a foundation for future research in the area of explainable AI.

\subsubsection*{Acknowledgments}
Thanks to Jiaye Teng for the useful discussions. This work is supported by the Ministry of Science and Technology of the People's Republic of China, the 2030 Innovation Megaprojects ``Program on New Generation Artificial Intelligence'' (Grant No. 2021AAA0150000).

\bibliography{reference}
\bibliographystyle{iclr}

\clearpage
\newpage
\appendix
\subsection*{Pseudo Code for Harmonica-anchor-constrained}
\label{sec:pseudo-code}

\begin{algorithm}[htbp]
\caption{\label{algorithm:harmonica-anchor-constrained}Harmonica-anchor-constrained}
\hspace*{0.02in} {\bf Input:} 
anchor number $k$, model $f$, a distance metric function $d(\cdot, \cdot): (\chi_S, \chi_S)\rightarrow \R$ which calculates distance between two anchors~(e.g., Hamming distance or $L_p$ norm), sampling number $T$, loss balance coefficient $\lambda_1, \lambda_2$, update epochs $E$, update rate $\eta$\\
\hspace*{0.02in} {\bf Output:}
interpretation models $g_i$ with coefficients $\alpha_i$, $i=1, 2, 3, ..., k$.
\begin{algorithmic}[1]
\STATE Fix random bases $b_i, i=1, 2, 3,..., k$ as interpretation anchors.
\STATE Randomly initialize $\alpha_i \in \R^{|C|}, i=1, 2, 3,..., k$.
\STATE Randomly sample $T$ bases $b_1, b_2, ..., b_T$ and accordingly, we calculate $f(x_1), f(x_2), ..., f(x_T)$.
\STATE Assign each basis to the anchor with minimal distance and index with $d_i = \operatorname{min} \{ \operatorname{argmin}_{j} d(b_i, b^x_j) \}, i=1, 2, 3,... T$ and $j = 1, 2, 3,..., k$.

\FOR{$e = 1, 2, 3, ..., E$}
    \STATE $L_f = \frac{1}{T} \sum_{i=1}^T \left (\sum_{S, \chi_S\in C} \alpha_{d_i, S} \chi_S(x_i)   - f(x_i)\right)^2 $ 
    \STATE $L_r = \frac{1}{k} \sum_{i=1}^k \| \alpha_{i} \|_1$.
    \STATE $L_c = \frac{1}{k} \sum_{i=1}^k \| \alpha_{i} - \frac{1}{k} \sum_{j=1}^k \alpha_{j} \|_2$.
    \STATE Update $\alpha_i$ with $\alpha_i \leftarrow \alpha_i - \eta \frac{\partial (L_f + \lambda_1 L_r + \lambda_2 L_c)}{\partial  \alpha_i}$.
\ENDFOR
\end{algorithmic}
\end{algorithm}

\section{Fourier Analysis of Boolean Function}
\label{sec:prelim}
Fourier analysis of Boolean function is a fascinating field, and we refer the reader to \citep{o2014analysis} for a more comprehensive introduction.
We define the inner product as follows.

\begin{definition}[Inner product]
Given two functions $f, g\in \{-1,1\}^n \rightarrow \R$, their inner product is:
\begin{equation}
\langle f, g\rangle \triangleq \underset{x \sim\{-1,1\}^{n}}{\be}[f(x) g(x)] = 2^{-n} \sum_{x\in \{-1, 1\}^n} f(x)g(x).
\end{equation}
\end{definition}

In addition, we define the induced norm $\|f\|_2 \triangleq \sqrt{\langle f, f\rangle}$, and more generally, for $p > 0$,

\begin{equation}
\|f\|_p \triangleq \mathbb{E}[|f(x)|^p]^{1/p}.
\end{equation}

The inner product defines one kind of similarity between two functions and is invariant under different basis. Specifically, we have the following Theorem.

\begin{definition}[Plancherel’s Theorem]\label{def:plancherel}
Given two functions $f, g\in \{-1,1\}^n\rightarrow \R$, 
\[
\langle f,g\rangle = \sum_{S\subseteq[n]} \hat f_S \hat g_S.
\]
\end{definition}
When setting $f=g$, we get the Parseval's identity: $\be[f^2]=\sum_S \hat f_S^2$. 

A distribution over a discrete domain $S$ is often represented as a non-negative function $f: S \rightarrow \mathbb{R}^{+}$which is normalized in $L_1$, i.e., $\sum_{x \in S} f(x)=1$.

\begin{definition}[Fourier $p$-norm]
 For any Boolean function $f\in \{-1,1\}^n \rightarrow \R $, we define the Fourier (or spectral) $p$-norm of $f$ as
 \begin{equation}
 \hat{\|} f \hat{\|}_p \triangleq \left(\sum_{S\subseteq[n]}|\hat{f}_S|^p\right)^{1 / p}.
 \end{equation}
\end{definition}

\begin{definition}[Fourier $p$-distance]
 For any Boolean function $f, g\in \{-1,1\}^n \rightarrow \R $, we define the Fourier (or spectral) $p$-distance between $f$ and $g$ as
 \begin{equation}
 \hat{\|} f - g \hat{\|}_p \triangleq \left(\sum_{S\subseteq[n]}|\hat{f}_S - \hat{g}_S|^p\right)^{1 / p}.
 \end{equation}
\end{definition}

Using Fourier $p$-norm, we could rephrase Parseval's identity as: $\|f\|_2=\hat{\|} f \hat{\|}_2$.

\section{Proofs}
\label{sec:proofs}

\paragraph{Proof of Theorem~\ref{thm:trinity}.}

\begin{proof}
\label{proof:trinity}
Pick $f\in \cf \setminus \cg$. If $\ca$ is consistent with respect to $f$, let $g=\ca(f,x)\in \cg$ for any $x\in \cx$. If for every $x\in \cx$, $g(x)=f(x)$, we know $g=f\not\in \cg$, this is a contradiction. Therefore, there exists $x\in\cx$ such that $g(x)\neq f(x)$.
\end{proof}

\paragraph{Proof of Lemma~\ref{lem:truthful-gap}.}

\begin{proof}
\label{proof:truthful-gap}
Denote the complement space as $V_{\bar C}$. We may expand $f, g, v$ on both bases and get:
\begin{align*}
&\|f-g\|^2-\inf_{v\in V}   \|f-v\|_2
=
\sum_{S\in C} \langle f-g, \chi_S\rangle^2
+
\\&\sum_{S\in \bar C} \langle f, \chi_S\rangle^2-\inf_{v\in V_C}\left (
\sum_{S\in C} \langle f-v, \chi_S\rangle^2
+
\sum_{S\in \bar C} \langle f, \chi_S\rangle^2
\right )
\\=&
\sum_{S\in C} \langle f-g, \chi_S\rangle^2
-\inf_{v\in V_C}\left (
\sum_{S\in C} \langle f-v, \chi_S\rangle^2
\right)
\\=&\sum_{S\in C} \langle f-g, \chi_S\rangle^2,
\end{align*}
where the last equality holds because we can set the Fourier coefficients $\hat v_S=\hat f_S$ for every $S\in C$, which further gives $\langle f-v, \chi_S\rangle=0$. 
\end{proof}

\paragraph{Proof of Theorem~\ref{thm:theorem2}.}

\begin{proof}
\label{proof:theorem2}
Taking $\mu$ as the uniform distribution, then $\bi_0(f,g)=\frac{\sum_{x\in \{-1, 1\}^n} \1_{f(x) \neq g(x)}}{2^n}=\frac{\mid \operatorname{supp}(f-g) \mid}{2^n}$. Note $\mathbb{D}_0(f,g)=\sum_{S \subseteq [n]}\1_{\hat{f}(S) \neq \hat{g}(S)}=\mid \operatorname{supp} (\hat{f}-\hat{g}) \mid$. The conclusion follows from Proposition~\ref{prop:uncertainty-principle-boolean-function}.  
\end{proof}

\paragraph{Proof of Proposition~\ref{prop:l2-canonical}.}

\begin{proof}
\label{proof:l2-canonical}
From the definition of $\bi_2(f,g)$ and $\mathbb{D}_2(f,g)$. By Parseval’s identity, $\bi^2_2(f,g)=2^{-n} \sum_{x\in \{-1, 1\}^n} (f(x)-g(x))^2=\sum_{S \subset [n]} (\hat{f}(S)-\hat{g}(S))^2=\mathbb{D}^2_2(f,g)$.   
\end{proof}

\paragraph{Proof of Theorem~\ref{thm:trade-off}.}

\begin{proof}
\label{proof:trade-off}
From Proposition~\ref{prop:l2-canonical} (Parseval's identity), we have (notice that our functions $g_i$ are defined on $\mathcal{X} = \{-1,1\}^n$ globally and then restricted to $\mathcal{X}_i \subseteq \mathcal{X}$).
\[
\hat{\|} g_i|_{x \in \mathcal{X}_i} - \bar{g}|_{x \in \mathcal{X}_i} \hat{\|}_2 = \hat{\|} g_i|_{x \in \mathcal{X}} - \bar{g}|_{x \in \mathcal{X}} \hat{\|}_2 = \| g_i|_{x \in \mathcal{X}} - \bar{g}|_{x \in \mathcal{X}} \|_2 \geq \| g_i|_{x \in \mathcal{X}_i} - \bar{g}|_{x \in \mathcal{X}_i} \|_2.
\]

Let us employ the triangle inequality for norms. Specifically, for each \( i = 1, 2, \ldots, N \), we apply the triangle inequality as follows:
\[
\| f|_{x \in \mathcal{X}_i} - g_i|_{x \in \mathcal{X}_i} \|_2 + \| g_i|_{x \in \mathcal{X}_i} - \bar{g}|_{x \in \mathcal{X}_i} \|_2 \geq \| f|_{x \in \mathcal{X}_i} - \bar{g}|_{x \in \mathcal{X}_i} \|_2.
\]

Summing this inequality over all \( i \) yields:
\[
\sum_{i=1}^{N} \| f|_{x \in \mathcal{X}_i} - g_i|_{x \in \mathcal{X}_i} \| + \sum_{i=1}^{N} \| g_i|_{x \in \mathcal{X}_i} - \bar{g}|_{x \in \mathcal{X}_i} \| \geq \sum_{i=1}^{N} \| f|_{x \in \mathcal{X}_i} - \bar{g}|_{x \in \mathcal{X}_i} \|_2.
\]

By the definitions of \( \mathbb{I}_{\text{total}} \) and \( \mathbb{D}_{\text{total}} \), we arrive at:
\[
\mathbb{I}_{\text{total}} + \mathbb{D}_{\text{total}} \geq \sum_{i=1}^{N} \| f|_{x \in \mathcal{X}_i} - \bar{g}|_{x \in \mathcal{X}_i} \|_2.
\]

This completes the proof.
\end{proof}

\subsection{Harmonica Algorithm}
\label{sec:harmonica}

To present the theoretical guarantees of the Harmonica algorithm, we introduce the following definition, which is slightly different from its original version in \citep{hazan2017hyperparameter}.

\begin{definition}[Approximately sparse function]
\label{def:approximate-sparse}
We say a function $f\in \{-1,1\}^n \rightarrow \R$ is $(\epsilon, s, C)$-bounded, if\,
$
\be[(f-\sum_{\chi_S\in C}\hat f(S)\chi_S)^2]\leq \epsilon
$ and $\sum_S|\hat{f}(S)|\leq s$.
\end{definition}

Here $f$ is $(\epsilon, s, C)$-bounded means it is almost readable and has bounded $\ell_1$ norm.  
Our algorithm is slightly different from the original algorithm proposed by~\cite{hazan2017hyperparameter}, but similar theoretical guarantees still hold, as stated below. 

\paragraph{Proof of Theorem~\ref{thm:harmonica}.}

Our proof is similar to the one in the original paper~\cite{hazan2017hyperparameter}, with changes in the readable notion, which is now more flexible than being low order. First, recall the classical Chebyshev inequality. 

\begin{theorem}[Multidimensional Chebyshev inequality]
\label{thm:chebyshev}
Let $X$ be an $m$ dimensional random vector, with expected value $\mu=\mathbb{E}[X]$, and covariance matrix $V=\mathbb{E}\left[(X-\mu)(X-\mu)^T\right]$.
If $V$ is a positive definite matrix, for any real number $\delta>0$ :
$$
\mathbb{P}\left(\sqrt{(X-\mu)^T V^{-1}(X-\mu)}>\delta\right) \leq \frac{m}{\delta^2}.
$$
\end{theorem}

\begin{proof}[Proof of Theorem~\ref{thm:harmonica}] Let $f$ be an $(\varepsilon / 4, s, C)$-bounded function written in the orthonormal basis as $\sum_S \hat{f}(S) \chi_S$. We can equivalently write $f$ as $f=h+g$, where $h$ is supported on $C$ that only includes coefficients of magnitude at least $\varepsilon / 4 s$ and the constant term of the polynomial expansion of $f$.

Since $L_1(f)=\sum_S\left|\hat{f}_S\right| \leq s$, we know $h$ is $4 s^2 / \varepsilon+1$ sparse. The function $g$ is thus the sum of the remaining $\hat{f}(S) \chi_S$ terms not included in $h$. Denote the set of bases that appear in $C$ but not in $g$ as $R$, so we know the coefficient of $f$ on the bases in $R$ is at most $\epsilon/4s$.

Draw $m$ (to be chosen later) random labeled examples $\left\{\left(z^1, y^1\right), \ldots,\left(z^m, y^m\right)\right\}$ and enumerate all $N=|C|$ basis functions $\chi_S\in C$ as $\left\{\chi_1, \ldots, \chi_N\right\}$. Form matrix $A$ such that $A_{i j}=\chi_j\left(z^i\right)$ and consider the problem of recovering $4 s^2 / \varepsilon+1$ sparse $x$ given $A x+e=y$ where $x$ is the vector of coefficients of $h$, the $i$ th entry of $y$ equals $y^i$, and $e_i=g\left(z^i\right)$.

We will prove that with constant probability over the choice $m$ random examples, $\|e\|_2 \leq \sqrt{\varepsilon m}$. Applying Theorem 5 in \cite{hazan2017hyperparameter} by setting $\eta=\sqrt{\varepsilon}$ and observing that $\sigma_{4 s^2 / \varepsilon+1}(x)_1=0$ (see definition in the theorem), we will recover $x^{\prime}$ such that $\left\|x-x^{\prime}\right\|_2^2 \leq c_2^2 \varepsilon$ for some constant $c_2$. As such, for the function $\tilde{f}=\sum_{i=1}^N x_i^{\prime} \chi_i$ we will have $\mathbb{E}\left[\|h-\tilde{f}\|^2\right] \leq c_2^2 \varepsilon$ by Parseval's identity. Note, however, that we may rescale $\varepsilon$ by a constant factor $1 /\left(2 c_2^2\right)$ to obtain error $\varepsilon / 2$ and only incur an additional constant (multiplicative) factor in the sample complexity bound.
By the definition of $g$, we have
\begin{equation}
\|g\|^2=\left(\sum_{S, \chi_S\not \in C} \hat{f}(S)^2+\sum_{S\in R} \hat{f}(S)^2\right).
\label{eqn:gnorm}    
\end{equation}
where each $\hat{f}(S)$ for $S\in R$ is of magnitude at most $\varepsilon / 4 s$. By Fact 4 in~\citep{hazan2017hyperparameter} and Parseval's identity we have $\sum_R \hat{f}(R)^2 \leq \varepsilon / 4$. Since $f$ is $(\varepsilon / 4, s, C)$-concentrated we have $\sum_{S,\chi_S\not\in C} \hat{f}(S)^2 \leq \varepsilon / 4$. Thus, $\|g\|^2$ is at most $\varepsilon / 2$. Therefore, by triangle inequality $\mathbb{E}\left[\|f-\tilde{f}\|^2\right] \leq \mathbb{E}\left[\|h-\tilde{f}\|^2\right]+\mathbb{E}\left[\|g\|^2\right] \leq \varepsilon$.
It remains to bound $\|e\|_2$. Note that since the examples are chosen independently, the entries $e_i=g\left(z^i\right)$ are independent random variables. Since $g$ is a linear combination of orthonormal monomials (not including the constant term), we have $\mathbb{E}_{z \sim D}[g(z)]=0$. Here we can apply linearity of variance (the covariance of $\chi_i$ and $\chi_j$ is zero for all $i \neq j$ ) and calculate the variance
$$
\operatorname{Var}\left(g\left(z^i\right)\right)=\left(\sum_{S,\chi_S\not\in C} \hat{f}(S)^2+\sum_{S\in R} \hat{f}(S)^2\right).
$$
With the same calculation as Eqn.~(\ref{eqn:gnorm}), we know $\operatorname{Var}\left(g\left(z^i\right)\right)$ is at most $\varepsilon / 2$.
Now consider the covariance matrix $V$ of the vector $e$ which equals $\mathbb{E}\left[e e^{\top}\right]$ (recall every entry of $e$ has mean 0). Then $V$ is a diagonal matrix (covariance between two independent samples is zero), and every diagonal entry is at most $\varepsilon / 2$. Applying Theorem~\ref{thm:chebyshev} we have
$$
\mathbb{P}\left(\|e\|_2>\sqrt{\frac{\varepsilon}{2}} \delta\right) \leq \frac{m}{\delta^2} .
$$
Setting $\delta=\sqrt{2 m}$, we conclude that $\mathbb{P}\left(\|e\|_2>\sqrt{\varepsilon m}\right) \leq \frac{1}{2}$. Hence with probability at least $1 / 2$, we have that $\|e\|_2 \leq \sqrt{\varepsilon m}$. From Theorem 5 in \citep{hazan2017hyperparameter}, we may choose $m=\tilde{O}\left(s^2 / \varepsilon \cdot \log n^d\right)$. This completes the proof. Note that the probability $1 / 2$ above can be boosted to any constant probability with a constant factor loss in sample complexity.

For the running time complexity, we refer to \citep{allen2016improved} for optimizing linear regression with $\ell_1$ regularization. The running time is $O((T\log \frac{1}{\epsilon} +L/\epsilon)\cdot |C|)$, where $L$ is the smoothness of each summand in the objective. Since each $\chi_S$ takes value in $\{-1,1\}$, the smoothness is bounded by the number of entries in each summand, which is $|C|$. Therefore, the running time is bounded by $O((T\log \frac{1}{\epsilon} +|C|/\epsilon)\cdot |C|)$.
\end{proof}

\section{Uncertainty Principle for Boolean functions}
\label{sec:uncertainty_principle_details}

In the field of modern physics, the state of a particle on a given domain $S$ is represented by a complex function on that domain, and the probability of finding the particle in a specific position $x$ on $S$ is given by the square of the modulus of the function evaluated at $x$. To ensure that the function is normalized, in the case where $S$ is continuous, the function must satisfy $\int_{x \in \mathbb{R}}|f(x)|^2 d x=1$. The Fourier transform of the function, denoted by $\hat{f}$, is also normalized in $L_2$ under a unitary transformation, and $|\hat{f}(x)|^2$ represents the probability density function of the distribution of the particle's momentum. 

The Heisenberg uncertainty principle states that the product of the variances of a particle's position and momentum is at least one, with an appropriate choice of units. This principle has physical significance and relates a function on $\mathbb{R}$ to its Fourier transform. 

In 1957, \citep{hirschman1957note} proposed an entropic form of the uncertainty principle, which was later proven nearly two decades later by Beckner~\citep{beckner1975inequalities}. The inequality states that $H_e[f]+H_e[\hat{f}] \geq 1-\ln 2$, where $H_e[f]$ is the differential entropy of $f$, defined as $-\int_{x \in \mathbb{R}}|f(x)|^2 \ln |f(x)|^2 d x$. 

When the domain is $\mathbb{F}_2^n$ (equivalently, $\{-1,1\}^n$), a similar inequality~\citep{gur2012testing} holds with a different constant. Let $f: \mathbb{F}_2^n \rightarrow \mathbb{C}$ have a Fourier transform $\hat{f}: \mathbb{F}_2^n \rightarrow \mathbb{C}$. Then,

$$
H\left[\frac{f}{\|f\|}\right]+H\left[\frac{\hat{f}}{\|\hat{f}\|}\right] \geq n,
$$

where $H[f]=-\sum_{x \in \mathbb{F}_2^n}|f(x)|^2 \log _2|f(x)|^2$, and $\|f\|=\sqrt{\sum_{x \in \mathbb{F}_2^n} f(x)^2}$.

Now we present the discrete uncertainty principle for $\mathbb{Z}_2^n$ as the following. It can be proved using Theorem 23 in Dembo, Cover, and Thomas~\citep{dembo1991information}.

All the Fourier coefficients together are called the Fourier spectrum of $f$. We can define the following distance between two functions based on their spectrums. 

As there may be many candidates $p$ for evaluating interpretation error and spectrum distance. In the following, we will investigate the most natural choice for $p$. As the Fourier expansion uniquely determined a Boolean function, we would like the distance between their expansion to have the same tendency as the interpretation error, thus the spectrum can encode enough information to reflect the accuracy of the interpretation. We shall first discuss the most ``strict" case, i.e. the $L_0$ norm.

We shall introduce some notations for better understanding. Denote the support of an Boolean function $f$ as $\operatorname{supp}(f) \triangleq \left\{x \in \{-1, 1 \}^n \mid f(x) \neq 0\right\}$. The support of Fourier coefficients (Fourier spectrum), is defined by $\operatorname{supp}(\hat{f}) \triangleq \{S \subseteq[n] \mid \hat{f}(S) \neq 0\}$. The uncertainty principle for Boolean functions is stated as follows.

\begin{proposition}[Uncertainty Principle for Boolean functions~\citep{gur2012testing,o2014analysis}]
\label{prop:uncertainty-principle-boolean-function}
For every nonzero function $f: \{-1, 1\}^n \rightarrow \mathbb{R}$,
$$
|\operatorname{supp}(f)| \cdot|\operatorname{supp}(\hat{f})| \geq 2^n .
$$
\end{proposition}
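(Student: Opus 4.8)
The plan is to derive the multiplicative (support) form of the uncertainty principle directly from the entropy form already available as Theorem~\ref{thm:uncertainty}, using only the elementary fact that the Shannon entropy of a probability distribution supported on a finite set is at most the base-$2$ logarithm of the size of that set. First I would normalize: since $f$ is nonzero, $g \triangleq f/\|f\|$ is well defined and $\sum_{x} g(x)^2 = 1$, so $x \mapsto g(x)^2$ is a probability distribution on $\{-1,1\}^n$ whose support is exactly $\operatorname{supp}(f)$. By the definition of $H[\cdot]$ in Theorem~\ref{thm:uncertainty}, the quantity $H[f/\|f\|]$ \emph{is} the Shannon entropy of this distribution, so the maximum-entropy bound (equivalently, Jensen's inequality applied to $-\log_2$) gives
\[
H\!\left[\frac{f}{\|f\|}\right] \le \log_2 |\operatorname{supp}(f)|.
\]

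The identical argument applies on the Fourier side: by definition of $\|\hat f\|$ the map $S \mapsto (\hat f_S/\|\hat f\|)^2$ is a probability distribution over subsets $S\subseteq[n]$ whose support is $\operatorname{supp}(\hat f)$, so $H[\hat f/\|\hat f\|] \le \log_2|\operatorname{supp}(\hat f)|$. Adding the two bounds and invoking the entropic uncertainty principle of Theorem~\ref{thm:uncertainty},
\[
n \le H\!\left[\frac{f}{\|f\|}\right] + H\!\left[\frac{\hat f}{\|\hat f\|}\right] \le \log_2\!\big(|\operatorname{supp}(f)|\cdot|\operatorname{supp}(\hat f)|\big),
\]
and exponentiating base $2$ yields $2^n \le |\operatorname{supp}(f)|\cdot|\operatorname{supp}(\hat f)|$, which is the claim.

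The only genuinely nonroutine ingredient is the max-entropy bound; everything else is bookkeeping. The one subtlety I would flag is the normalization convention for $\hat f$ (the main text uses $\hat f_S = 2^{-n}\sum_x f(x)\chi_S(x)$, whereas the uncertainty-principle literature often uses a unitary convention), but this never bites: rescaling the spectrum by any nonzero constant changes neither $\operatorname{supp}(\hat f)$ nor the normalized distribution $\hat f/\|\hat f\|$, so the support statement is insensitive to the choice and no constants need to be tracked. I therefore expect the main obstacle to be purely expository rather than mathematical.

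As a fully self-contained fallback that bypasses Theorem~\ref{thm:uncertainty} entirely, I could instead run a Cauchy--Schwarz argument: writing $A=\operatorname{supp}(f)$ and $B=\operatorname{supp}(\hat f)$, each coefficient obeys $|\hat f_S| \le 2^{-n}\sum_{x\in A}|f(x)|$, and Cauchy--Schwarz bounds the right-hand side by a multiple of $\sqrt{|A|}$ times the $L^2$ energy of $f$; feeding the resulting bound on $\|\hat f\|_\infty^2$ into Parseval through $\sum_{S\in B}\hat f_S^2 \le |B|\,\|\hat f\|_\infty^2$ cancels the (nonzero) energy of $f$ and leaves $2^n \le |A|\cdot|B|$. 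I would present the entropy reduction as the primary proof, since Theorem~\ref{thm:uncertainty} is stated immediately beforehand, and record this Cauchy--Schwarz route as the obstacle-free alternative.
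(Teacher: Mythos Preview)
Your proposal is correct. The paper does not actually give a self-contained proof of this proposition: it simply cites the result from \cite{gur2012testing,o2014analysis}, having stated the entropic version (Theorem~\ref{thm:uncertainty}) immediately beforehand. Your primary route---bounding each entropy by $\log_2$ of the corresponding support size and then invoking Theorem~\ref{thm:uncertainty}---is exactly the standard reduction the paper's layout is inviting, and your Cauchy--Schwarz alternative is the textbook direct proof (see, e.g., O'Donnell), so either would be appropriate here.
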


\section{Low-degree Algorithm}
\label{sec:low-degree}
The low-degree algorithm is based on the concentration inequality, and it estimates the coefficient of each axis individually. 

\begin{algorithm}
\caption{Low-degree}\label{alg:low-degree}
\begin{algorithmic}
\STATE 1. Given uniformly randomly sampled $x_1, \cdots, x_T$, evaluate them on $f$: $\{f(x_1), ...., f(x_T )\}$. 
\STATE 2. For any $\chi_S\in C$, let $\hat g_S=\frac{\sum_{i=1}^T f(x_i) \chi_S(x_i)}{T}$.
\STATE 3. Output the polynomial
$g(x) = \sum _{S, \chi_S\in C} \hat g_{S} \chi_ {S} (x)
$.
\end{algorithmic}
\end{algorithm}

\begin{theorem}
[\cite{linial1993constant}]
\label{thm:low-degree}
Given any $\epsilon, \delta>0$, assuming that function $f$ is bounded by $B$, 
when $T\geq \frac{2B^2}{\epsilon^2} \log \frac{2|C|}{\delta} $, we have
\[
\Pr\left [\forall \chi_S\in C, s.t., |\hat g_S-\hat f_S|\leq \epsilon
\right]\geq 1-\delta.
\]
\end{theorem}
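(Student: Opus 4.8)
The plan is to recognize each estimated coefficient $\hat g_S$ as an empirical mean of bounded i.i.d.\ random variables, and then combine a Hoeffding-type concentration inequality with a union bound over the $|C|$ bases. Fix a single $\chi_S\in C$. Since the samples $x_1,\dots,x_T$ are drawn independently and uniformly from $\{-1,1\}^n$, the terms $Z_i \triangleq f(x_i)\chi_S(x_i)$ are i.i.d., and by the definition of the Fourier coefficient each has mean $\be[Z_i] = \be_x[f(x)\chi_S(x)] = \langle f,\chi_S\rangle = \hat f_S$. Thus $\hat g_S = \frac{1}{T}\sum_{i=1}^T Z_i$ is an unbiased estimator of $\hat f_S$. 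Crucially, because $\chi_S(x)\in\{-1,1\}$ and $|f(x)|\le B$ (interpreting ``$f$ bounded by $B$'' as $\|f\|_\infty\le B$ pointwise), each $Z_i$ lies in the interval $[-B,B]$, so the $Z_i$ are bounded with range $2B$.

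First I would apply Hoeffding's inequality to this fixed $S$. With $T$ independent variables each confined to an interval of width $2B$, Hoeffding gives
$$
\Pr\big[\,|\hat g_S - \hat f_S| > \epsilon\,\big] \;\le\; 2\exp\!\left(-\frac{2T^2\epsilon^2}{T(2B)^2}\right) = 2\exp\!\left(-\frac{T\epsilon^2}{2B^2}\right).
$$
Next I would remove the dependence on $S$ by a union bound over all $\chi_S\in C$: since there are $|C|$ such bases,
$$
\Pr\big[\,\exists\,\chi_S\in C:\ |\hat g_S - \hat f_S| > \epsilon\,\big] \;\le\; 2|C|\exp\!\left(-\frac{T\epsilon^2}{2B^2}\right).
$$

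Finally I would enforce that this failure probability is at most $\delta$. Setting $2|C|\exp(-T\epsilon^2/(2B^2)) \le \delta$ and solving for $T$ yields exactly $T \ge \frac{2B^2}{\epsilon^2}\log\frac{2|C|}{\delta}$, which is the claimed sample-complexity threshold (with $\log$ read as the natural logarithm), and taking the complement gives the stated high-probability guarantee. I do not expect any genuine obstacle here, as this is the standard estimate-each-coefficient-separately analysis; the only points requiring care are the boundedness bookkeeping (the range entering Hoeffding is $2B$, which fixes the constant $2$ in $\frac{2B^2}{\epsilon^2}$) and tracking that the two-sided tail together with the union bound produces the factor $2|C|$ inside the logarithm, so that the constants match the threshold exactly.
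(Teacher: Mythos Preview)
Your proposal is correct and essentially identical to the paper's own proof: both apply Hoeffding's inequality to the bounded i.i.d.\ variables $f(x_i)\chi_S(x_i)\in[-B,B]$ to get the single-$S$ bound $2\exp(-T\epsilon^2/(2B^2))$, then union-bound over the $|C|$ bases and solve for $T$. Your write-up is in fact more explicit about the constants and the union-bound step than the paper's terse version.
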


Theorem~\ref{thm:low-degree} was proved using the Hoeffding bound, and we included the proof here for completeness.

\begin{proof}
Since we are given $T$ samples to estimate $\hat{f}(S)$ for every $S$, 
we can directly apply the Hoeffding bound (notice that the function is bounded by $B$):
\[
\Pr\left(|\alpha_S- \hat{f}(S)|\geq \epsilon\right)\cdot |C| \leq 
2 \exp\left(
-\frac{2T\epsilon^2}{4B^2}
\right)=
2 \exp\left(
-\frac{T\epsilon^2}{2B^2}
\right).
\]

Notice that $T\geq \frac{2B^2}{\epsilon^2} \log \frac{2|C|}{\delta}$, we know the right hand size is bounded by $\frac{\delta}{|C|}$, so Theorem~\ref{thm:low-degree} is proved. 
\end{proof}

\textbf{Remarks}. 
The theoretical guarantee of Harmonica assumes the target function $f$ is approximately sparse in the Fourier space, which means most of the energy of the function (Fourier coefficients) is concentrated in the bases in $C$. This is not a strong assumption, because if $f$ is not approximately sparse, it means $f$ has energy in many different bases, or more specifically, the bases with higher orders. In other words, $f$ has a large variance and is difficult to interpret. In this case, no existing algorithms will be able to give consistent and meaningful interpretations. 

Likewise, although the Low-degree algorithm does not assume sparsity for $f$, it cannot learn all possible functions accurately as well. There are $2^n$ different bases, and if we want to learn the coefficients for all of them, the cumulative error for $g$ is at the order of $\Omega(2^n \epsilon)$, which is exponentially large. This is not surprising due to the no free lunch theorem in the generalization theory, as we do not expect to be able to learn ``any functions'' without exponentially many samples.

\section{Discussion on the Existing Algorithms}
\label{sec:discussion}

In this section, we compare our approaches with the existing techniques from different perspectives of interpretation error (efficiency), truthfulness, and consistency.

\paragraph{LIME~\citep{ribeiro2016should}.} Given an input $x$, Lime samples the neighborhood points based on a sampling distribution $\Pi_x$, and optimizes the following program:
\[
\min_{g\in \mathcal{G}} L(f,g, \Pi_x)+\Omega(g).
\]
where $L$ is the loss function describing the distance between $f$ and $g$ on the sampled data points,  $\mathcal{G}$ is the set of readable functions (e.g. the set of linear functions), $\Omega(\cdot)$ is a function that characterizes the complexity of $g$. In other words, LIME tries to minimize the fitting error and simultaneously minimizes the complexity of $g$ (which is usually the sparsity of the linear function). 
By minimizing $L$, LIME also works towards minimizing the interpretation error, but their approach is purely heuristic, without any theoretical guarantees. Although their readable function set can easily generalize to the set with higher order terms, the sampling distribution $\Pi_x$ is not uniform, so it is difficult to incorporate the orthonormal basis into their framework. In other words, the model they compute is not truthful.

\paragraph{Attribution methods.} As we discussed in the introduction, attribution methods mainly focus on individual inputs, instead of the neighboring points. 
Therefore, it is difficult for the attribution methods to achieve low inconsistency, especially for first-order methods like SHAP~\citep{lundberg2017unified} and IG~\citep{sundararajan2017axiomatic}. 

Consider SHAP~\citep{lundberg2017unified} as a motivating example, illustrated in Figure~\ref{fig:movie-review-shap} for the task of sentiment analysis of movie reviews. In this example, the interpretations of two slightly different sentences are inconsistent. This inconsistency arises not only because the weights of each word differ significantly, but also because after removing the word ``very" with a weight of $15.0\%$, the network's output only drops by $6.6\%$. In other words, \textbf{the interpretation does not explain the network's behavior even in a small neighborhood of the input}.

For higher-order attribution methods, consistency can potentially be improved due to their enhanced representation power. 
The classical Shapley interaction index has the problem of not precisely fitting the underlying function, as observed by~\citep{sundararajan2020shapley}, who proposed 
Shapley Taylor interaction index~\citep{sundararajan2020shapley} with better empirical performance. 
Shapley Taylor interaction index satisfies the generalized efficiency axiom, which says for all $f \in \{-1, 1\}^n \rightarrow \R$,
\begin{equation*}
    \sum_{S \subseteq [n],|S| \leq k} \mathcal{I}_S^k(f)=f([n])-f(\emptyset).
\end{equation*}
We should remark that both the Shapley interaction index and Shapley Taylor interaction index were not originally designed for consistent interpretations, so they did not specify how to generalize the interpretation for the neighboring points. To this end, we make a global extension to the Shapley value based interpretation, that is, using Shapley interaction indices or Shapley Taylor interaction indices as the coefficients of corresponding terms of the polynomial surrogate function.
\begin{equation*}
    g\left(x_{1}, x_{2}, \cdots, x_{n}\right) = f(\emptyset) + \sum_{x_i \in S, S \subseteq [n]} \mathcal{I}(f, S).
\end{equation*}
However, these higher-order Shapley value based methods all focus on the original Shapley value framework, so their interpretations are not truthful, i.e., not getting the exact coefficient of $f$ even on the ``simple bases''. Moreover, as we will show in our experiments,  higher-order methods still incur high interpretation errors compared with our methods. 

When applying Shapley value techniques for visual search, \cite{hamilton2021axiomatic} proposed an interesting and novel sampling $+$ Lasso regression algorithm for efficiently computing higher order Shapley Taylor index in their experiments. However, their methods are based on a sampling probability distribution generated from permutation numbers, which is far from the uniform distribution. Additionally, their algorithm is based on the Shapley Taylor index, so their method is not truthful as well. 

\paragraph{Discussion on universal consistency.}
When discussing consistency, there are two distinct settings: ``global consistency'' and ``universal consistency.'' This paper mainly focuses on global consistency (Definition~\ref{def:consistent}), where the interpretation pertains only to input features and not to others. This scenario falls under the category of ``removal-based explanations,'' and most existing interpretation methods belong to this category (26 of them were discussed in ~\cite{covert2021explaining}). In contrast, universal consistency implies that the interpretation may depend on features different from the input features. An example of interpreting the sentence ``I am happy'' could be, ``This sentence does not include [very], so this person is not extremely happy.'' Universal consistency is more challenging than global consistency, and we hypothesize that more powerful machinery, such as foundation models, is needed.

\paragraph{Emergence of self-explainability in foundation models\label{paragraph:self-explain}.} In the groundbreaking work Sparks of AGI~\citep{bubeck2023sparks}, the authors conducted comprehensive experiments on GPT-4's self-explainability. Chain-of-thought reasoning~\citep{wei2022chain, wang2022self, yao2023tree, zhang2023cumulative} can be considered an interpretation provided by the model itself, although current LLMs are not always to be consistent or truthful. In the long run, however, this approach may represent the ultimate path to explainable AI. In the computer vision area, DINO~\citep{caron2021emerging, oquab2023dinov2} emerges the ability of self-explanation provided by the attention map inside the model, which can be directly utilized as heatmap visualization or even applied to segmentation tasks.

\begin{proposition}[Omniscient LLM is the way]
\label{prop:llm-self-explain}
An (infinitely) large (multi-modal) language model pre-trained using (infinitely) large corpus, with zero pre-training loss and generalization loss, which has been perfectly aligned to be truthful, is interpretable, efficient, and consistent.
\end{proposition}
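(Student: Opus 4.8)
The plan is to show that the idealized model $f$ escapes Theorem~\ref{thm:trinity} by being \emph{self-interpretable}, i.e.\ $f\in\cg$, which invalidates the structural hypothesis $\cg\subsetneq\cf$ on which the impossibility rests. Concretely, I would exhibit the interpretation algorithm that returns the model itself, $\ca(f,x):=f$, and verify each of the three properties in turn. The whole argument hinges on promoting $f$ from $\cf$ into $\cg$: once this is done, the other two properties are essentially free.

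First I would establish interpretability. The proof of Theorem~\ref{thm:trinity} derives its contradiction solely by selecting some $f\in\cf\setminus\cg$; the moment the model lies in $\cg$ this selection is unavailable and the obstruction disappears. The self-explainability of the idealized LLM---produced via chain-of-thought and certified truthful by perfect alignment---is exactly the assertion that $f$ admits a faithful, human-readable account of its own computation, so $f\in\cg$. Hence it is legitimate to set $g=f\in\cg$ as the interpreting model. Efficiency is then immediate: for every $x\in\cx$ we have $g(x)=f(x)$, which is Definition~\ref{def:efficiency}. Consistency follows because $\ca(f,x)=f$ is a single fixed function independent of $x$, matching Definition~\ref{def:consistent}; zero generalization loss guarantees that this fixed function agrees with the target everywhere rather than merely on a training sample, so truthful alignment upgrades local agreement to global agreement.

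The hard part is not the verification but the justification of the first step: formalizing ``self-explainability'' and ``perfectly aligned to be truthful'' so that they entail $f\in\cg$ rather than merely $f\in\cf$. One must argue that zero masking and generalization loss, together with truthful alignment, force the model's self-generated explanation to be simultaneously faithful ($g=f$) and readable ($g\in\cg$); this is the conceptual crux, since the proposition is in essence the observation that the unique way to honor all three axioms at once is to collapse the gap $\cf\setminus\cg$, which an omniscient, self-explaining model does by construction. I would close by stressing that this is fully consistent with Theorem~\ref{thm:trinity} rather than a counterexample: the theorem only guarantees a hard instance $f$ when $\cg$ is \emph{strictly} contained in $\cf$, a premise the omniscient LLM deliberately negates.
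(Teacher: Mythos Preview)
Your proposal is correct and follows essentially the same route as the paper: both argue that self-explainability places the model in $\cg$ (equivalently $f=g$), thereby sidestepping the hypothesis $f\in\cf\setminus\cg$ on which Theorem~\ref{thm:trinity} relies, and then read off efficiency and consistency trivially from $\ca(f,x)=f$. Your treatment is in fact more explicit than the paper's brief sketch---you carefully verify each of the three properties against their definitions and flag the conceptual crux (that zero loss plus truthful alignment must force $f\in\cg$), whereas the paper simply asserts these in three short sentences.
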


\begin{proof}
To prove the proposition, let us examine each attribute (interpretable, efficient, and consistent) in the context of an (infinitely) large (multi-modal) language model (LLM):
\vspace{-0.2cm}
\begin{enumerate}[leftmargin=1cm, parsep= -0.05 cm]
    \item \textbf{Interpretable}: By assumption, the LLM has zero pre-training loss and generalization loss, implying perfect knowledge representation. Furthermore, it is perfectly aligned to be truthful, satisfying the criteria for interpretability as per Definition~\ref{def:truthful-gap}.
    
    \item \textbf{Efficient}: The efficiency condition \( g(x) = f(x) \) is trivially satisfied because \( f = g \) by the condition of self-explainability. Thus, the LLM is efficient as per Definition~\ref{def:efficiency}.
    
    \item \textbf{Consistent}: Given that the LLM's responses are generated based on a perfect understanding and alignment, it will generate the same function \( \ca(f, x) \) for every \( x \in \cx \), thereby being consistent as per Definition~\ref{def:consistent}.
\end{enumerate}
\vspace{-0.2cm}
It's crucial to note that self-explainability does not violate the Impossible Trinity (Theorem~\ref{thm:trinity}). When \( f = g \), the LLM serves as its own interpreter, inherently satisfying all three criteria. Alternatively, if \( f \subseteq g \), where \( g \) has an even larger concept class than \( f \), the interpretability criteria are also met, albeit this is not a practical scenario for standard interpretation algorithms.

Therefore, an (infinitely) large LLM, pre-trained with an (infinitely) large corpus and aligned perfectly to be truthful, satisfies all the conditions to be interpretable, efficient, and consistent.
\end{proof}

\section{Test with Low order Polynomial Functions}
\label{sec:polynomial-test}

\subsection{First order polynomial function}
To investigate the performance of different interpretation methods, let us take a closer look at a 1st order polynomial function: 

\begin{equation*}
    f_1\left(x_{1}, x_{2}, x_{3}\right) =\frac{1}{2} x_{1} - \frac{1}{3} x_{2} + \frac{1}{4} x_{3},
\end{equation*}

For this simple function, we can manually compute the outcome of each algorithm, as illustrated in Table~\ref{tab:polynomial-interpretations-1}. 
If the algorithm's output is correct, i.e., equal to the output of $f_1$, we write a check mark. Otherwise, we write down the actual output of the given interpretation algorithm.

As we can see, all methods are consistent and efficient for all cases. In fact, all variants of Shapley indices degraded to 1st order Shapley values. 

\definecolor{mygreen}{RGB}{0,192,0}
\definecolor{myred}{RGB}{255,0,0}

\begin{table}[htbp]
    \centering
    \resizebox{\columnwidth}{!}{%
    \begin{tabular}{lccccccccc}
        \toprule
        \textbf{Algorithms} & $(-1,-1,-1)$ & $(-1,-1,+1)$ & $(-1,+1,-1)$ & $(-1,+1,+1)$ & $(+1,-1,-1)$ & $(+1,-1,+1)$ & $(+1,+1,-1)$ & $(+1,+1,+1)$ \\
        \midrule
        \textbf{Ground Truth}          & $-0.417$ & $+0.083$ & $-1.083$ & $-0.583$ & $+0.583$ & $+1.083$ & $-0.083$ & $+0.417$ \\
        \textbf{LIME}                    & \color{mygreen} \CheckmarkBold & \color{mygreen} \CheckmarkBold & \color{mygreen} \CheckmarkBold & \color{mygreen} \CheckmarkBold & \color{mygreen} \CheckmarkBold & \color{mygreen} \CheckmarkBold & \color{mygreen} \CheckmarkBold & \color{mygreen} \CheckmarkBold \\
        \textbf{SHAP}                    & \color{mygreen} \CheckmarkBold & \color{mygreen} \CheckmarkBold & \color{mygreen} \CheckmarkBold & \color{mygreen} \CheckmarkBold & \color{mygreen} \CheckmarkBold & \color{mygreen} \CheckmarkBold & \color{mygreen} \CheckmarkBold & \color{mygreen} \CheckmarkBold \\
        \textbf{Shapley Interaction Index (1st order)}  & \color{mygreen} \CheckmarkBold & \color{mygreen} \CheckmarkBold & \color{mygreen} \CheckmarkBold & \color{mygreen} \CheckmarkBold & \color{mygreen} \CheckmarkBold & \color{mygreen} \CheckmarkBold & \color{mygreen} \CheckmarkBold & \color{mygreen} \CheckmarkBold \\
        \textbf{Shapley Taylor Index (1st order)}       & \color{mygreen} \CheckmarkBold & \color{mygreen} \CheckmarkBold & \color{mygreen} \CheckmarkBold & \color{mygreen} \CheckmarkBold & \color{mygreen} \CheckmarkBold & \color{mygreen} \CheckmarkBold & \color{mygreen} \CheckmarkBold & \color{mygreen} \CheckmarkBold \\
        \textbf{Faith-Shap (1st order)}                 & \color{mygreen} \CheckmarkBold & \color{mygreen} \CheckmarkBold & \color{mygreen} \CheckmarkBold & \color{mygreen} \CheckmarkBold & \color{mygreen} \CheckmarkBold & \color{mygreen} \CheckmarkBold & \color{mygreen} \CheckmarkBold & \color{mygreen} \CheckmarkBold \\
        \textbf{Low-degree (1st order)}                 & \color{mygreen} \CheckmarkBold & \color{mygreen} \CheckmarkBold & \color{mygreen} \CheckmarkBold & \color{mygreen} \CheckmarkBold & \color{mygreen} \CheckmarkBold & \color{mygreen} \CheckmarkBold & \color{mygreen} \CheckmarkBold & \color{mygreen} \CheckmarkBold \\
        \textbf{Harmonica (1st order)}                  & \color{mygreen} \CheckmarkBold & \color{mygreen} \CheckmarkBold & \color{mygreen} \CheckmarkBold & \color{mygreen} \CheckmarkBold & \color{mygreen} \CheckmarkBold & \color{mygreen} \CheckmarkBold & \color{mygreen} \CheckmarkBold & \color{mygreen} \CheckmarkBold \\
        \bottomrule
    \end{tabular}%
    }
    \caption{Interpretations by LIME, SHAP, Shapley Interaction Index, Shapley Taylor Index, Faith-Shap, Low-degree, and Harmonica on the 1st order polynomial function \( f_1 \).}
    \label{tab:polynomial-interpretations-1}
\end{table}

\subsection{Second order polynomial function}
In addition, let us take a closer look at a 2nd order polynomial function: 

\begin{equation*}
    f_2\left(x_{1}, x_{2}, x_{3}\right) =\frac{1}{2} x_{1} - \frac{1}{3} x_{2} + \frac{1}{4} x_{3} - \frac{1}{5} x_{1} x_{2} + \frac{1}{6} x_{1} x_{3} - \frac{1}{7} x_{2} x_{3},
\end{equation*}

For this simple function, we can manually compute the outcome of each algorithm, as illustrated in Table~\ref{tab:polynomial-interpretations-2}. 
If the algorithm's output is correct, i.e., equal to the output of $f_2$, we write a check mark. Otherwise, we write down the actual output of the given interpretation algorithm.

As we can see, 2nd order interpretation algorithms, including Shapley Taylor index, Faithful Shapley, Low-degree, and Harmonica are consistent and efficient for all cases. Other methods can only fit a few inputs, the 2nd order Shapley interaction index misses all the cases because it is not efficient, and LIME misses all the cases because $f_2$ is not a linear function. 

\begin{table}[htbp]
    \centering
    \resizebox{\columnwidth}{!}{%
    \begin{tabular}{lrrrrrrrr}
        \toprule
        \textbf{Algorithms} & $(-1,-1,-1)$ & $(-1,-1,+1)$ & $(-1,+1,-1)$ & $(-1,+1,+1)$ & $(+1,-1,-1)$ & $(+1,-1,+1)$ & $(+1,+1,-1)$ & $(+1,+1,+1)$ \\
        \midrule
        \textbf{Ground Truth}          & $-0.593$ & $-0.140$ & $-0.574$ & $-0.693$ & $+0.474$ & $+1.593$ & $-0.307$ & $+0.240$ \\
        LIME                           & \color{myred} $-0.417$ & \color{myred} $+0.083$ & \color{myred} $-1.083$ & \color{myred} $-0.583$ & \color{myred} $+0.583$ & \color{myred} $+1.083$ & \color{myred} $-0.083$ & \color{myred} $+0.417$ \\
        SHAP                           & \color{myred} $-0.240$ & \color{myred} $+0.283$ & \color{myred} $-1.250$ & \color{myred} $-0.726$ & \color{myred} $+0.726$ & \color{myred} $+1.250$ & \color{myred} $-0.283$ & \color{mygreen} \CheckmarkBold \\
        Shapley Interaction Index (2nd order) & \color{myred} $-0.329$ & \color{myred} $+0.171$ & \color{myred} $-0.995$ & \color{myred} $-0.781$ & \color{myred} $+0.671$ & \color{myred} $+1.505$ & \color{myred} $-0.395$ & \color{myred} $+0.152$ \\
        \textbf{Shapley Taylor Index (2nd order)} & \color{mygreen} \CheckmarkBold & \color{mygreen} \CheckmarkBold & \color{mygreen} \CheckmarkBold & \color{mygreen} \CheckmarkBold & \color{mygreen} \CheckmarkBold & \color{mygreen} \CheckmarkBold & \color{mygreen} \CheckmarkBold & \color{mygreen} \CheckmarkBold \\
        \textbf{Faith-Shap (2nd order)} & \color{mygreen} \CheckmarkBold & \color{mygreen} \CheckmarkBold & \color{mygreen} \CheckmarkBold & \color{mygreen} \CheckmarkBold & \color{mygreen} \CheckmarkBold & \color{mygreen} \CheckmarkBold & \color{mygreen} \CheckmarkBold & \color{mygreen} \CheckmarkBold \\
        \textbf{Low-degree (2nd order)} & \color{mygreen} \CheckmarkBold & \color{mygreen} \CheckmarkBold & \color{mygreen} \CheckmarkBold & \color{mygreen} \CheckmarkBold & \color{mygreen} \CheckmarkBold & \color{mygreen} \CheckmarkBold & \color{mygreen} \CheckmarkBold & \color{mygreen} \CheckmarkBold \\
        \textbf{Harmonica (2nd order)}  & \color{mygreen} \CheckmarkBold & \color{mygreen} \CheckmarkBold & \color{mygreen} \CheckmarkBold & \color{mygreen} \CheckmarkBold & \color{mygreen} \CheckmarkBold & \color{mygreen} \CheckmarkBold & \color{mygreen} \CheckmarkBold & \color{mygreen} \CheckmarkBold \\
        \bottomrule
    \end{tabular}%
    }
    \caption{Interpretations by LIME, SHAP, Shapley Interaction Index, Shapley Taylor Index, Faith-Shap, Low-degree, and Harmonica on the 2nd order polynomial function \( f_2 \).}
    \label{tab:polynomial-interpretations-2}
\end{table}

\subsection{Third order polynomial function }
Finally, we investigate the following 3rd order polynomial and present the result in Table~\ref{tab:polynomial-interpretations}.
\begin{equation*}
    f_3\left(x_{1}, x_{2}, x_{3}\right):=\frac{1}{2} x_{1} - \frac{1}{3} x_{2} + \frac{1}{4} x_{3} - \frac{1}{5} x_{1} x_{2} + \frac{1}{6} x_{1} x_{3} - \frac{1}{7} x_{2} x_{3} + \frac{1}{8} x_{1} x_{2} x_{3},
\end{equation*}

As we can see, Faith-Shap, Low-degree, and Harmonica are consistent and efficient for all cases. Other methods can only fit a few inputs, the 3rd order Shapley interaction index misses all the cases because it is not efficient, and LIME misses all the cases because $f_3$ is not a linear function. 

\begin{table}[htbp]
    \centering
    \resizebox{\columnwidth}{!}{%
    \begin{tabular}{lrrrrrrrr}
        \toprule
        \textbf{Algorithms} & $(-1,-1,-1)$ & $(-1,-1,+1)$ & $(-1,+1,-1)$ & $(-1,+1,+1)$ & $(+1,-1,-1)$ & $(+1,-1,+1)$ & $(+1,+1,-1)$ & $(+1,+1,+1)$ \\
        \midrule
        \textbf{Ground Truth}         & $-0.718$ & $-0.015$ & $+0.449$ & $-0.818$ & $+0.599$ & $+1.468$ & $-0.432$ & $+0.365$ \\
        LIME                          & \color{myred} $-0.417$ & \color{myred} $+0.083$ & \color{myred} $-1.083$ & \color{myred} $-0.583$ & \color{myred} $+0.583$ & \color{myred} $+1.083$ & \color{myred} $-0.083$ & \color{myred} $+0.417$ \\
        SHAP                          & \color{myred} $-0.365$ & \color{myred} $+0.242$ & \color{myred} $-1.292$ & \color{myred} $-0.685$ & \color{myred} $+0.685$ & \color{myred} $+1.292$ & \color{myred} $-0.242$ & \color{mygreen} \CheckmarkBold \\
        Shapley Interaction Index (3rd order) & \color{myred} $-0.485$ & \color{myred} $+0.224$ & \color{myred} $-0.943$ & \color{myred} $-0.896$ & \color{myred} $+0.724$ & \color{myred} $+1.390$ & \color{myred} $-0.510$ & \color{myred} $+0.496$ \\
        Shapley Taylor Index (3rd order)       & \color{mygreen} \CheckmarkBold & \color{myred} $+0.194$ & \color{myred} $-0.606$ & \color{myred} $-0.970$ & \color{myred} $+0.751$ & \color{myred} $+1.625$ & \color{myred} $-0.642$ & \color{mygreen} \CheckmarkBold \\
        \textbf{Faith-Shap (3rd order)}         & \color{mygreen} \CheckmarkBold & \color{mygreen} \CheckmarkBold & \color{mygreen} \CheckmarkBold & \color{mygreen} \CheckmarkBold & \color{mygreen} \CheckmarkBold & \color{mygreen} \CheckmarkBold & \color{mygreen} \CheckmarkBold & \color{mygreen} \CheckmarkBold \\
        \textbf{Low-degree (3rd order)}         & \color{mygreen} \CheckmarkBold & \color{mygreen} \CheckmarkBold & \color{mygreen} \CheckmarkBold & \color{mygreen} \CheckmarkBold & \color{mygreen} \CheckmarkBold & \color{mygreen} \CheckmarkBold & \color{mygreen} \CheckmarkBold & \color{mygreen} \CheckmarkBold \\
        \textbf{Harmonica (3rd order)}          & \color{mygreen} \CheckmarkBold & \color{mygreen} \CheckmarkBold & \color{mygreen} \CheckmarkBold & \color{mygreen} \CheckmarkBold & \color{mygreen} \CheckmarkBold & \color{mygreen} \CheckmarkBold & \color{mygreen} \CheckmarkBold & \color{mygreen} \CheckmarkBold \\
        \bottomrule
    \end{tabular}%
    }
    \caption{Interpretations by LIME, SHAP, Shapley Interaction Index, Shapley Taylor Index, Faith-Shap, Low-degree, and Harmonica on the 3rd order polynomial function \( f_3 \).}
    \label{tab:polynomial-interpretations}
\end{table}

Faith-SHAP has an intricate representation theorem, assigning coefficients to different terms under the M\"obius transform. Since the basis induced by the M\"obius transform is not orthonormal, it is unclear to us whether Faith-SHAP can theoretically compute the accurate coefficients for higher-order functions. However, the running time of Faith-SHAP has an exponential dependency on $n$, so empirically weighted sampling on subsets of features is needed~\citep{tsai2022faith}. This might be the main reason why our algorithms outperform Faith-SHAP in experiments on real-world datasets.

\section{More on Experiment Details}

\label{sec:experiment-settings}

In this section, we will first describe the detailed experimental settings. For the two language tasks, i.e., SST-2 and IMDb, we use the same CNN neural network. The model has a test accuracy of $85.6\%$. 
For the readability of results, we treat sentences as units instead of words -- masking several words in a sentence may render the entire paragraph difficult to understand and even meaningless while masking a critical sentence has meaningful semantic effects. Therefore, the radius is defined as the maximum number of masked sentences for IMDb. 
The word embedding layer is pre-trained by GloVe~\citep{pennington2014glove} and the maximum word number is set to 25, 000. Besides the embedding layer, the network consists of several convolutional kernels with different kernel sizes (3, 4, and 5). After that, we use several fully connected layers, non-linear layers, and pooling layers to process the features. A Sigmoid function is attached to the tail of the network to ensure that the output can be seen as a probability distribution. Our networks are trained with an Adam~\citep{kingma2014adam} optimizer with a learning rate of 1e-2 for 5 epochs. For the vision task, we choose the official ResNet~\citep{he2016deep} architecture, which is available on PyTorch and we do not discuss the architecture details here. All the experiments are run on a server with 4 Nvidia 2080 Ti GPUs. A running time comparison can be found in Table~\ref{table:running_time}. More information about the run-time Python environment and implementation details can be found in our code.  

\vspace{-5mm}
\begin{table}[ht]
    \centering
    \small
    \caption{Running time of different algorithms on SST2 dataset. The main characteristics affecting the running speed are listed in the table.}
    \label{table:running_time}
    \begin{tabular}{l|l|l}
        \toprule
        \multirow{2}{*}{Algorithm} & \multicolumn{1}{c|}{Main Characteristics} & \multirow{2}{*}{\shortstack{Running \\ Time (s)}} \\
        & \multicolumn{1}{c|}{Affecting Speed} & \\
        \midrule
        Harmonica-2nd & Parallel perturbation, sample=2000 & 225 \\
        Harmonica-3rd & Parallel perturbation, sample=2000 & 616 \\
        \shortstack{LIME \\\,\\\,\\\,\\\,\\\,\\\,\\\,\\\,\\\,\\\,}          & \shortstack{Captum.attr.LimeBase, \\ Seq. perturbation, \\ Sample=2000, \\ 
        Exp. cosine sim.} & \shortstack{1499 \\\,\\\,\\\,\\\,\\\,\\\,\\\,\\\,\\\,\\\,} \\
        Integrated Gradients & Gradient step=500 & 82 \\
        \shortstack{SHAP \\\,\\\,\\\,\\\,\\\,\\\,\\\,}          & \shortstack{Captum.attr.KernelShap, \\ Seq. perturbation, \\ Sample=2000} & \shortstack{1235 \\\,\\\,\\\,\\\,\\\,\\\,\\\,} \\
        Integrated Hessians & Gradient step=50 & 3550 \\
        Shapley Taylor Index & Parallel random perturbations & 651 \\
        Faith-Shap    & Parallel random perturbations & 538 \\
        \bottomrule
    \end{tabular}
\end{table}
\vspace{-2mm}

The accurate representation of the high non-linearity of deep neural networks (DNNs) can be quite complex, often leading to computationally expensive or practically infeasible interpretability methods. To strike a balance between computational efficiency and faithful representation, we employ higher-order polynomial approximations to capture the non-linear aspects of the target DNNs. Specifically, we leverage Fourier basis (or polynomial basis) to represent Boolean functions over feature subsets, thus providing a compact and computationally efficient way to approximate DNNs. This is particularly useful for removal-based explanations, which inherently deal with Boolean functions representing the presence or absence of features.

It is crucial to point out that our choice of using higher-order polynomials is not arbitrary but is guided by the principle of "truthfulness." We introduce a metric called the "truthful gap," mathematically defined as \(\mathbb{T}_{V}(f,g)\) (as per Equation \ref{eq:truthful-gap-general}), to quantify how well our polynomial approximations capture the information in a specific subspace \(V\) of the original function \(f\). This notion of truthfulness serves as a rigorous measure to validate the efficacy of our higher-order polynomial approximations. 

Our empirical results, particularly those outlined in Section \ref{sec:experiments}, indicate that higher-order polynomials, specifically of orders 2 and 3, yield a favorable trade-off. They offer a substantial reduction in interpretation error and a low truthful gap compared to other techniques, substantiating the reasonableness and effectiveness of our approach. 

Notice that our algorithm is a post-hoc model-agnostic interpretation algorithm, we only need to use the original neural network $f$ to infer on given input as an oracle. This means that one can easily change the network architecture without any additional changes.

\section{More Experimental Results}
\label{sec:more-experiments}

In this section, we present ablation studies that examine the impact of various factors such as image segmentation format, neural network architecture, and choice of baselines. Our results demonstrate the robustness and efficacy of our approach in different experimental settings.

\subsection{Image Segmentation Format}
We employ the SLIC algorithm to generate segmentation superpixels on the ImageNet dataset. We initially divide each \(224 \times 224\) image into 16 superpixels for a balance between human readability and computational efficiency. The results, displayed in Table~\ref{tab:mscoco_results} and Figure~\ref{fig:illustation-interpretation-plot-slic}, validate the effectiveness of our approach. Further experiments with 10 superpixels are reported in Table~\ref{tab:slic_10_results}, confirming the robustness of our method.

\subsection{Neural Network Architectures}
We also conduct experiments using diverse neural network architectures such as ResNet-18 and VGG16. Tables~\ref{tab:imagenet_resnet18} and~\ref{tab:imagenet_vgg16} respectively present the interpretation errors on ImageNet for these architectures.

\subsection{Choice of Baselines}
The choice of baseline is critical for removal-based interpretation methods. We investigate the interpretation errors when using the average pixel value and a blurred image as baselines. The results are shown in Tables~\ref{table:average_as_baseline} and~\ref{table:blur_as_baseline}.

\begin{table}[htbp]
    \centering
    \small
    \caption{Interpretation error on MS-COCO. Bold indicates the best performance, and underline indicates the second best.}
    \label{tab:mscoco_results}
    \begin{tabular}{lcccccc}
        \toprule
        & \multicolumn{5}{c}{Radius} \\
        \cmidrule(lr){2-6}
        & 1 & 2 & 4 & 8 & 16 \\
        \midrule
        \textbf{Harmonica-2nd} & \underline{0.0291} & \underline{0.0332} & \underline{0.0363} & \underline{0.0367} & \underline{0.0367} \\
        \textbf{Harmonica-3rd} & \textbf{0.0183} & \textbf{0.0218} & \textbf{0.0241} & \textbf{0.0244} & \textbf{0.0244} \\
        LIME                    & 0.2264 & 0.2451 & 0.2478 & 0.2479 & 0.2479 \\
        SHAP                    & 0.1952 & 0.2036 & 0.2041 & 0.2042 & 0.2042 \\
        IG                      & 0.3473 & 0.3515 & 0.3464 & 0.3462 & 0.3438 \\
        IH                      & 1.3717 & 1.5047 & 1.5141 & 1.5141 & 1.5141 \\
        Shapley Taylor          & 0.3554 & 0.3313 & 0.3269 & 0.3267 & 0.3267 \\
        Faith-Shap              & 0.2428 & 0.2397 & 0.2349 & 0.2347 & 0.2347 \\
        \bottomrule
    \end{tabular}
\end{table}

\begin{table}[htbp]
    \centering
    \small
    \caption{Interpretation error on ImageNet with the number of superpixels set to 10.}
    \label{tab:slic_10_results}
    \begin{tabular}{lcccccc}
        \toprule
        & \multicolumn{5}{c}{Radius} \\
        \cmidrule(lr){2-6}
                      & 1 & 2 & 4 & 8 & 16 \\
        \midrule
\textbf{Harmonica-2nd} & \underline{0.0970} & \underline{0.1054} & \underline{0.1221} & \underline{0.1416} & \underline{0.1420}   \\ 
\textbf{Harmonica-3rd} & \textbf{0.0705} & \textbf{0.0825} & \textbf{0.0997} & \textbf{0.1179} & \textbf{0.1183}   \\ 
LIME & 0.1995 & 0.2121 & 0.2570 & 0.3052 & 0.3051   \\ 
SHAP & 0.1449 & 0.2077 & 0.2743 & 0.3064 & 0.3063   \\ 
IG & 0.2615 & 0.3567 & 0.4423 & 0.4628 & 0.4626  \\ 
IH & 1.6707 & 2.2977 & 2.9443 & 3.1871 & 3.1877   \\ 
Shapley Taylor & 0.0982 & 0.1417 & 0.1761 & 0.1815 & 0.1814   \\  
Faith-Shap & \underline{0.0970} & 0.1250 & 0.1587 & 0.1811 & 0.1810   \\
        \bottomrule
    \end{tabular}
\end{table}

\begin{table}[htbp]
    \centering
    \small
    \caption{Interpretation error on ImageNet using ResNet18 (ACC@1=69.758\%).}
    \label{tab:imagenet_resnet18}
    \begin{tabular}{lcccccc}
        \toprule
        & \multicolumn{5}{c}{Radius} \\
        \cmidrule(lr){2-6}
                      & 1 & 2 & 4 & 8 & 16 \\
        \midrule
        \textbf{Harmonica-2nd} & 0.1468 & 0.1433 & \underline{0.1353} & \underline{0.1357} & \underline{0.1414}   \\ 
    \textbf{Harmonica-3rd} & 0.1247 & \textbf{0.1261} & \textbf{0.1231} & \textbf{0.1272} & \textbf{0.1319}   \\ 
    LIME          & 0.2278 & 0.2209 & 0.2150 & 0.2382 & 0.2578   \\ 
    SHAP          & \textbf{0.1034} & 0.1436 & 0.1963 & 0.2323 & 0.2375   \\ 
    IG            & 0.1937 & 0.2613 & 0.3466 & 0.4028 & 0.4039  \\ 
    IH            & 0.6422 & 0.8686 & 1.1591 & 1.3939 & 1.4254   \\ 
    Shapley Taylor & 0.1292 & 0.1530 & 0.1716 & 0.1744 & 0.1775   \\ 
    Faith-Shap    & \underline{0.1239} & \underline{0.1426} & 0.1532 & 0.1607 & 0.1693\\
        \bottomrule
    \end{tabular}
\end{table}

\begin{table}[htbp]
    \centering
    \small
    \caption{Interpretation error on ImageNet using VGG16 (ACC@1=71.592\%).}
    \label{tab:imagenet_vgg16}
    \begin{tabular}{lcccccc}
        \toprule
        & \multicolumn{5}{c}{Radius} \\
        \cmidrule(lr){2-6}
                      & 1 & 2 & 4 & 8 & 16 \\
        \midrule
        \textbf{Harmonica-2nd} & 0.1468 & 0.1404 & \underline{0.1298} & \underline{0.1303} & \underline{0.1368}  \\ 
    \textbf{Harmonica-3rd} & 0.1290 & \textbf{0.1267} & \textbf{0.1204} & \textbf{0.1225} & \textbf{0.1289}  \\ 
    LIME          & 0.2376 & 0.2280 & 0.2180 & 0.2448 & 0.2664   \\ 
    SHAP          & \textbf{0.1042} & 0.1461 & 0.2012 & 0.2407 & 0.2477   \\ 
    IG            & 0.1708 & 0.2331 & 0.3123 & 0.3659 & 0.3694   \\ 
    IH            & 0.5988 & 0.8132 & 1.1010 & 1.3663 & 1.4149   \\ 
    Shapley Taylor & 0.1155 & 0.1435 & 0.1677 & 0.1720 & 0.1728   \\ 
    Faith-Shap    & \underline{0.1085} & \underline{0.1306} & 0.1456 & 0.1548 & 0.1621   \\ 
        \bottomrule
    \end{tabular}
\end{table}

\begin{figure*}[htb]
\centering
\subfigure[Laptop]{
\includegraphics[width=0.7\textwidth]{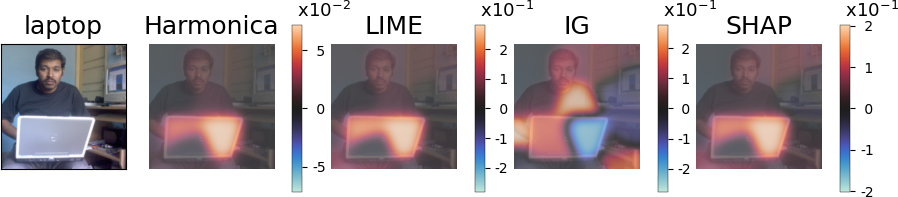}
\label{fig:vision-interpretation-1}
}
\subfigure[Flatworm]{
\includegraphics[width=0.7\textwidth]{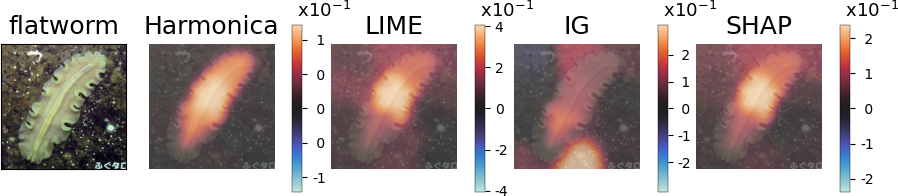}
\label{fig:vision-interpretation-2}
}
\caption{Illustrative examples for applying our method to vision tasks.}
\label{fig:illustation-interpretation-plot-slic}
\vspace{-2mm}
\end{figure*}

\begin{table}[htbp]
    \centering
    \small
    \caption{Interpretation error on ImageNet using the average pixel value as the baseline.}
    \label{table:average_as_baseline}
    \begin{tabular}{lccccc}
        \toprule
        & \multicolumn{5}{c}{Radius} \\
        \cmidrule(lr){2-6}
        Method & 1 & 2 & 4 & 8 & 16 \\
        \midrule
        \textbf{Harmonica-2nd} & 0.1341 & \underline{0.1279} & \underline{0.1201} & \underline{0.1243} & \underline{0.1304} \\
        \textbf{Harmonica-3rd} & \textbf{0.1012} & \textbf{0.1025} & \textbf{0.1022} & \textbf{0.1095} & \textbf{0.1156} \\
        LIME & 0.2205 & 0.2142 & 0.2095 & 0.2327 & 0.2520 \\
        SHAP & \underline{0.1022} & 0.1410 & 0.1934 & 0.2307 & 0.2357 \\
        IG & 0.1824 & 0.2468 & 0.3280 & 0.3822 & 0.3835 \\
        IH & 0.6292 & 0.8497 & 1.1356 & 1.3752 & 1.4115 \\
        Shapley Taylor & 0.1293 & 0.1665 & 0.2157 & 0.2550 & 0.2534 \\
        Faith-Shap & 0.1219 & 0.1530 & 0.1918 & 0.2288 & 0.2302 \\
        \bottomrule
    \end{tabular}
\end{table}

\begin{table}[htbp]
    \centering
    \small
    \caption{Interpretation error on ImageNet using the blurred image as the baseline.}
    \label{table:blur_as_baseline}
    \begin{tabular}{lccccc}
        \toprule
        & \multicolumn{5}{c}{Radius} \\
        \cmidrule(lr){2-6}
        Method & 1 & 2 & 4 & 8 & 16 \\
        \midrule
        \textbf{Harmonica-2nd} & \underline{0.0207} & \underline{0.0209} & \underline{0.0210} & \underline{0.0218} & \underline{0.0225} \\
        \textbf{Harmonica-3rd} & \textbf{0.0200} & \textbf{0.0203} & \textbf{0.0204} & \textbf{0.0213} & \textbf{0.0220} \\
        LIME & 0.0260 & 0.0276 & 0.0310 & 0.0372 & 0.0394 \\
        SHAP & 0.0184 & 0.0258 & 0.0360 & 0.0460 & 0.0477 \\
        IG & 0.0215 & 0.0299 & 0.0415 & 0.0524 & 0.0541 \\
        IH & 0.0268 & 0.0373 & 0.0525 & 0.0684 & 0.0718 \\
        Shapley Taylor & 0.0350 & 0.0370 & 0.0385 & 0.0372 & 0.0360 \\
        Faith-Shap & 0.0343 & 0.0358 & 0.0368 & 0.0358 & 0.0350 \\
        \bottomrule
    \end{tabular}
\end{table}

\begin{figure}[htbp]
\centering
\subfigure[Insertion]{
\includegraphics[width=0.46\textwidth]{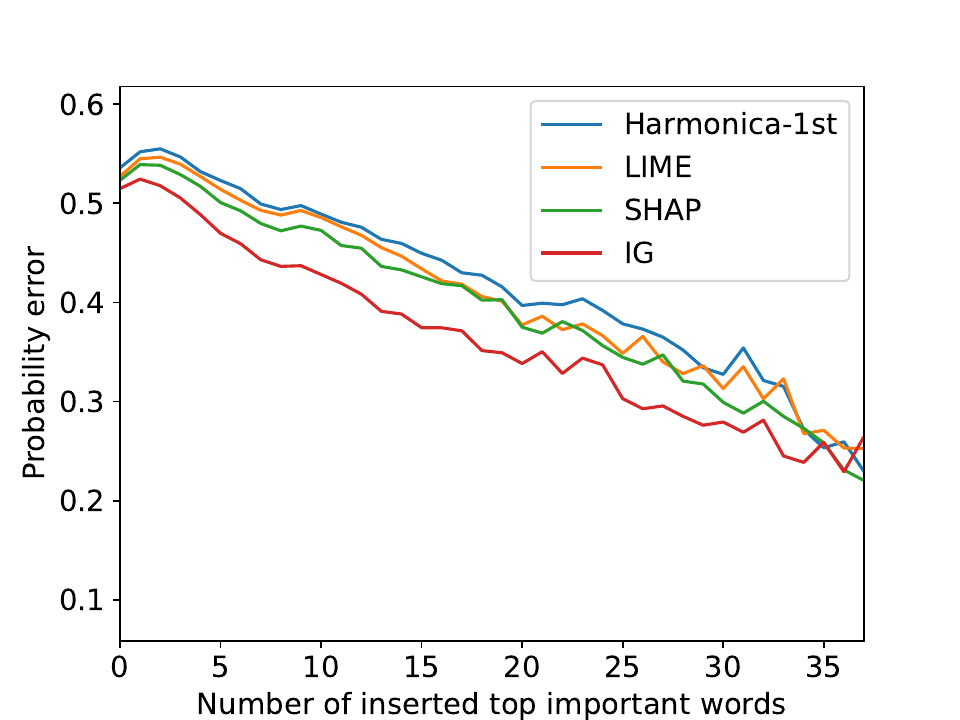}
\label{fig:insert}
}
\subfigure[Deletion]{
\includegraphics[width=0.46\textwidth]{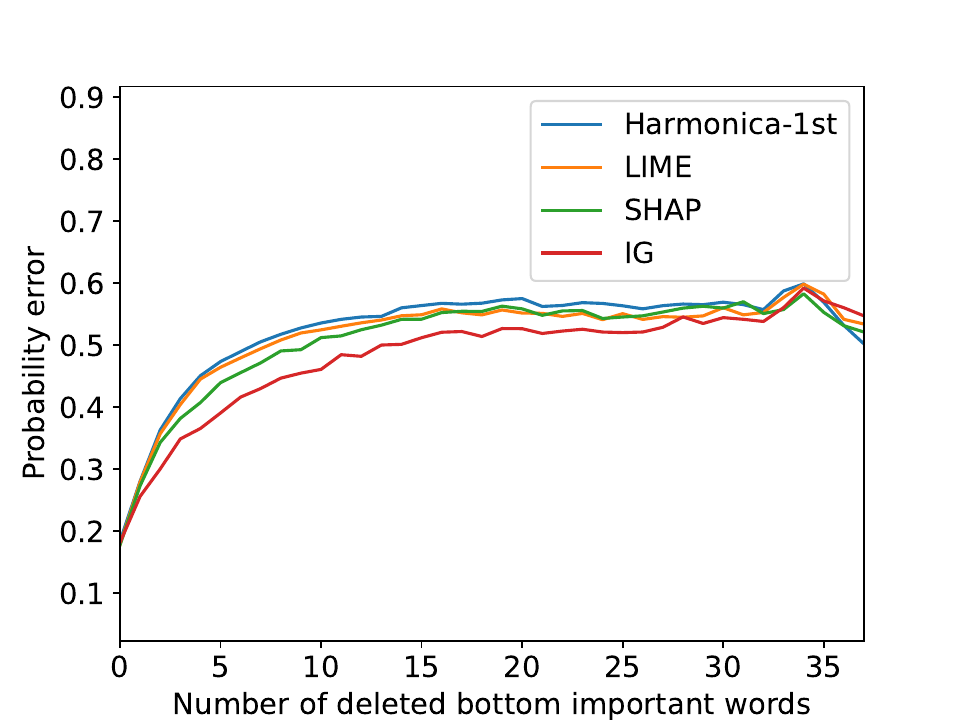}
\label{fig:delete}
}
\caption{Insertion and deletion results on SST2.}
\label{fig:insert-delete}
\end{figure}

\section{Detailed Results on Interpretation Error}
\label{sec:tables-in-appendix}
In this section, we provide interpretation error results under other norms~(Figure~\ref{fig:sst2-interpretation-error-plot}, \ref{fig:imdb-interpretation-error-plot}, and \ref{fig:vision-interpretation-error-plot}) and the corresponding numerical results~(Table~\ref{tab:sst2-interpretation-error}, \ref{tab:imdb-interpretation-error} and \ref{tab:vision-interpretation-error}).

\begin{figure*}[htbp]
     \centering
     \subfigure[$L^2$ norm]{
        \includegraphics[width=0.31\textwidth]{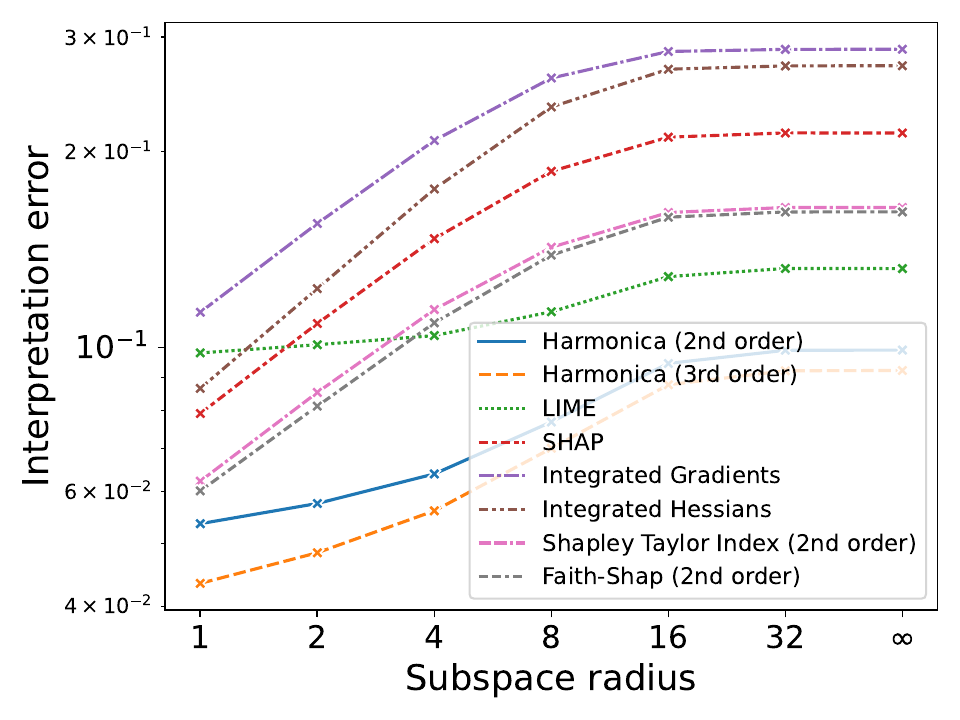}
        \label{fig:sst-interpretation-error-l2}
    }
    \subfigure[$L^1$ norm]{
        \includegraphics[width=0.31\textwidth]{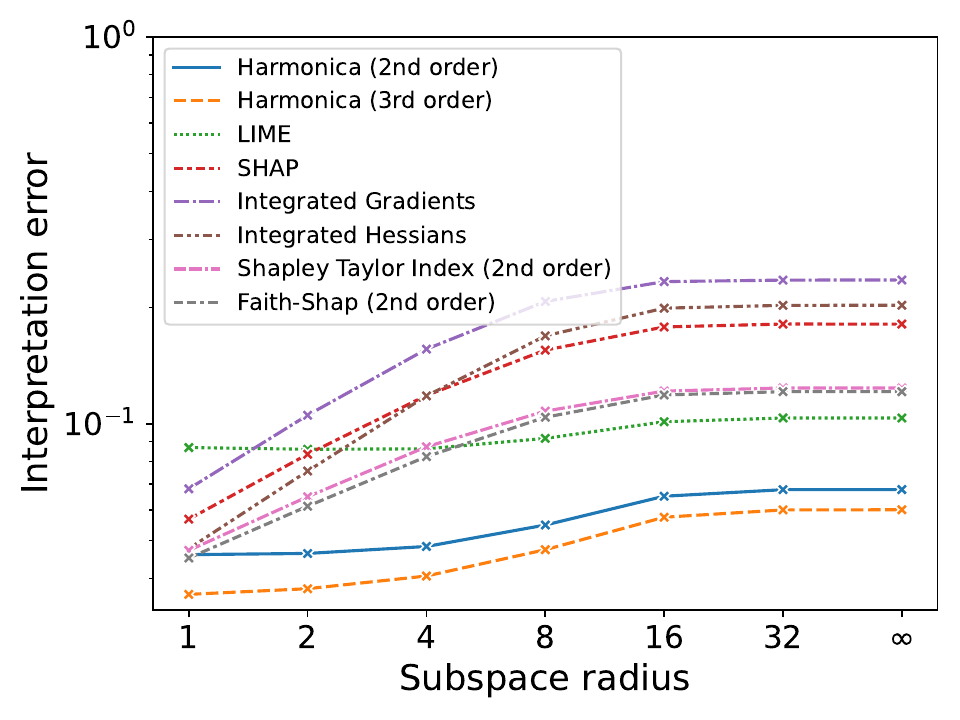}
        \label{fig:sst-interpretation-error-l1}
    }
    \subfigure[$L^0$ norm]{
        \includegraphics[width=0.31\textwidth]{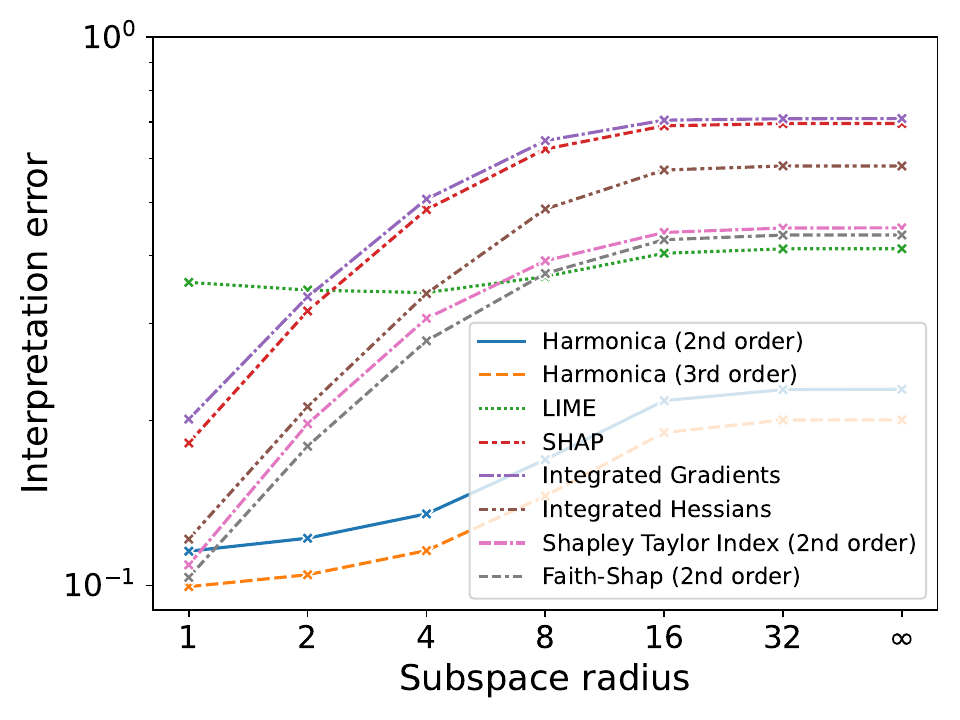}
        \label{fig:sst-interpretation-error-l0}
    }
     \caption{Visualization of interpretation error $\bi_{p,\,\mathcal{N}_x}(f,g)$ evaluated on SST-2 dataset.}
     \label{fig:sst2-interpretation-error-plot}
\end{figure*}

\begin{figure*}[htbp]
     \centering
     \subfigure[$L^2$ norm]{
        \includegraphics[width=0.31\textwidth]{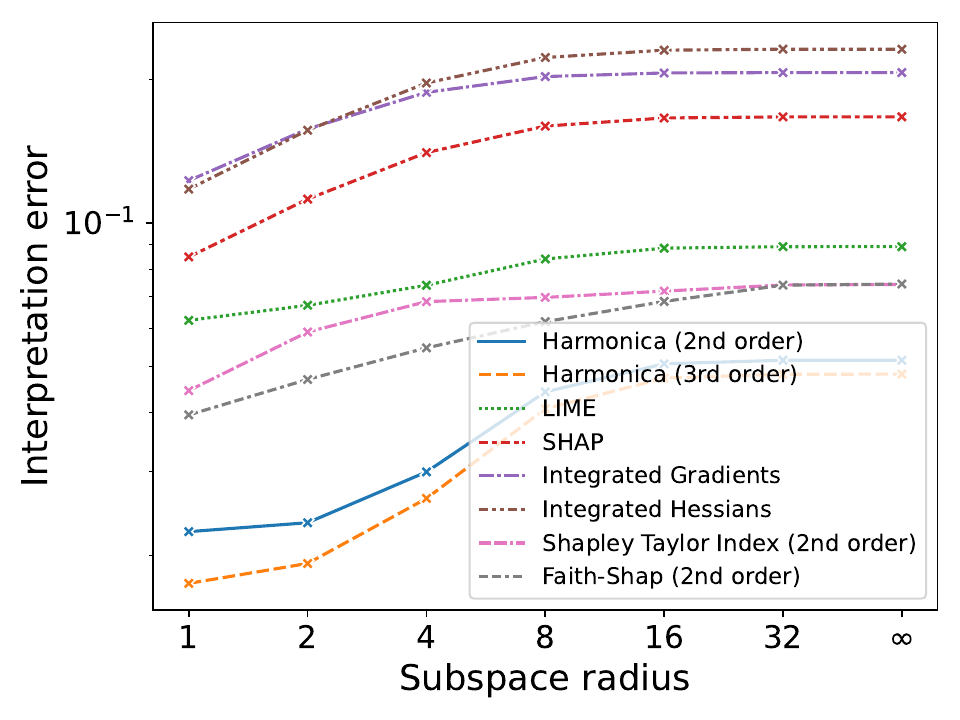}
        \label{fig:imdb-interpretation-error-l2}
    }
    \subfigure[$L^1$ norm]{
        \includegraphics[width=0.31\textwidth]{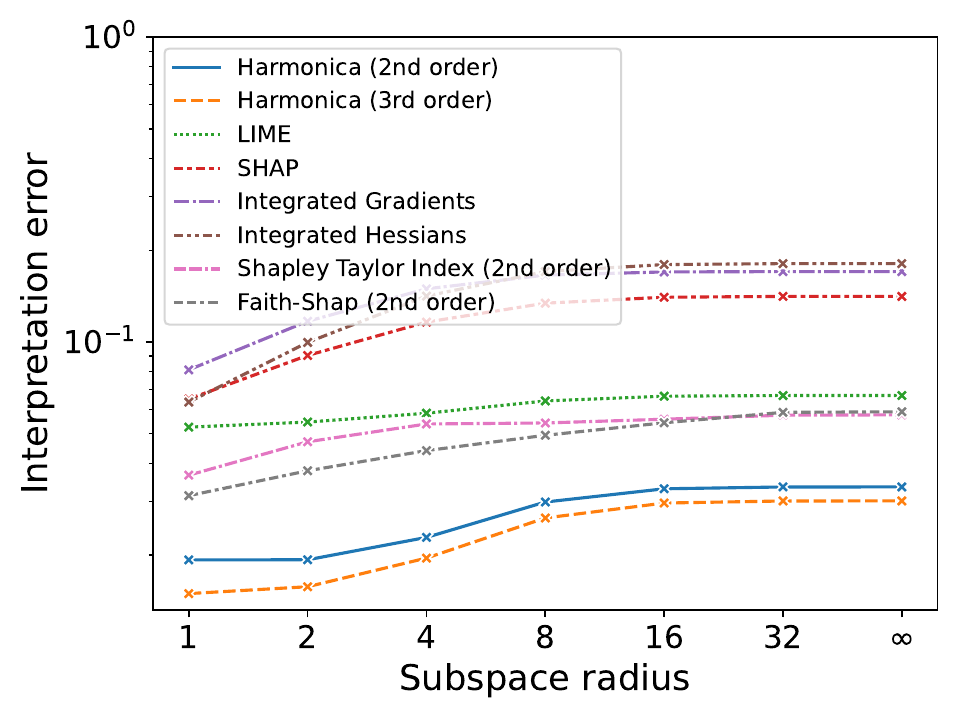}
        \label{fig:imdb-interpretation-error-l1}
    }
    \subfigure[$L^0$ norm]{
        \includegraphics[width=0.31\textwidth]{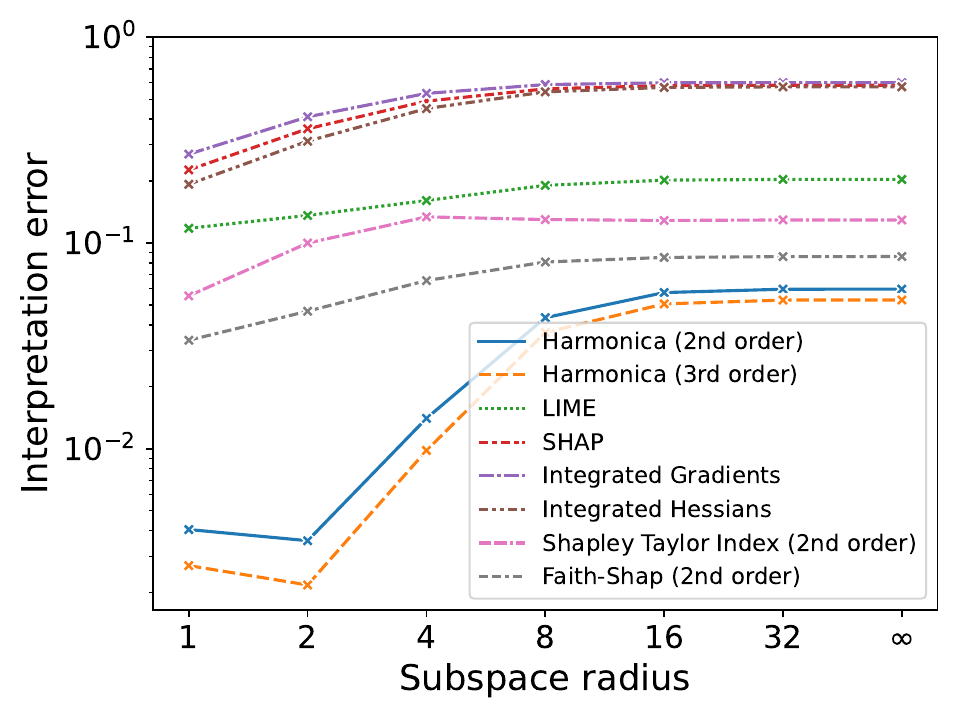}
        \label{fig:imdb-interpretation-error-l0}
    }
     \caption{Visualization of interpretation error $\bi_{p,\,\mathcal{N}_x}(f,g)$ evaluated on IMDb dataset.}
     \label{fig:imdb-interpretation-error-plot}
\end{figure*}

\begin{figure*}[htbp]
     \centering
     \subfigure[$L^2$ norm]{
        \includegraphics[width=0.31\textwidth]{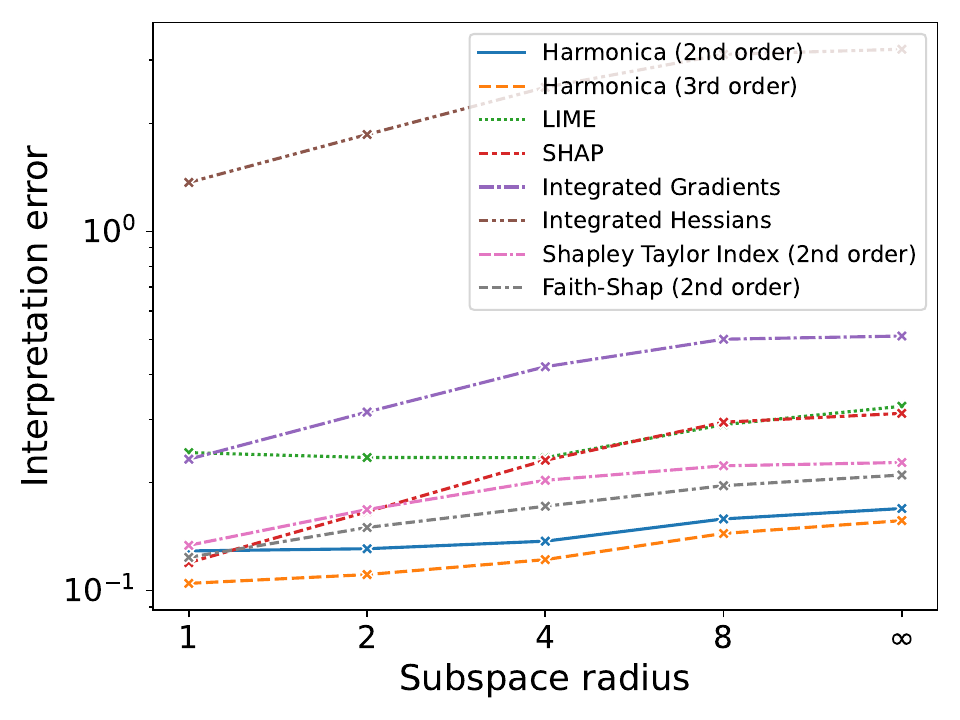}
        \label{fig:vision-interpretation-error-l2}
    }
    \subfigure[$L^1$ norm]{
        \includegraphics[width=0.31\textwidth]{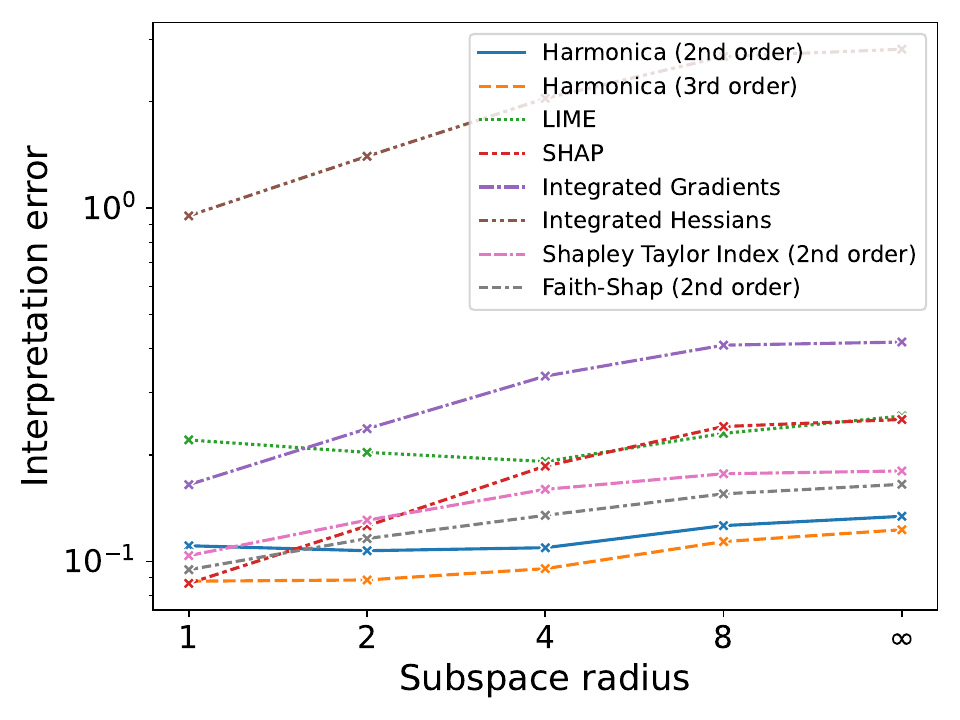}
        \label{fig:vision-interpretation-error-l1}
    }
    \subfigure[$L^0$ norm]{
        \includegraphics[width=0.31\textwidth]{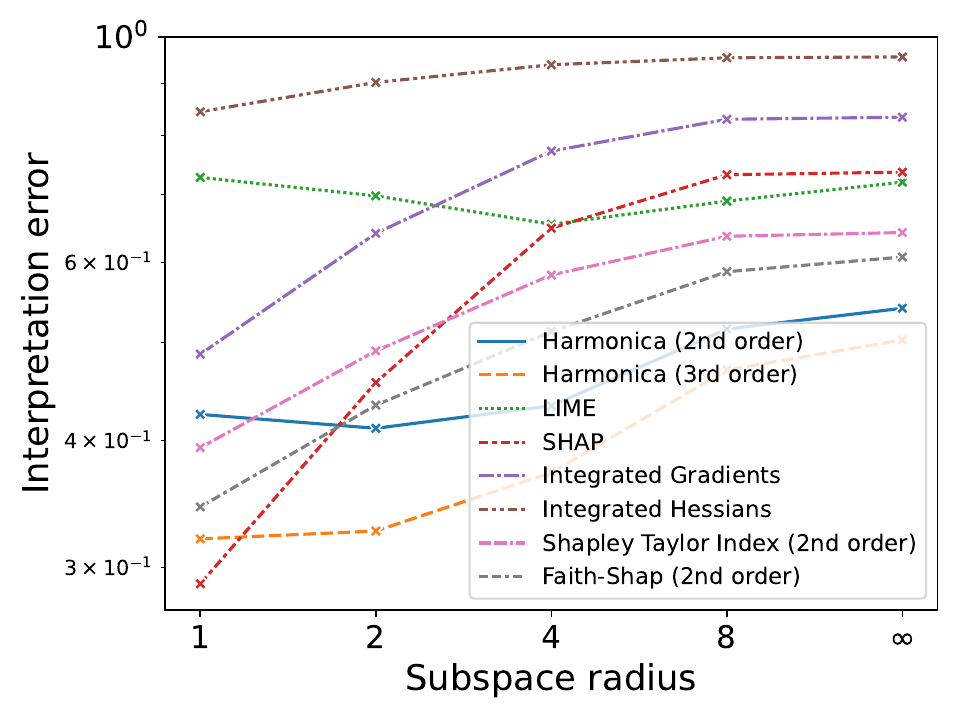}
        \label{fig:vision-interpretation-error-l0}
    }
     \caption{Visualization of interpretation error $\bi_{p,\,\mathcal{N}_x}(f,g)$ evaluated on ImageNet dataset.}
     \label{fig:vision-interpretation-error-plot}
\end{figure*}

\begin{table}[htbp]
\begin{minipage}[t]{0.33\textwidth}
\centering
\resizebox{\columnwidth}{!}{%
\begin{tabular}{cccc}
\toprule
Radius & \(L^2\) norm & \(L^1\) norm & \(L^0\) norm \\
\midrule
1 & 0.0536 & 0.0460 & 0.1154 \\
2 & 0.0576 & 0.0463 & 0.1219 \\
4 & 0.0639 & 0.0483 & 0.1350 \\
8 & 0.0768 & 0.0549 & 0.1696 \\
16 & 0.0945 & 0.0651 & 0.2173 \\
32 & 0.0990 & 0.0677 & 0.2275 \\
\(\infty\) & 0.0991 & 0.0677 & 0.2279 \\
\bottomrule
\end{tabular}%
}
\caption*{\textbf{Harmonica}\(^2\)}
\label{tab:sst2-2order-i}
\end{minipage}
\begin{minipage}[t]{0.33\textwidth}
\centering
\resizebox{0.99\columnwidth}{!}{%
\begin{tabular}{cccc}
\toprule
Radius & $L^2$ norm & $L^1$ norm & $L^0$ norm \\ 
\midrule
1 & \textbf{0.0434} & 0.0363 & 0.0995 \\ 
2 & \textbf{0.0484} & 0.0376 & 0.1046 \\ 
4 & \textbf{0.0561} & 0.0405 & 0.1157 \\ 
8 & \textbf{0.0700} & 0.0474 & 0.1456 \\ 
16 & \textbf{0.0876} & 0.0575 & 0.1900 \\ 
32 & \textbf{0.0921} & 0.0600 & 0.2003 \\ 
$\infty$ & \textbf{0.0922} & 0.0601 & 0.2005 \\ 
\bottomrule
\end{tabular}%
}
\caption*{\textbf{Harmonica$^3$}}
\label{tab:sst2-3order-i}
\end{minipage}
\begin{minipage}[t]{0.33\textwidth}
\centering
\resizebox{0.99\columnwidth}{!}{%
\begin{tabular}{cccc}
\toprule
Radius & $L^2$ norm & $L^1$ norm & $L^0$ norm \\
\midrule
1 & 0.0981 & 0.0870 & 0.3572 \\ 
2 & 0.1010 & 0.0861 & 0.3458 \\  
4 & 0.1043 & 0.0863 & 0.3421 \\ 
8 & 0.1135 & 0.0919 & 0.3659 \\ 
16 & 0.1285 & 0.1015 & 0.4034 \\
32 & 0.1323 & 0.1038 & 0.4111 \\ 
$\infty$ & 0.1322 & 0.1037 & 0.4112 \\
\bottomrule
\end{tabular}%
}
\caption*{LIME}
\label{tab:sst2-lime-i}
\end{minipage}

\begin{minipage}[p]{0.99\textwidth}{\hspace{2ex}}
\end{minipage}

\begin{minipage}[t]{0.33\textwidth}
\centering
\resizebox{0.99\columnwidth}{!}{%
\begin{tabular}{cccc}
\toprule
Radius & $L^2$ norm & $L^1$ norm & $L^0$ norm \\ 
\midrule
1 & 0.0792 & 0.0568 & 0.1819 \\ 
2 & 0.1089 & 0.0836 & 0.3167 \\ 
4 & 0.1470 & 0.1186 & 0.4847 \\
8 & 0.1865 & 0.1554 & 0.6252 \\ 
16 & 0.2106 & 0.1783 & 0.6891 \\ 
32 & 0.2137 & 0.1814 & 0.6958 \\ 
$\infty$ & 0.2137 & 0.1814 & 0.6961 \\
\bottomrule
\end{tabular}%
}
\caption*{SHAP}
\label{tab:sst2-shap-i}
\end{minipage}
\begin{minipage}[t]{0.33\textwidth}
\centering
\resizebox{0.99\columnwidth}{!}{%
\begin{tabular}{cccc}
\toprule
Radius & $L^2$ norm & $L^1$ norm & $L^0$ norm \\ 
\midrule
1 & 0.1133 & 0.0681 & 0.2010 \\ 
2 & 0.1551 & 0.1055 & 0.3361 \\ 
4 & 0.2081 & 0.1563 & 0.5066 \\
8 & 0.2595 & 0.2075 & 0.6476 \\
16 & 0.2852 & 0.2336 & 0.7058 \\ 
32 & 0.2872 & 0.2357 & 0.7099 \\ 
$\infty$ & 0.2874 & 0.2359 & 0.7105 \\
\bottomrule
\end{tabular}%
}
\caption*{Integrated Gradients}
\label{tab:sst2-ig-i}
\end{minipage}
\begin{minipage}[t]{0.33\textwidth}
\centering
\resizebox{0.99\columnwidth}{!}{%
\begin{tabular}{cccc}
\toprule
Radius & $L^2$ norm & $L^1$ norm & $L^0$ norm \\
\midrule
1 & 0.0865 & 0.0477 & 0.1215 \\ 
2 & 0.1232 & 0.0756 & 0.2118 \\ 
4 & 0.1753 & 0.1183 & 0.3406 \\ 
8 & 0.2343 & 0.1691 & 0.4859 \\ 
16 & 0.2678 & 0.1994 & 0.5724 \\ 
32 & 0.2709 & 0.2028 & 0.5823 \\ 
$\infty$ &0.2711 & 0.2029 & 0.5823 \\ 
\bottomrule
\end{tabular}%
}
\caption*{Integrated Hessians}
\label{tab:sst2-ih-i}
\end{minipage}

\begin{minipage}[p]{0.99\textwidth}{\hspace{2ex}}

\centering

\begin{minipage}[p]{0.33\textwidth}
\centering
\resizebox{0.99\columnwidth}{!}{%
\begin{tabular}{cccc}
\toprule
Radius & $L^2$ norm & $L^1$ norm & $L^0$ norm \\ \midrule
1 & 0.0623 & 0.0472 & 0.1090 \\ 
2 & 0.0853 & 0.0649 & 0.1971 \\ 
4 & 0.1144 & 0.0874 & 0.3069 \\ 
8 & 0.1426 & 0.1081 & 0.3909 \\ 
16 & 0.1612 & 0.1217 & 0.4404 \\ 
32 & 0.1642 & 0.1241 & 0.4486 \\ 
$\infty$ & 0.1642 & 0.1240 & 0.4489 \\ \bottomrule
\end{tabular}%
}
\caption*{Shapley Taylor Index$^2$}
\label{tab:sst2-shapleytaylor-i}
\end{minipage}
\begin{minipage}[p]{0.33\textwidth}
\centering
\resizebox{0.99\columnwidth}{!}{%
\begin{tabular}{cccc}
\toprule
Radius & $L^2$ norm & $L^1$ norm & $L^0$ norm \\ \midrule
1 & 0.0602 & 0.0451 & 0.1034 \\
2 & 0.0813 & 0.0614 & 0.1795 \\
4 & 0.1091 & 0.0824 & 0.2794 \\
8 & 0.1387 & 0.1042 & 0.3707 \\
16 & 0.1586 & 0.1190 & 0.4271 \\
32 & 0.1616 & 0.1213 & 0.4356 \\
$\infty$ & 0.1616 & 0.1213 & 0.4357 \\ \bottomrule
\end{tabular}%
}
\caption*{Faith-Shap$^2$}
\label{tab:sst2-faithshap-i}
\end{minipage}

\end{minipage}

\caption{The interpretation error of Harmonica and other baseline algorithms evaluated on the SST-2 dataset for different neighborhoods with a radius ranging from $1$ to $\infty$ under $L^2$, $L^1$ and $L^0$ norms.}
\label{tab:sst2-interpretation-error}
\end{table}

\begin{table}[htbp]
\begin{minipage}[p]{0.33\textwidth}
\centering
\resizebox{0.99\columnwidth}{!}{%
\begin{tabular}{cccc}
\toprule
Radius & $L^2$ norm & $L^1$ norm & $L^0$ norm \\ \midrule
1 & 0.0225 & 0.0193 & 0.0041 \\ 
2 & 0.0234 & 0.0193 & 0.0036 \\ 
4 & 0.0300 & 0.0228 & 0.0141 \\ 
8 & 0.0442 & 0.0298 & 0.0434 \\ 
16 & 0.0506 & 0.0330 & 0.0574 \\ 
32 & 0.0515 & 0.0334 & 0.0596 \\ 
$\infty$ & 0.0515 & 0.0334 & 0.0597 \\ 
\bottomrule
\end{tabular}%
}
\caption*{\textbf{Harmonica$^2$}}
\label{tab:imdb-2order-i}
\end{minipage}
\begin{minipage}[p]{0.33\textwidth}
\centering
\resizebox{0.99\columnwidth}{!}{%
\begin{tabular}{cccc}
\toprule
Radius & $L^2$ norm & $L^1$ norm & $L^0$ norm \\ \midrule
1 & \textbf{0.0175} & 0.0149 & 0.0027 \\ 
2 & \textbf{0.0193} & 0.0157 & 0.0022 \\ 
4 & \textbf{0.0264} & 0.0195 & 0.0098 \\ 
8 & \textbf{0.0407} & 0.0265 & 0.0367 \\ 
16 & \textbf{0.0472} & 0.0296 & 0.0506 \\ 
32 & \textbf{0.0481} & 0.0301 & 0.0528 \\ 
$\infty$ & \textbf{0.0481} & 0.0301 & 0.0529 \\ \bottomrule
\end{tabular}%
}
\caption*{\textbf{Harmonica$^3$}}
\label{tab:imdb-3order-i}
\end{minipage}
\begin{minipage}[p]{0.33\textwidth}
\centering
\resizebox{0.99\columnwidth}{!}{%
\begin{tabular}{cccc}
\toprule
Radius & $L^2$ norm & $L^1$ norm & $L^0$ norm \\ \midrule
1 & 0.0624 & 0.0525 & 0.1175 \\ 
2 & 0.0671 & 0.0546 & 0.1358 \\ 
4 & 0.0740 & 0.0583 & 0.1606 \\ 
8 & 0.084 & 0.0640 & 0.1903 \\ 
16 & 0.0885 & 0.0664 & 0.2017 \\ 
32 & 0.0892 & 0.0667 & 0.2033 \\ 
$\infty$ & 0.0892 & 0.0667 & 0.2033 \\ \bottomrule
\end{tabular}%
}
\caption*{LIME}
\label{tab:imdb-lime-i}
\end{minipage}

\begin{minipage}[p]{0.99\textwidth}{\hspace{2ex}}
\end{minipage}

\begin{minipage}[p]{0.33\textwidth}
\centering
\resizebox{0.99\columnwidth}{!}{%
\begin{tabular}{cccc}
\toprule
Radius & $L^2$ norm & $L^1$ norm & $L^0$ norm \\ \midrule
1 & 0.0849 & 0.0650 & 0.2266 \\ 
2 & 0.1123 & 0.0903 & 0.3589 \\ 
4 & 0.1406 & 0.1162 & 0.4891 \\ 
8 & 0.1598 & 0.1341 & 0.5610 \\ 
16 & 0.1662 & 0.1402 & 0.5817 \\ 
32 & 0.1671 & 0.1411 & 0.5842 \\ 
$\infty$ & 0.1671 & 0.1411 & 0.5843 \\ \bottomrule 
\end{tabular}%
}
\caption*{SHAP}
\label{tab:imdb-shap-i}
\end{minipage}
\begin{minipage}[p]{0.33\textwidth}
\centering
\resizebox{0.99\columnwidth}{!}{%
\begin{tabular}{cccc}
\toprule
Radius & $L^2$ norm & $L^1$ norm & $L^0$ norm \\ \midrule
1 & 0.1228 & 0.0810 & 0.2702 \\ 
2 & 0.1576 & 0.1169 & 0.4102 \\ 
4 & 0.1882 & 0.1494 & 0.5333 \\ 
8 & 0.2031 & 0.1657 & 0.5885 \\ 
16 & 0.2067 & 0.1697 & 0.6004 \\ 
32 & 0.2071 & 0.1702 & 0.6014 \\ 
$\infty$ & 0.2071 & 0.1702 & 0.6015 \\ \bottomrule 
\end{tabular}%
}
\caption*{Integrated Gradients}
\label{tab:imdb-ig-i}
\end{minipage}
\begin{minipage}[p]{0.33\textwidth}
\centering
\resizebox{0.99\columnwidth}{!}{%
\begin{tabular}{cccc}
\toprule
Radius & $L^2$ norm & $L^1$ norm & $L^0$ norm \\ \midrule
1 & 0.1178 & 0.0635 & 0.1929 \\ 
2 & 0.1567 & 0.0997 & 0.3125 \\ 
4 & 0.1969 & 0.1415 & 0.4501 \\ 
8 & 0.2227 & 0.1702 & 0.5423 \\ 
16 & 0.2308 & 0.1795 & 0.5702 \\ 
32 & 0.2319 & 0.1807 & 0.5739 \\ 
$\infty$ & 0.2319 & 0.1807 & 0.5738 \\ \bottomrule 
\end{tabular}%
}
\caption*{Integrated Hessians}
\label{tab:imdb-ih-i}
\end{minipage}

\begin{minipage}[p]{0.99\textwidth}{\hspace{2ex}}
\centering

\begin{minipage}[p]{0.33\textwidth}
\centering
\resizebox{0.99\columnwidth}{!}{%
\begin{tabular}{cccc}
\toprule
Radius & $L^2$ norm & $L^1$ norm & $L^0$ norm \\ \midrule
1 & 0.0444 & 0.0365 & 0.0554 \\ 
2 & 0.0590 & 0.0470 & 0.0999 \\ 
4 & 0.0684 & 0.0538 & 0.1339 \\ 
8 & 0.0698 & 0.0542 & 0.1300 \\ 
16 & 0.0719 & 0.0558 & 0.1287 \\ 
32 & 0.0741 & 0.0575 & 0.1294 \\ 
$\infty$ & 0.0743 & 0.0577 & 0.1294 \\ \bottomrule 
\end{tabular}%
}
\caption*{Shapley Taylor Index$^2$}
\label{tab:imdb-shapleytaylor-i}
\end{minipage}
\begin{minipage}[p]{0.33\textwidth}
\centering
\resizebox{0.99\columnwidth}{!}{%
\begin{tabular}{cccc}
\toprule
Radius & $L^2$ norm & $L^1$ norm & $L^0$ norm \\ \midrule
1 & 0.0395 & 0.0313 & 0.0337 \\ 
2 & 0.0469 & 0.0378 & 0.0466 \\ 
4 & 0.0547 & 0.0440 & 0.0657 \\ 
8 & 0.0620 & 0.0494 & 0.0808 \\ 
16 & 0.0684 & 0.0543 & 0.0851 \\ 
32 & 0.0740 & 0.0588 & 0.0859 \\ 
$\infty$ & 0.0744 & 0.0590 & 0.0859 \\ \bottomrule 
\end{tabular}%
}
\caption*{Faith-Shap$^2$}
\label{tab:imdb-faithshap-i}
\end{minipage}

\end{minipage}

\caption{The interpretation error of Harmonica and other baseline algorithms evaluated on the IMDb dataset for different neighborhoods with a radius ranging from $1$ to $\infty$ under $L^2$, $L^1$ and $L^0$ norms.}
\label{tab:imdb-interpretation-error}
\end{table}

\begin{table}[htbp]
\begin{minipage}[p]{0.33\textwidth}
\centering
\resizebox{0.99\columnwidth}{!}{%
\begin{tabular}{cccc}
\toprule
Radius & $L^2$ norm & $L^1$ norm & $L^0$ norm \\ \midrule
1 & 0.1290 & 0.1108 & 0.4248 \\
2 & 0.1308 & 0.1073 & 0.4116 \\
4 & 0.1373 & 0.1094 & 0.4332 \\
8 & 0.1584 & 0.1264 & 0.5156 \\
$\infty$ & 0.1693 & 0.1342 & 0.5405 \\ \bottomrule 
\end{tabular}%
}
\caption*{\textbf{Harmonica$^2$}}
\label{tab:vision-2order-i}
\end{minipage}
\begin{minipage}[p]{0.33\textwidth}
\centering
\resizebox{0.99\columnwidth}{!}{%
\begin{tabular}{cccc}
\toprule
Radius & $L^2$ norm & $L^1$ norm & $L^0$ norm \\ \midrule
1 & \textbf{0.1048} & 0.0880 & 0.3202 \\
2 & \textbf{0.1108} & 0.0887 & 0.3260 \\
4 & \textbf{0.1220} & 0.0955 & 0.3723 \\
8 & \textbf{0.1443} & 0.1139 & 0.4698 \\
$\infty$ & \textbf{0.1566} & 0.1230 & 0.5030 \\ \bottomrule 
\end{tabular}%
}
\caption*{\textbf{Harmonica$^3$}}
\label{tab:vision-3order-i}
\end{minipage}
\begin{minipage}[p]{0.33\textwidth}
\centering
\resizebox{0.99\columnwidth}{!}{%
\begin{tabular}{cccc}
\toprule
Radius & $L^2$ norm & $L^1$ norm & $L^0$ norm \\ \midrule
1 & 0.2422 & 0.2208 & 0.7274 \\
2 & 0.2347 & 0.2036 & 0.6976 \\
4 & 0.2346 & 0.1918 & 0.6540 \\
8 & 0.2897 & 0.2304 & 0.6893 \\
$\infty$ & 0.3261 & 0.2579 & 0.7196 \\ \bottomrule 
\end{tabular}%
}
\caption*{LIME}
\label{tab:vision-lime-i}
\end{minipage}

\begin{minipage}[p]{0.99\textwidth}{\hspace{2ex}}
\end{minipage}

\begin{minipage}[p]{0.33\textwidth}
\centering
\resizebox{0.99\columnwidth}{!}{%
\begin{tabular}{cccc}
\toprule
Radius & $L^2$ norm & $L^1$ norm & $L^0$ norm \\ \midrule
1 & 0.1197 & 0.0867 & 0.2892 \\
2 & 0.1658 & 0.1261 & 0.4566 \\
4 & 0.2306 & 0.1862 & 0.6483 \\
8 & 0.2943 & 0.2409 & 0.7318 \\
$\infty$ & 0.3115 & 0.2523 & 0.7362 \\ \bottomrule 
\end{tabular}%
}
\caption*{SHAP}
\label{tab:vision-shap-i}
\end{minipage}
\begin{minipage}[p]{0.33\textwidth}
\centering
\resizebox{0.99\columnwidth}{!}{%
\begin{tabular}{cccc}
\toprule
Radius & $L^2$ norm & $L^1$ norm & $L^0$ norm \\ \midrule
1 & 0.2322 & 0.1650 & 0.4871 \\
2 & 0.3141 & 0.2375 & 0.6406 \\
4 & 0.4200 & 0.3346 & 0.7722 \\
8 & 0.5010 & 0.4094 & 0.8300 \\
$\infty$ & 0.5113 & 0.4177 & 0.8340 \\ \bottomrule 
\end{tabular}%
}
\caption*{Integrated Gradients}
\label{tab:vision-ig-i}
\end{minipage}
\begin{minipage}[p]{0.33\textwidth}
\centering
\resizebox{0.99\columnwidth}{!}{%
\begin{tabular}{cccc}
\toprule
Radius & $L^2$ norm & $L^1$ norm & $L^0$ norm \\ \midrule
1 & 1.3681 & 0.9504 & 0.8443 \\
2 & 1.8603 & 1.4006 & 0.9025 \\
4 & 2.5168 & 2.0427 & 0.9394 \\
8 & 3.1095 & 2.6844 & 0.9543 \\
$\infty$ & 3.2139 & 2.8140 & 0.9562 \\ \bottomrule 
\end{tabular}%
}
\caption*{Integrated Hessians}
\label{tab:vision-ih-i}
\end{minipage}

\begin{minipage}[p]{0.99\textwidth}{\hspace{2ex}}
\centering

\begin{minipage}[p]{0.33\textwidth}
\centering
\resizebox{0.99\columnwidth}{!}{%
\begin{tabular}{cccc}
\toprule
Radius & $L^2$ norm & $L^1$ norm & $L^0$ norm \\ \midrule
1 & 0.1337 & 0.1039 & 0.3939 \\
2 & 0.1681 & 0.1309 & 0.4906 \\
4 & 0.2028 & 0.1600 & 0.5830 \\
8 & 0.2226 & 0.1771 & 0.6365 \\
$\infty$ & 0.2274 & 0.1804 & 0.6418 \\ \bottomrule 
\end{tabular}%
}
\caption*{Shapley Taylor Index$^2$}
\label{tab:vision-shapleytaylor-i}
\end{minipage}
\begin{minipage}[p]{0.33\textwidth}
\centering
\resizebox{0.99\columnwidth}{!}{%
\begin{tabular}{cccc}
\toprule
Radius & $L^2$ norm & $L^1$ norm & $L^0$ norm \\ \midrule
1 & 0.1238 & 0.0948 & 0.3443 \\
2 & 0.1499 & 0.1161 & 0.4338 \\
4 & 0.1718 & 0.1351 & 0.5126 \\
8 & 0.1960 & 0.1554 & 0.5872 \\
$\infty$ & 0.2099 & 0.1654 & 0.6071 \\ \bottomrule 
\end{tabular}%
}
\caption*{Faith-Shap$^2$}
\label{tab:vision-faithshap-i}
\end{minipage}

\end{minipage}

\caption{The interpretation error of Harmonica and other baseline algorithms evaluated on the ImageNet dataset for different neighborhoods with a radius ranging from $1$ to $\infty$ under $L^2$, $L^1$ and $L^0$ norms.}
\label{tab:vision-interpretation-error}
\end{table}

\section{Detailed Results on Harmonica-local}
\label{sec:detailed_harmonica_local_results}

\subsection{Additional Experiments}

We further explore Harmonica's performance when limited to a local space instead of the whole space. It is worth noting that $L_r$ is a regularization loss for Harmonica, ensuring the sparseness of each interpretation model $g_i$. Additionally, $L_c$ is a regularization loss penalizing the difference between interpretation models. The value of balance coefficients depends on the application scenario and can be determined by the end user.

Figure~\ref{fig:sst2-interpretation-error-plot-harmonica-local-compare},~\ref{fig:imdb-interpretation-error-plot-harmonica-local-compare} and~\ref{fig:vision-interpretation-error-plot-harmonica-local-compare} compares Harmonica-local and Harmonica algorithms on SST-2, IMDb and ImageNet, respectively. Here we set the radius of $\nh_x$ to be 4. The results reveal that Harmonica-local indeed performs better within the local region but fails to cover a far region.

\begin{figure*}[htb]
     \centering
     \subfigure[$L^2$ norm]{
        \includegraphics[width=0.31\textwidth]{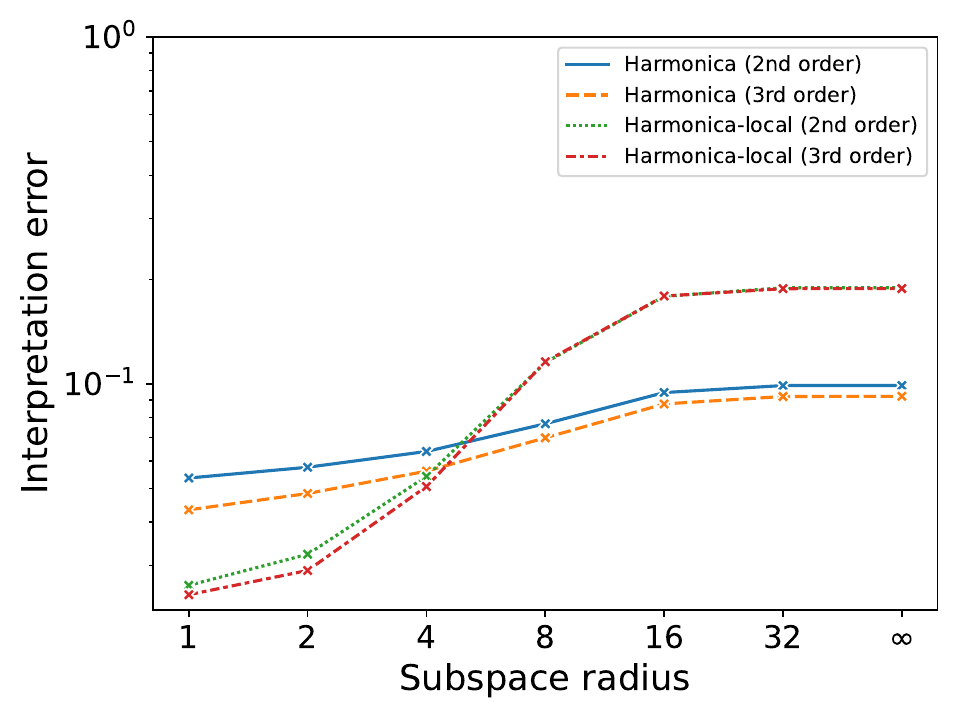}
        \label{fig:sst-interpretation-error-l2-harmonica}
    }
    \subfigure[$L^1$ norm]{
        \includegraphics[width=0.31\textwidth]{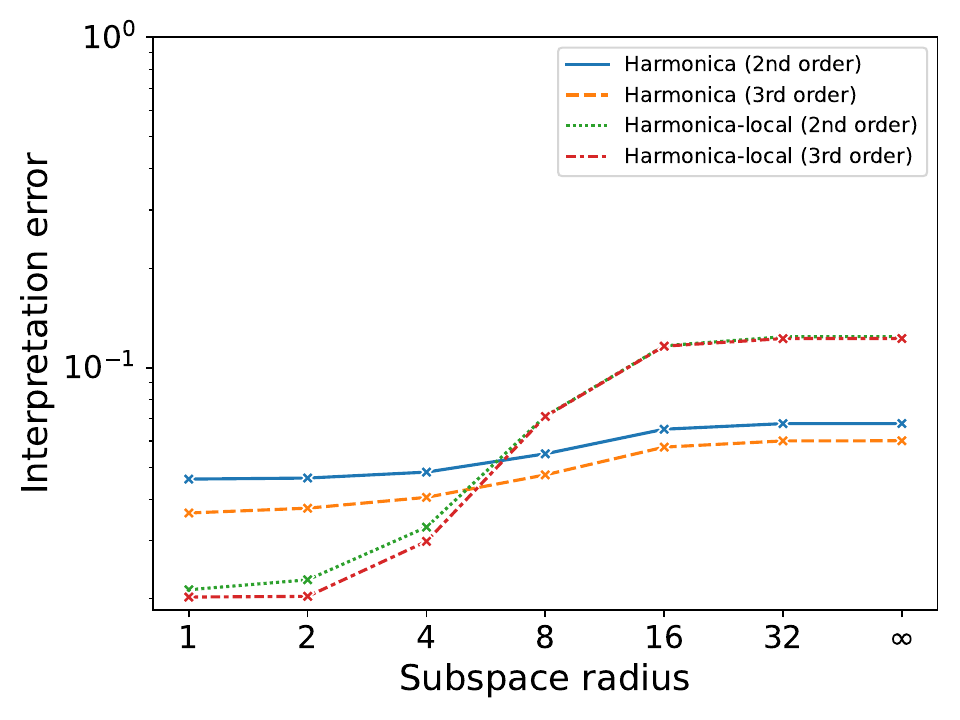}
        \label{fig:sst-interpretation-error-l1-harmonica}
    }
    \subfigure[$L^0$ norm]{
        \includegraphics[width=0.31\textwidth]{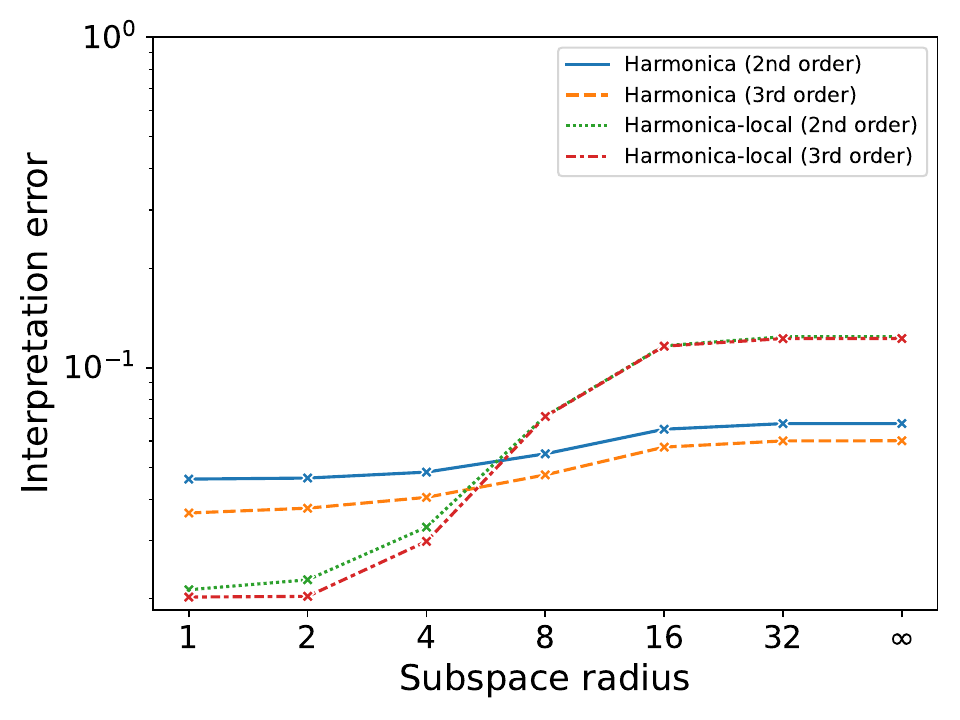}
        \label{fig:sst-interpretation-error-l0-harmonica}
    }
     \caption{Visualization of interpretation error $\bi_{p,\,\mathcal{N}_x}(f,g)$ evaluated on SST-2 dataset of algorithm Harmonica and Harmonica-local.}
     \label{fig:sst2-interpretation-error-plot-harmonica-local-compare}
\end{figure*}

\begin{figure*}[htb]
     \centering
     \subfigure[$L^2$ norm]{
        \includegraphics[width=0.31\textwidth]{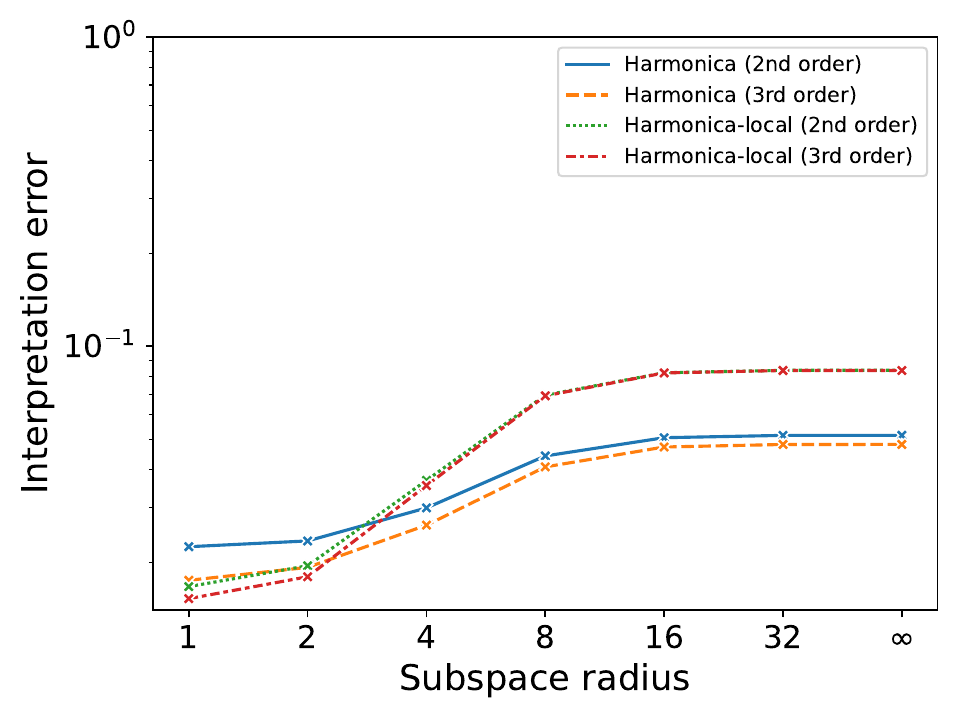}
        \label{fig:imdb-interpretation-error-l2-harmonica}
    }
    \subfigure[$L^1$ norm]{
        \includegraphics[width=0.31\textwidth]{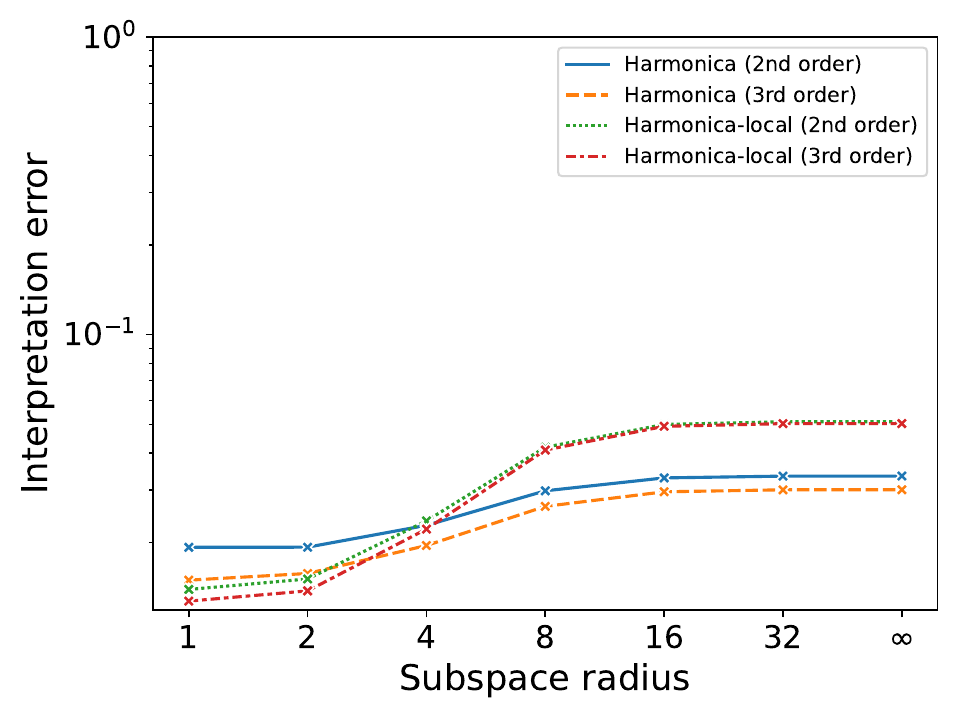}
        \label{fig:imdb-interpretation-error-l1-harmonica}
    }
    \subfigure[$L^0$ norm]{
        \includegraphics[width=0.31\textwidth]{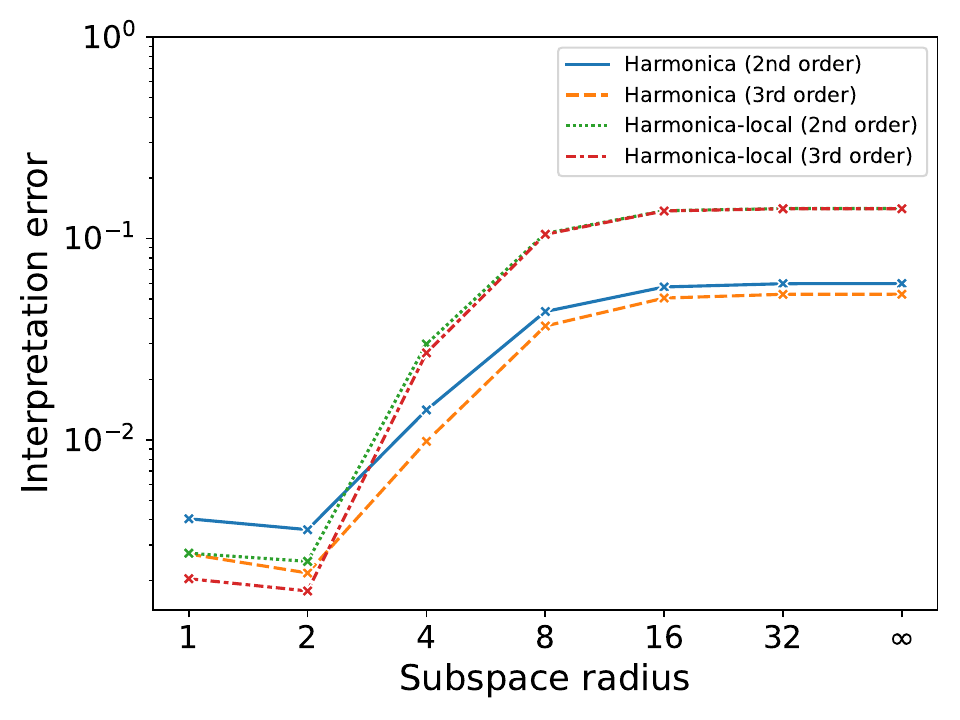}
        \label{fig:imdb-interpretation-error-l0-harmonica}
    }
     \caption{Visualization of interpretation error $\bi_{p,\,\mathcal{N}_x}(f,g)$ evaluated on IMDb dataset of algorithm Harmonica and Harmonica-local.}
     \label{fig:imdb-interpretation-error-plot-harmonica-local-compare}
\end{figure*}

\begin{figure*}[htb]
     \centering
     \subfigure[$L^2$ norm]{
        \includegraphics[width=0.31\textwidth]{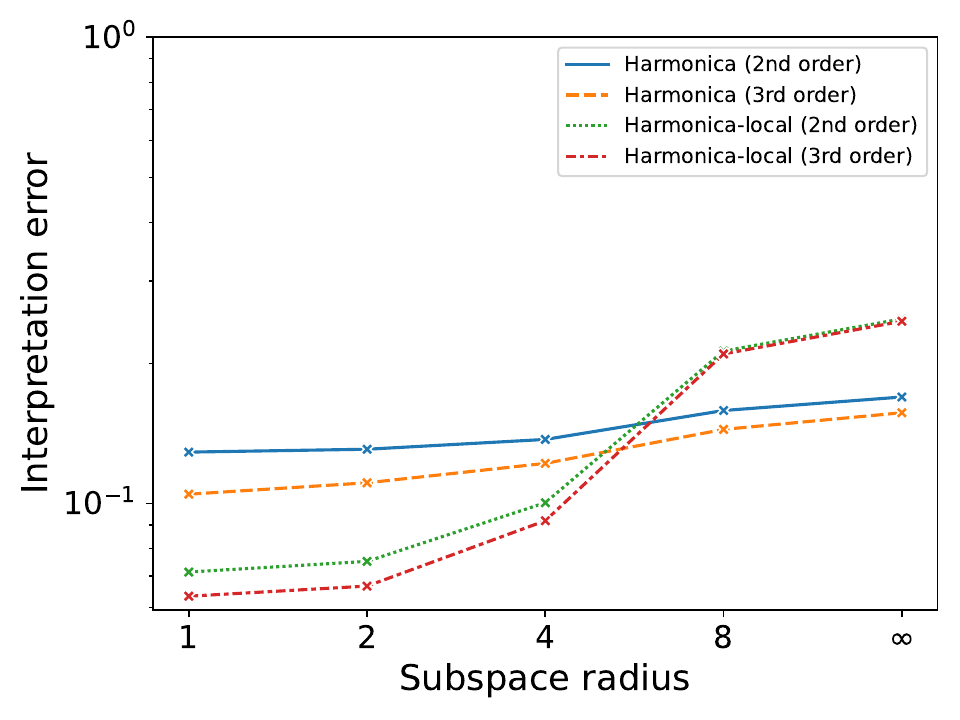}
        \label{fig:vision-interpretation-error-l2-harmonica}
    }
    \subfigure[$L^1$ norm]{
        \includegraphics[width=0.31\textwidth]{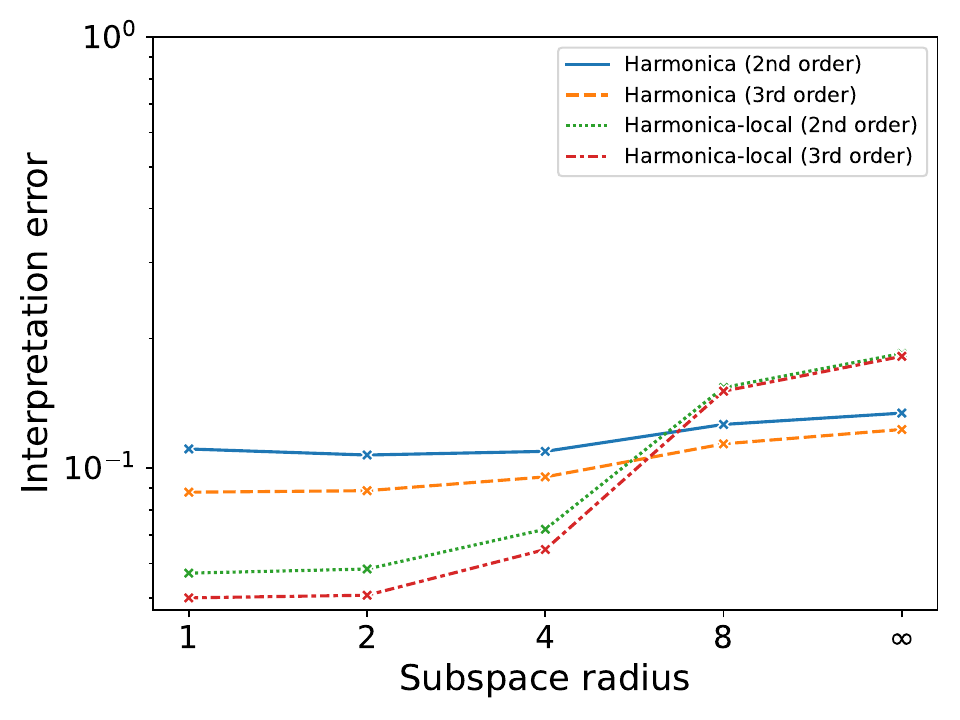}
        \label{fig:vision-interpretation-error-l1-harmonica}
    }
    \subfigure[$L^0$ norm]{
        \includegraphics[width=0.31\textwidth]{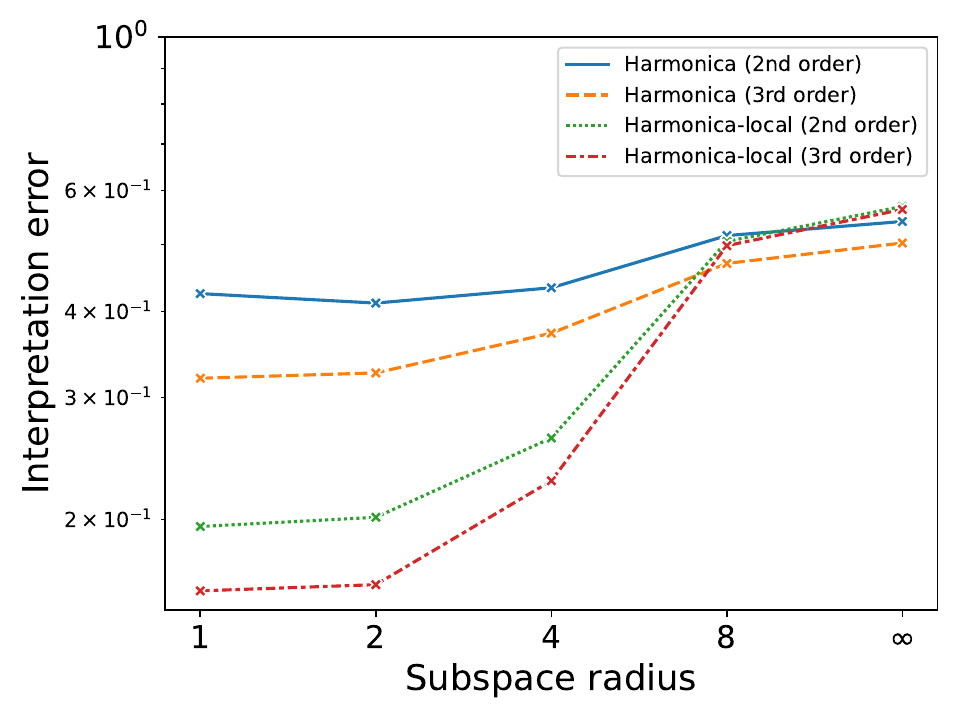}
        \label{fig:vision-interpretation-error-l0-harmonica}
    }
     \caption{Visualization of interpretation error $\bi_{p,\,\mathcal{N}_x}(f,g)$ evaluated on ImageNet of algorithm Harmonica and Harmonica-local.}
     \label{fig:vision-interpretation-error-plot-harmonica-local-compare}
\end{figure*}

\section{Detailed Results on Truthful Gap}
\label{sec:detailed-truthful-gap}

\paragraph{Estimating truthful gap}

For convenience, we define the set of bases $C^d$ up to degree $d$ as $C^d = {\chi_S | S \subseteq [n], |S| \leq d }$. We evaluate the truthful gap on the set of bases $C^3$, $C^2$, and $C^1$. 
By definition in Eqn.~(\ref{eq:truthful-gap}), we have
$
\mathbb{T}_{V_C}(f, g)=\sum_{\chi_S \in C}\left\langle f-g, \chi_S\right\rangle^2=\left(\underset{x \sim\{-1,1\}^n}{\mathbb{E}}\left[(f(x)-g(x)) \sum_{\chi_S \in C} \chi_S(x)\right]\right)^2
$.

Then we perform a sampling-based estimation of the truthful gap. Worth mentioning that since the size of set $C^d$ satisfies $|C^d| = \sum_{i = 0}^d\binom{n}{i}$ and the max number of words, sentences, or superpixels $n^* \leq 50$, $\sum_{\chi_S \in C} \chi_S(x)$, as the summation function of orthonormal basis, is easy to compute on every sample $x \in \{-1,1\}^n$ (for $n^*$ very large, we will perform another sampling step on this function). 

Table~\ref{tab:main-truthful-gap} shows the $C^3$ truthful gap evaluated on different datasets. We can see that Harmonica outperforms all the other baseline algorithms. 
Figure~\ref{fig:sst2-truthful-gap-plot} shows the truthful gap evaluated on the SST-2 dataset. We can see that Harmonica achieves the best performance for $C^2$ and $C^3$. For the simple linear case $C^1$, Harmonica is almost as good as LIME. 
Figure~\ref{fig:imdb-truthful-gap-plot} shows the truthful gap evaluated on the IMDb dataset under the same settings as the SST-2 dataset. 
Figure~\ref{fig:vision-truthful-gap-plot} shows the truthful gap evaluated on the ImageNet dataset. We can see that Harmonica outperforms all the other baselines consistently.

\begin{figure*}[htb]
     \centering
     \subfigure[bases $C^3$]{
        \includegraphics[width=0.31\textwidth]{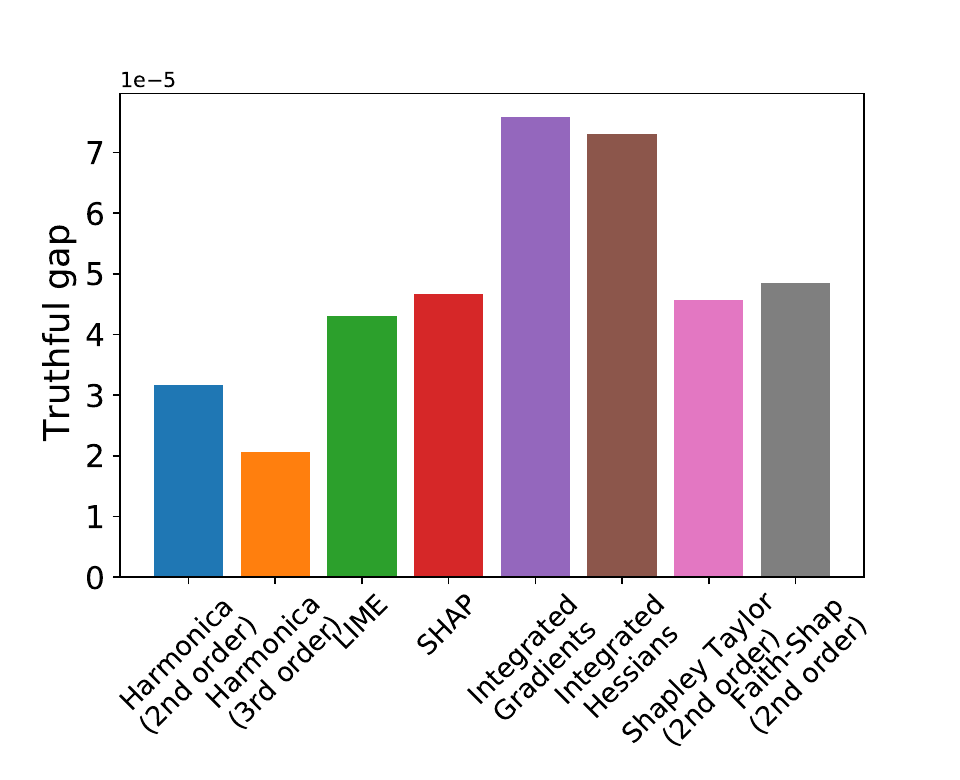}
        \label{fig:sst-truthful-gap-l2}
    }
    \subfigure[bases $C^2$]{
        \includegraphics[width=0.31\textwidth]{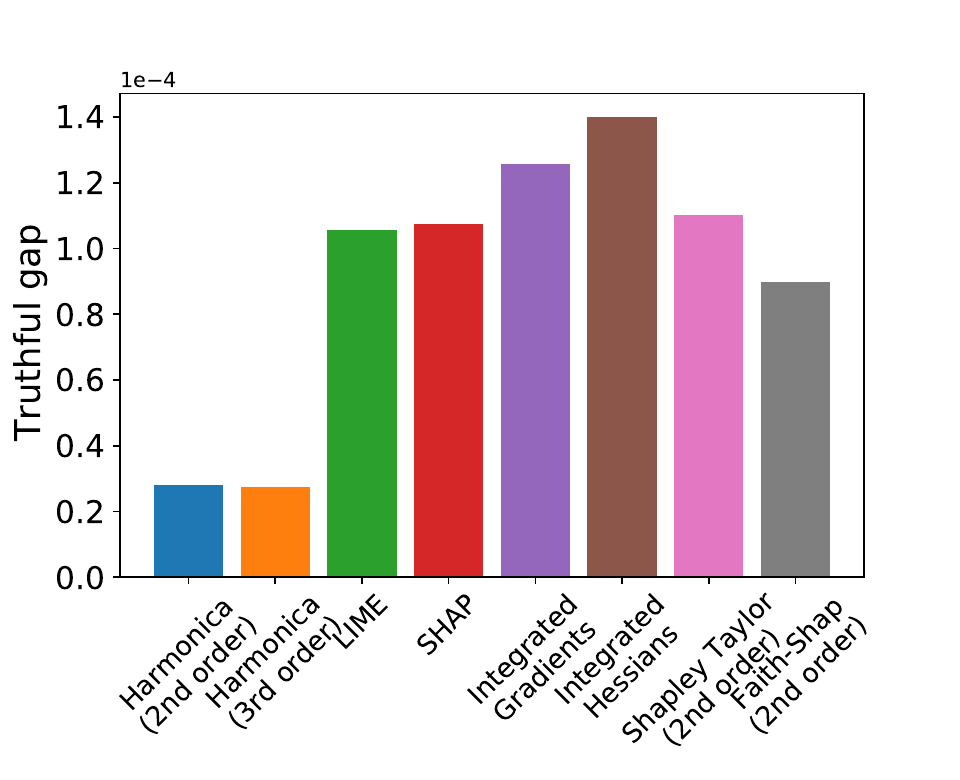}
        \label{fig:sst-truthful-gap-l1}
    }
    \subfigure[bases $C^1$]{
        \includegraphics[width=0.31\textwidth]{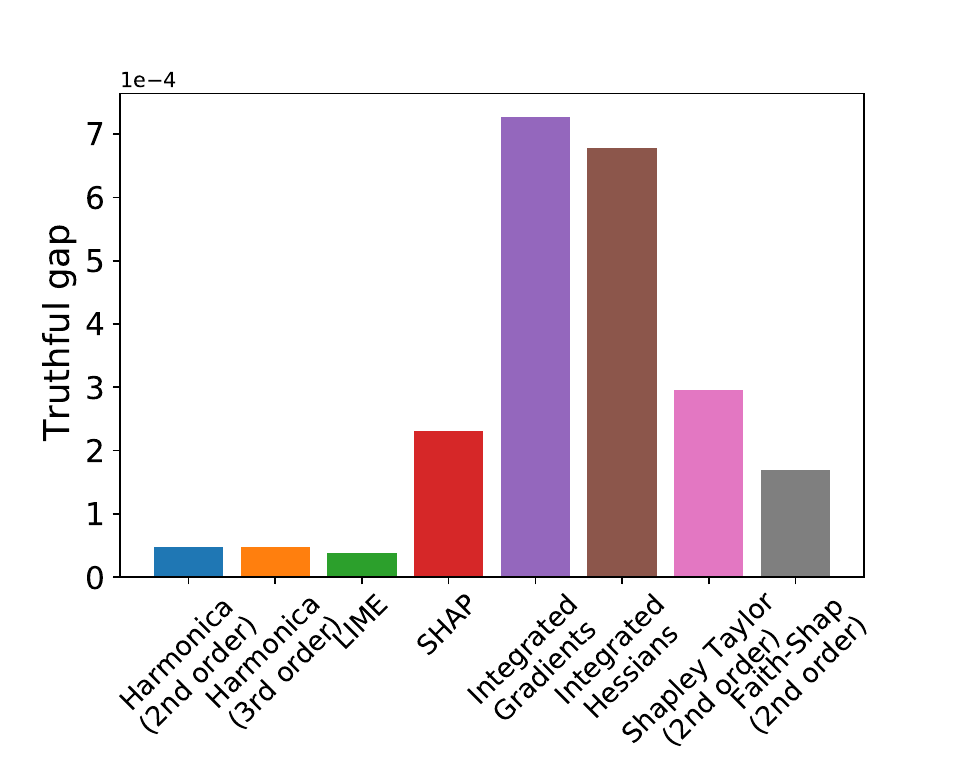}
        \label{fig:sst-truthful-gap-l0}
    }
     \caption{Visualization of truthful gap $\mathbb{T}_C(f,g)$ evaluated on SST-2 dataset.}
     \label{fig:sst2-truthful-gap-plot}
\end{figure*}

\begin{figure*}[htbp]
     \centering
     \subfigure[bases $C^3$]{
        \includegraphics[width=0.31\textwidth]{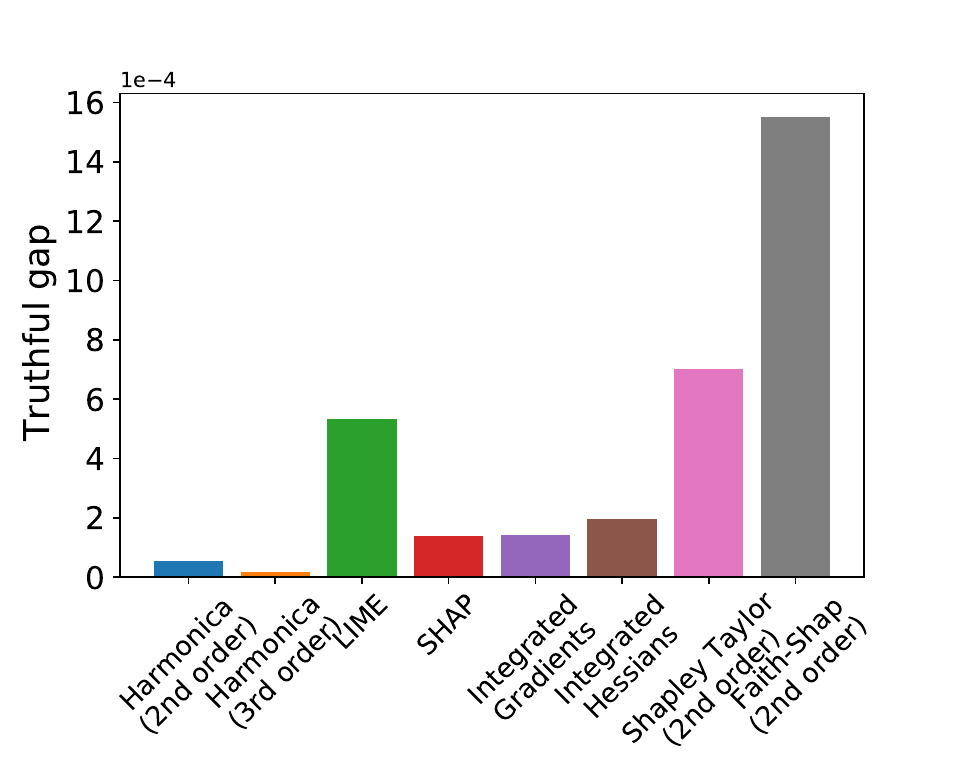}
        \label{fig:imdb-truthful-gap-c3}
    }
    \subfigure[bases $C^2$]{
        \includegraphics[width=0.31\textwidth]{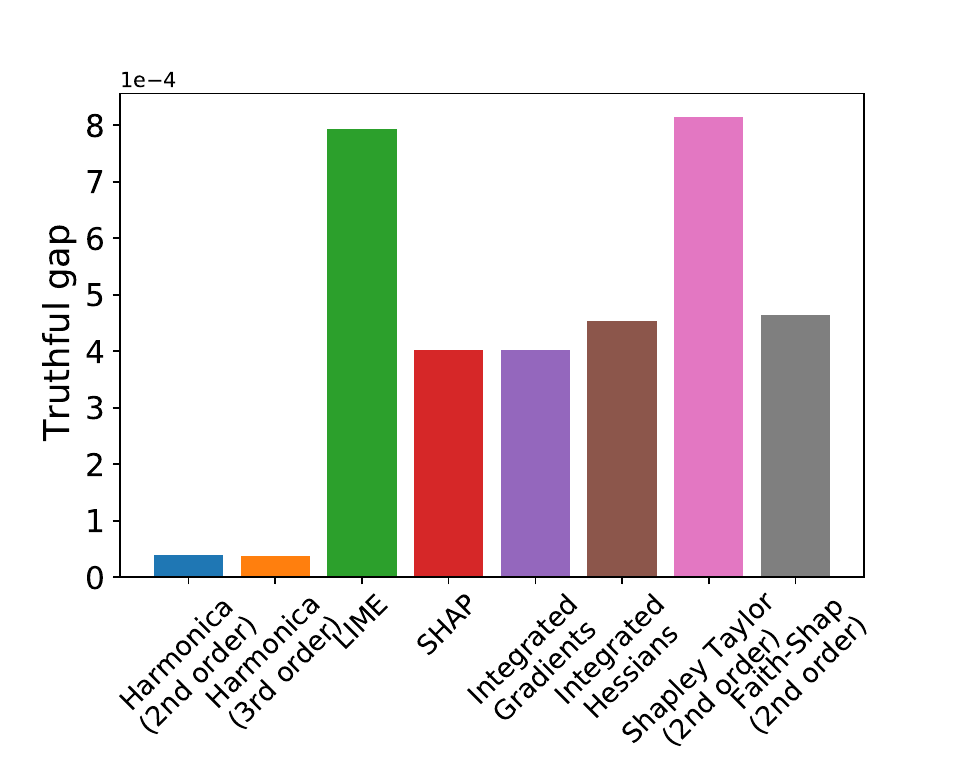}
        \label{fig:imdb-truthful-gap-c2}
    }
    \subfigure[bases $C^1$]{
        \includegraphics[width=0.31\textwidth]{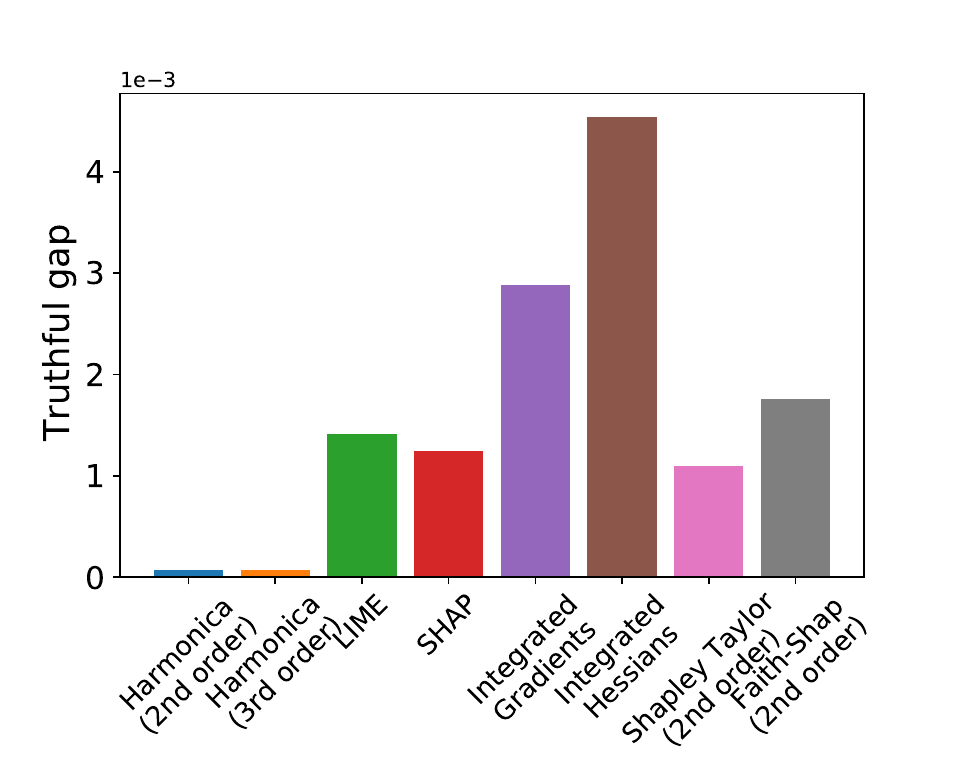}
        \label{fig:imdb-truthful-gap-c1}
    }
     \caption{Visualization of truthful gap $\mathbb{T}_C(f,g)$ evaluated on IMDb dataset.}
     \label{fig:imdb-truthful-gap-plot}
\end{figure*}

\begin{figure*}[htbp]
     \centering
     \subfigure[bases $C^3$]{
        \includegraphics[width=0.31\textwidth]{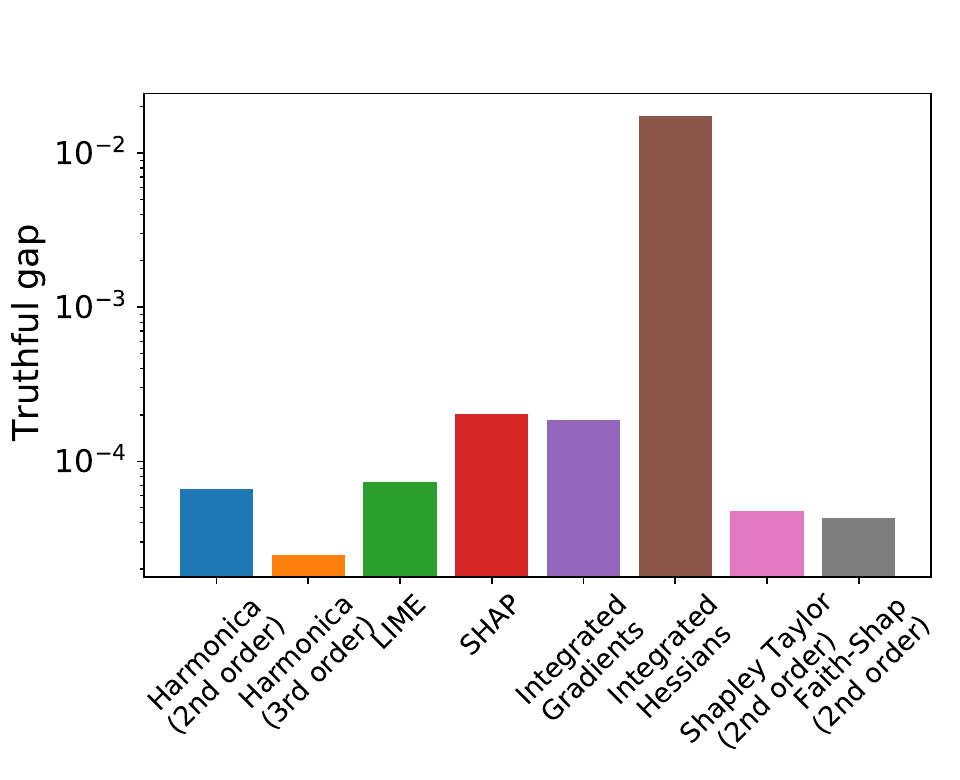}
        \label{fig:vision-truthful-gap-c3}
    }
    \subfigure[bases $C^2$]{
        \includegraphics[width=0.31\textwidth]{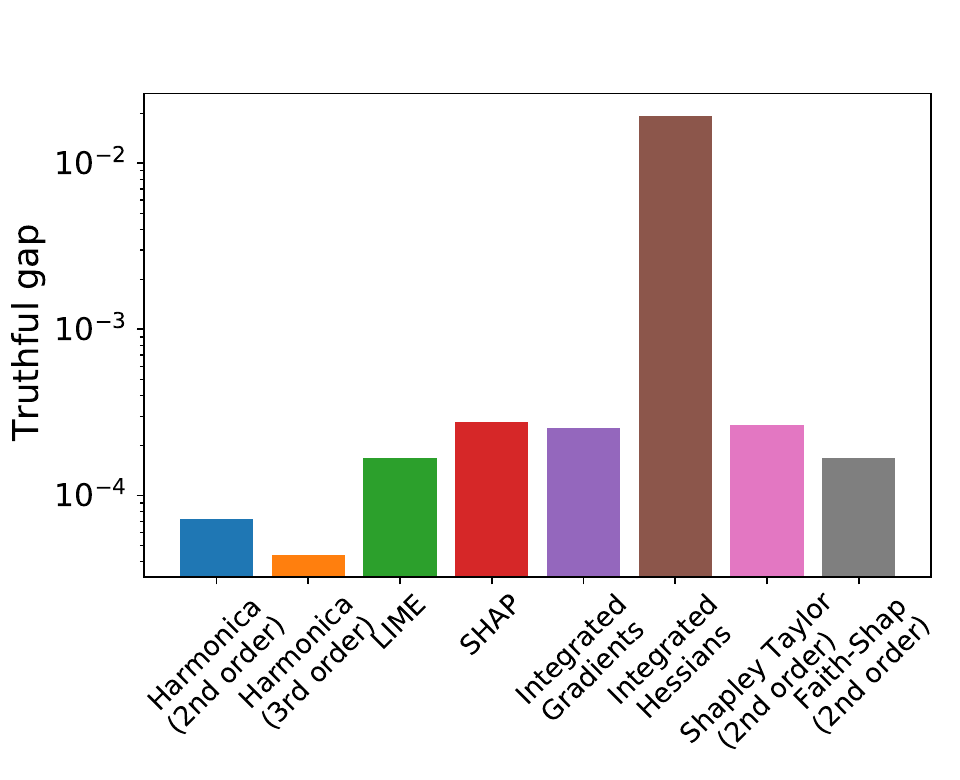}
        \label{fig:vision-truthful-gap-c2}
    }
    \subfigure[bases $C^1$]{
        \includegraphics[width=0.31\textwidth]{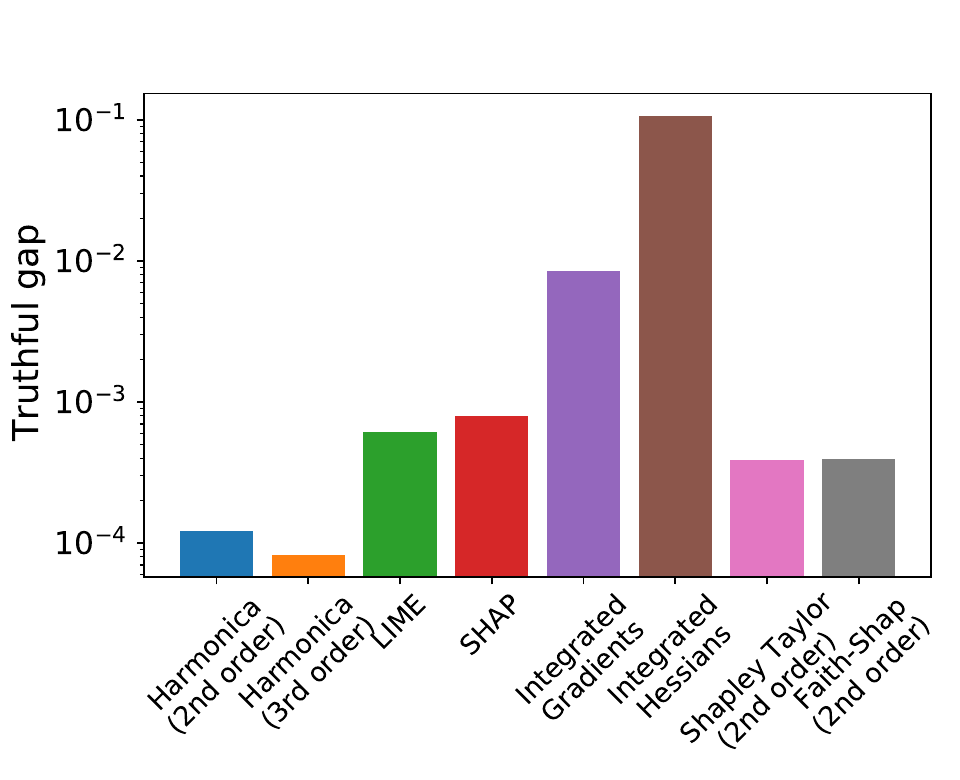}
        \label{fig:vision-truthful-gap-c1}
    }
     \caption{Visualization of truthful gap $\mathbb{T}_C(f,g)$ evaluated on ImageNet dataset.}
     \label{fig:vision-truthful-gap-plot}
\end{figure*}

\section{Detailed Results on Harmonica-anchor}
\label{sec:detailed_harmonica_anchor_results}
\subsection{Harmonica-anchor}
Figure~\ref{fig:sst-interpretation-error-plot-harmonica-anchor}, \ref{fig:imdb-interpretation-error-plot-harmonica-anchor}, and \ref{fig:vision-interpretation-error-plot-harmonica-anchor} show the interpretation error results evaluated on three datasets. All Harmonica-anchor algorithms further reduce the interpretation error compared to Harmonica. As we increase the number of anchors, the interpretation error slightly reduces, which is consistent with our intuition. Table~\ref{tab:sst-harmonica-anchor-interpretation-error}, \ref{tab:imdb-harmonica-anchor-interpretation-error}, and \ref{tab:vision-harmonica-anchor-interpretation-error} show the numerical results.
\begin{figure*}[htbp]
    \centering
    \subfigure[$L^2$ norm]{
        \includegraphics[width=0.31\textwidth]{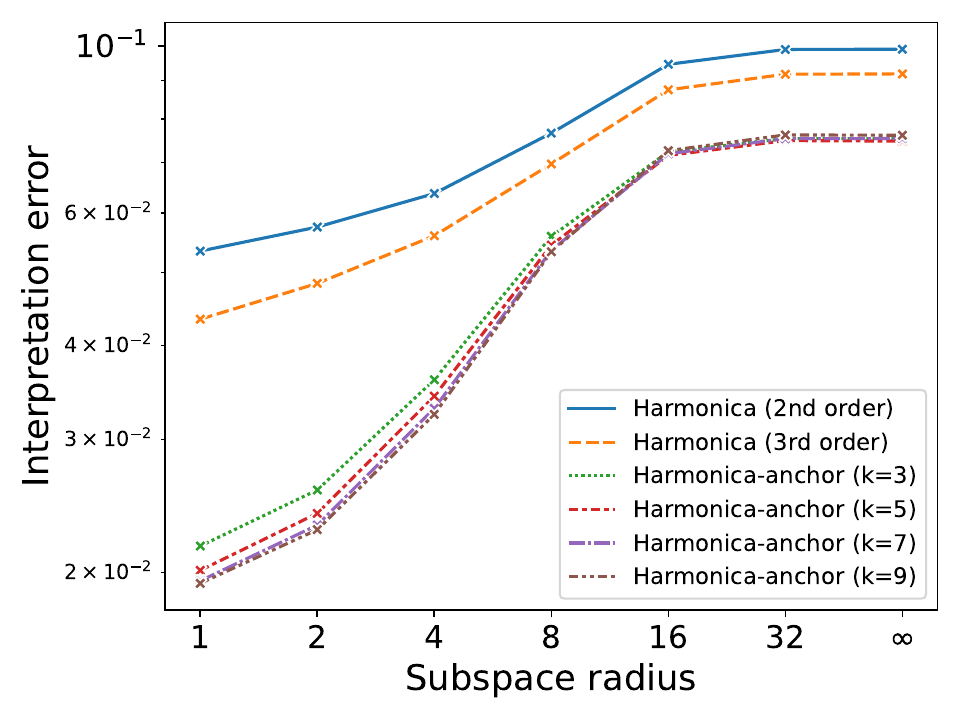}
        \label{fig:sst-interpretation-error-l2-harmonicaanchor}
    }
    \subfigure[$L^1$ norm]{
        \includegraphics[width=0.31\textwidth]{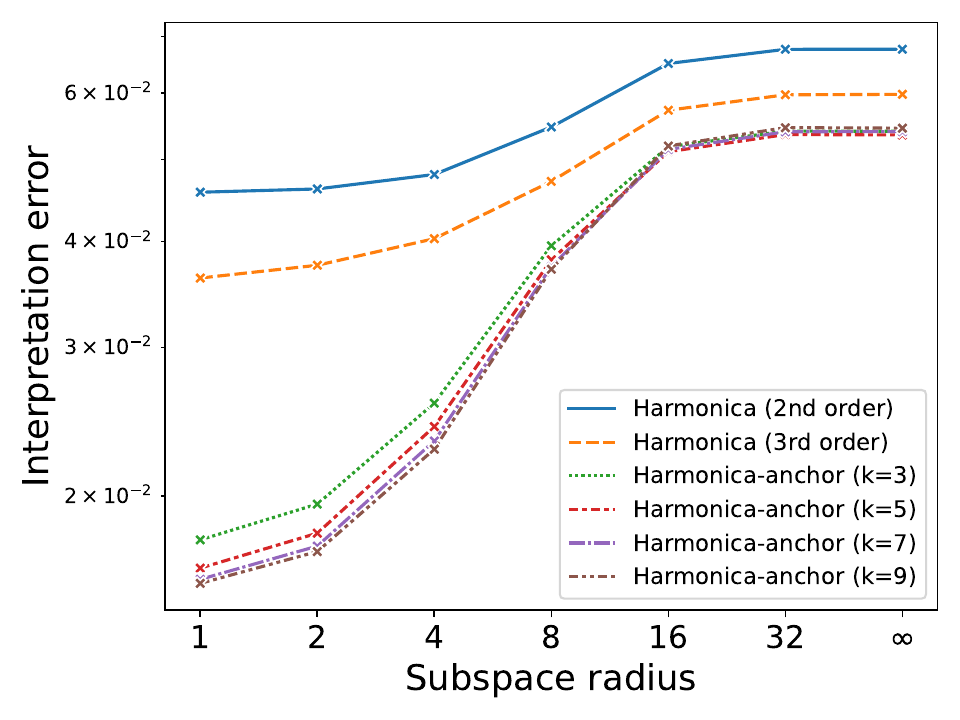}
        \label{fig:sst-interpretation-error-l1-harmonicaanchor}
    }
    \subfigure[$L^0$ norm]{
        \includegraphics[width=0.31\textwidth]{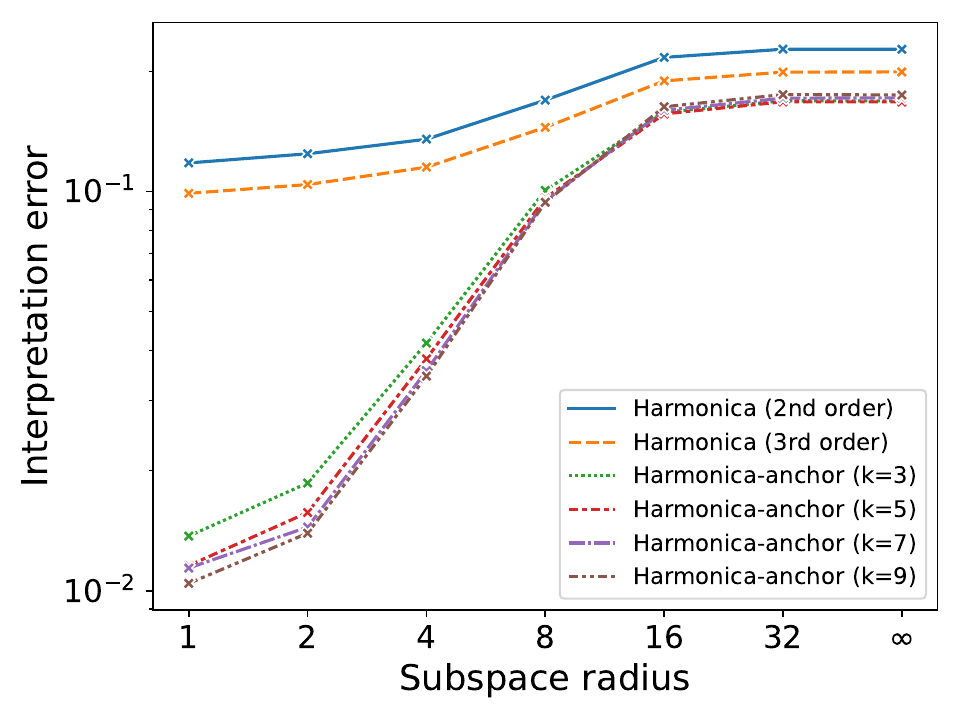}
        \label{fig:sst-interpretation-error-l0-harmonicaanchor}
    }
     \caption{Visualization of interpretation error $\bi_{p,\,\mathcal{N}_x}(f,g)$ evaluated on SST-2 of Harmonica-anchor~(2nd order) with different anchor number $k$. }
     \label{fig:sst-interpretation-error-plot-harmonica-anchor}
\end{figure*}

\begin{figure*}[htbp]
    \centering
    \subfigure[$L^2$ norm]{
        \includegraphics[width=0.31\textwidth]{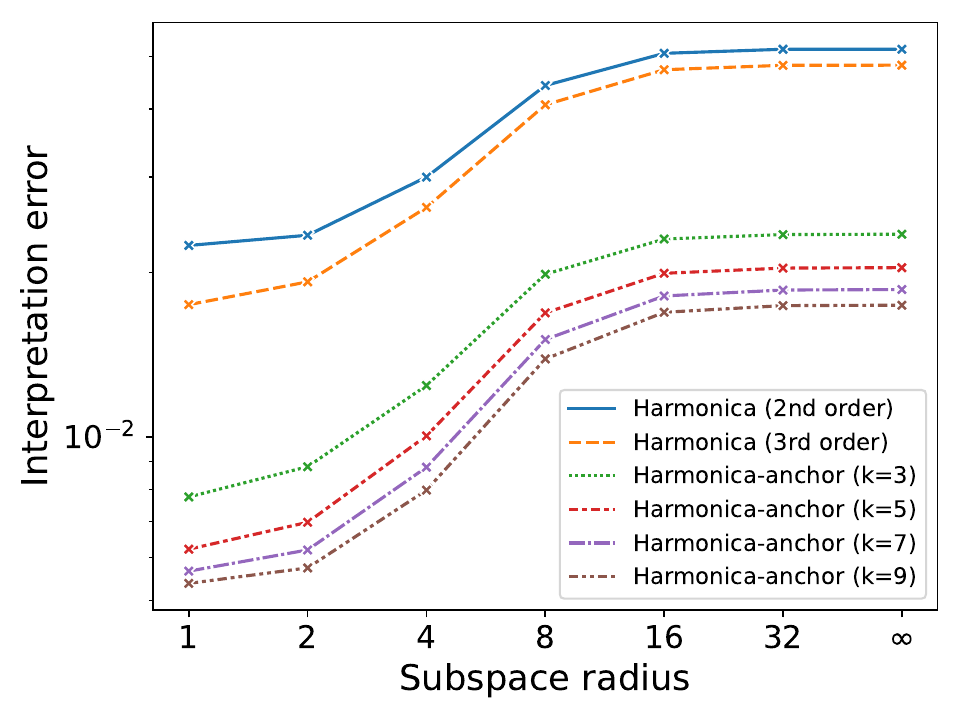}
        \label{fig:imdb-interpretation-error-l2-harmonicaanchor}
    }
    \subfigure[$L^1$ norm]{
        \includegraphics[width=0.31\textwidth]{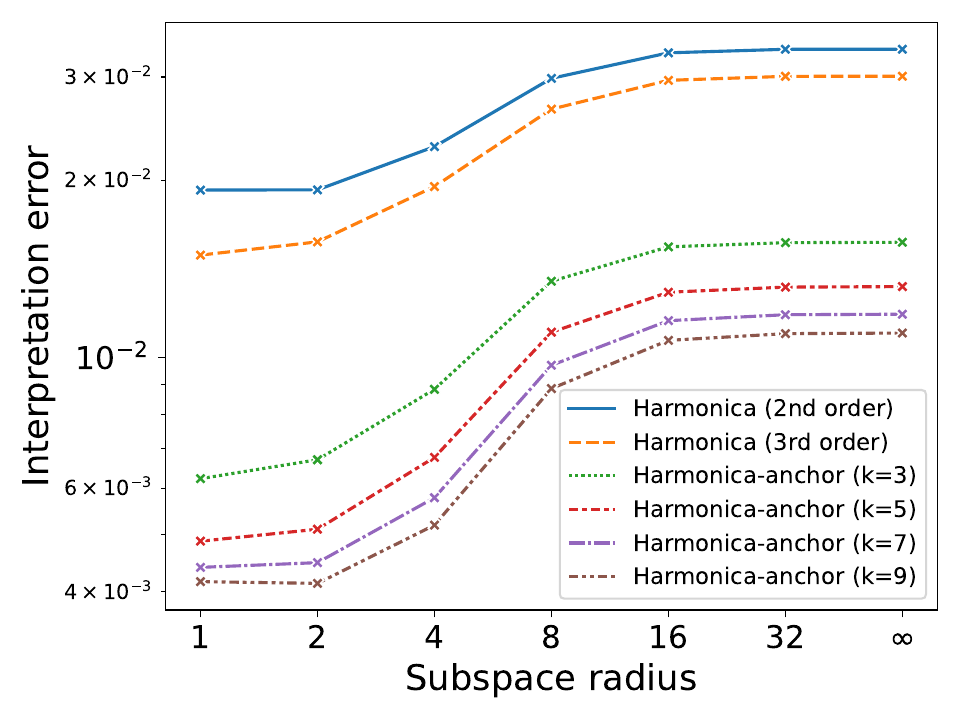}
        \label{fig:imdb-interpretation-error-l1-harmonicaanchor}
    }
    \subfigure[$L^0$ norm]{
        \includegraphics[width=0.31\textwidth]{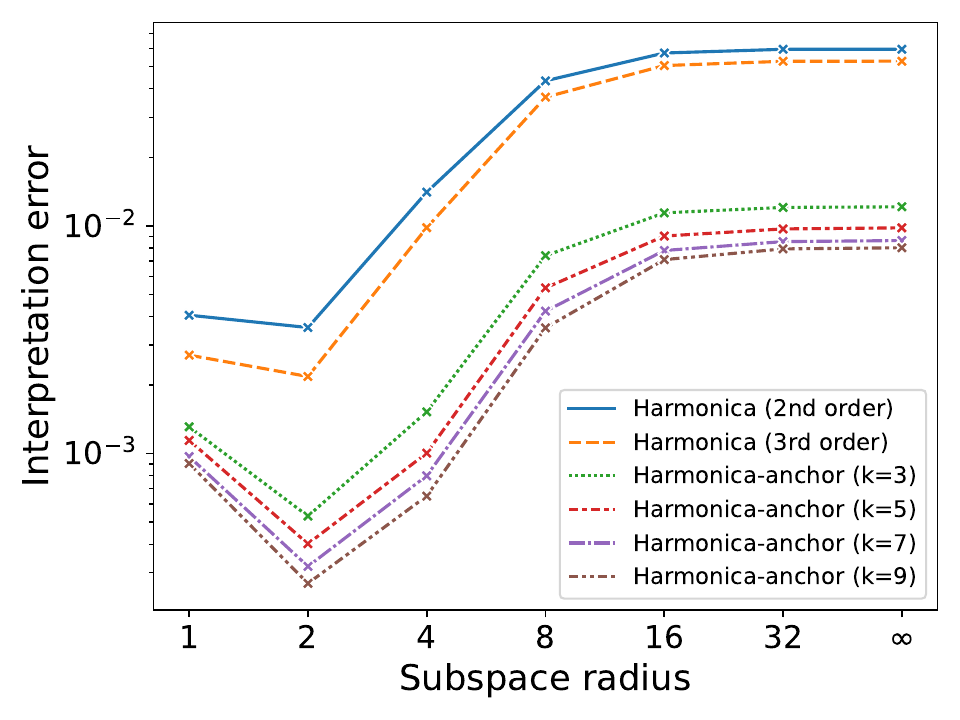}
        \label{fig:imdb-interpretation-error-l0-harmonicaanchor}
    }
     \caption{Visualization of interpretation error $\bi_{p,\,\mathcal{N}_x}(f,g)$ evaluated on IMDb of Harmonica-anchor~(2nd order) with different anchor number $k$. }
     \label{fig:imdb-interpretation-error-plot-harmonica-anchor}
\end{figure*}

\begin{figure*}[htbp]
    \centering
    \subfigure[$L^2$ norm]{
        \includegraphics[width=0.31\textwidth]{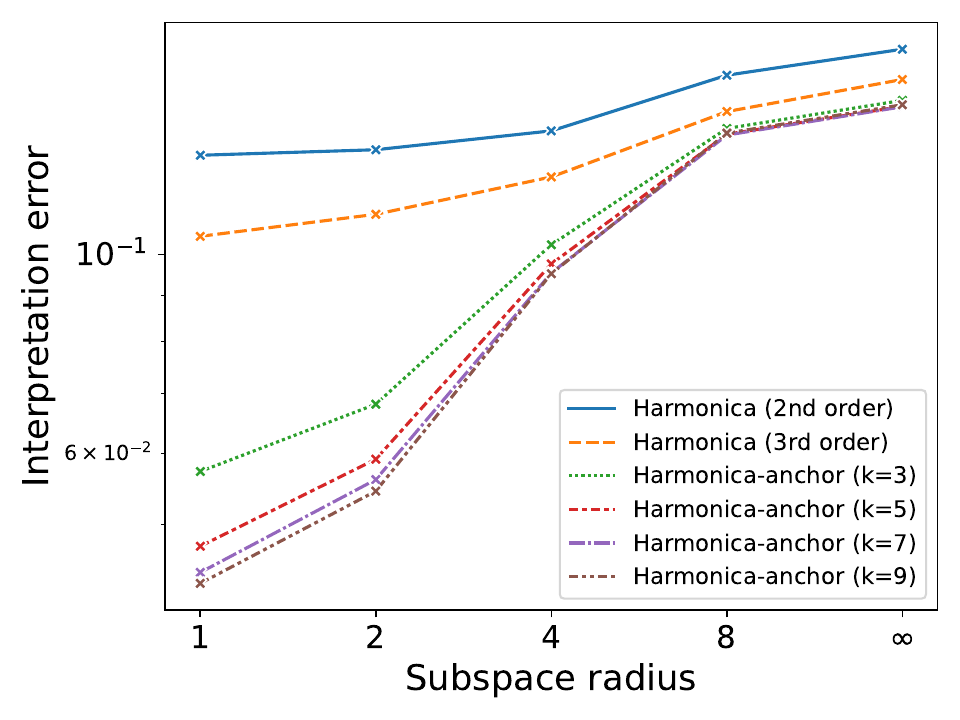}
        \label{fig:vision-interpretation-error-l2-harmonicaanchor}
    }
    \subfigure[$L^1$ norm]{
        \includegraphics[width=0.31\textwidth]{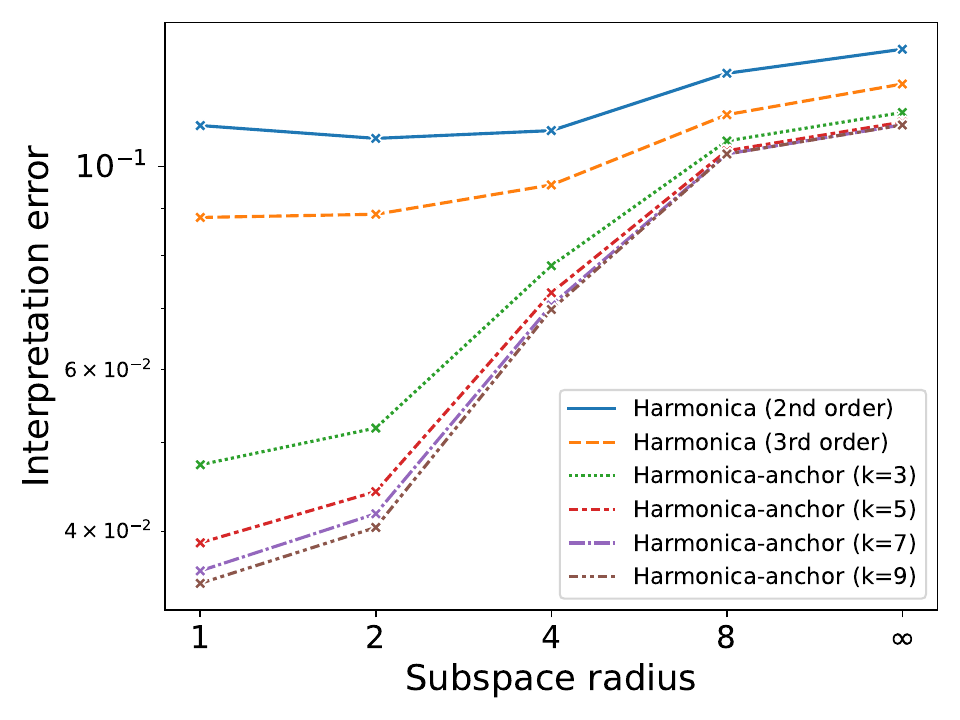}
        \label{fig:vision-interpretation-error-l1-harmonicaanchor}
    }
    \subfigure[$L^0$ norm]{
        \includegraphics[width=0.31\textwidth]{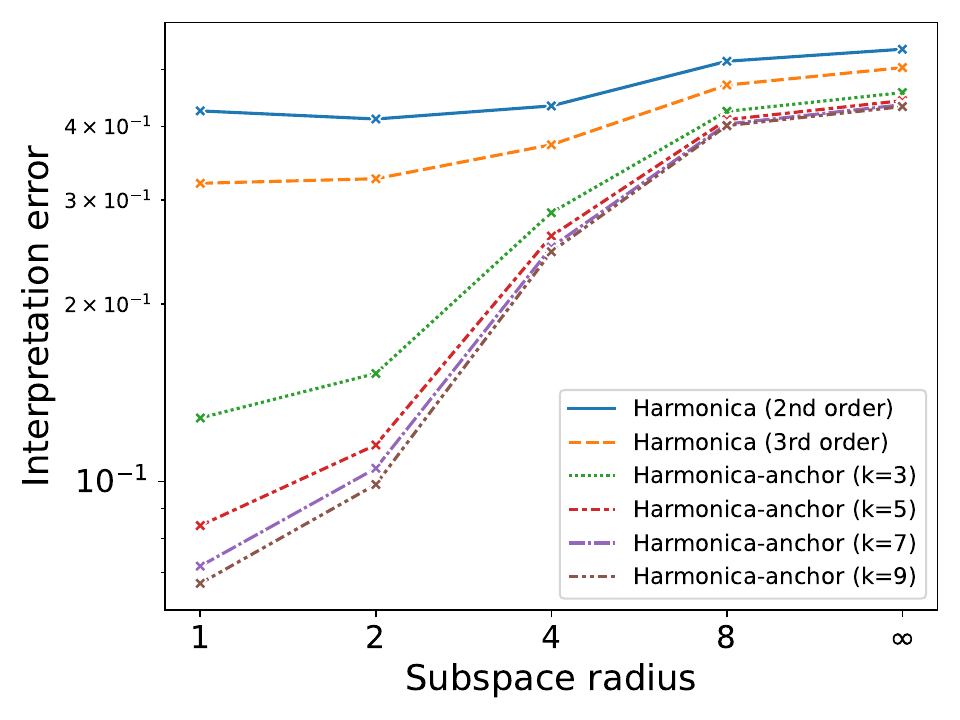}
        \label{fig:vision-interpretation-error-l0-harmonicaanchor}
    }
     \caption{Visualization of interpretation error $\bi_{p,\,\mathcal{N}_x}(f,g)$ evaluated on ImageNet of Harmonica-anchor~(2nd order) with different anchor number $k$. }
     \label{fig:vision-interpretation-error-plot-harmonica-anchor}
\end{figure*}

\begin{table}[htbp]

\begin{minipage}[p]{0.99\textwidth}{\hspace{2ex}}
\centering
\begin{minipage}[p]{0.33\textwidth}
\centering
\resizebox{0.99\columnwidth}{!}{%
\begin{tabular}{cccc}
\toprule
Radius & $L^2$ norm & $L^1$ norm & $L^0$ norm \\ \midrule
1 & 0.0217 & 0.0178 & 0.0137 \\ 
2 & 0.0257 & 0.0196 & 0.0186 \\ 
4 & 0.0360 & 0.0258 & 0.0417 \\ 
8 & 0.0559 & 0.0396 & 0.1007 \\ 
16 & 0.0723 & 0.0518 & 0.1579\\ 
32 & 0.0753 & 0.0541 & 0.1688\\ 
$\infty$ & 0.0753 & 0.0541 & 0.1687 \\ \bottomrule
\end{tabular}%
}
\caption*{Harmonica-anchor~($k=3$)}
\label{tab:anchor-sst-3}
\end{minipage}
\begin{minipage}[p]{0.33\textwidth}
\centering
\resizebox{0.99\columnwidth}{!}{%
\begin{tabular}{cccc}
\toprule
Radius & $L^2$ norm & $L^1$ norm & $L^0$ norm \\ \midrule
1 & 0.0201 & 0.0164 & 0.0116 \\ 
2 & 0.0239 & 0.0181 & 0.0157 \\ 
4 & 0.0343 & 0.0242 & 0.0381 \\ 
8 & 0.0544 & 0.0381 & 0.0963 \\ 
16 & 0.0715 & 0.0511 & 0.1565 \\
32 & 0.0749 & 0.0536 & 0.1678 \\
$\infty$ & 0.0748 & 0.0535 & 0.1677 \\ \bottomrule
\end{tabular}%
}
\caption*{Harmonica-anchor~($k=5$)}
\label{tab:anchor-sst-5}
\end{minipage}
\end{minipage}

\begin{minipage}[p]{0.99\textwidth}{\hspace{2ex}}
\centering
\begin{minipage}[p]{0.33\textwidth}
\centering
\resizebox{0.99\columnwidth}{!}{%
\begin{tabular}{cccc}
\toprule
Radius & $L^2$ norm & $L^1$ norm & $L^0$ norm \\ \midrule
1 & 0.0195 & 0.0159 & 0.0114 \\  
2 & 0.0231 & 0.0175 & 0.0144 \\  
4 & 0.0330 & 0.0232 & 0.0356 \\  
8 & 0.0535 & 0.0374 & 0.0948 \\  
16 & 0.0719 & 0.0514 & 0.1598 \\ 
32 & 0.0753 & 0.0540 & 0.1714 \\ 
$\infty$ & 0.0753 & 0.0540 & 0.1717 \\ \bottomrule
\end{tabular}%
}
\caption*{Harmonica-anchor~($k=7$)}
\label{tab:anchor-sst-7}
\end{minipage}
\begin{minipage}[p]{0.33\textwidth}
\centering
\resizebox{0.99\columnwidth}{!}{%
\begin{tabular}{cccc}
\toprule
Radius & $L^2$ norm & $L^1$ norm & $L^0$ norm \\ \midrule
1 & 0.0193 & 0.0158 & 0.0104 \\  
2 & 0.0228 & 0.0172 & 0.0139 \\  
4 & 0.0324 & 0.0227 & 0.0345 \\  
8 & 0.0533 & 0.0371 & 0.0940 \\  
16 & 0.0726 & 0.0519 & 0.1631 \\ 
32 & 0.0762 & 0.0546 & 0.1750 \\ 
$\infty$ & 0.0761 & 0.0545 & 0.1746 \\ \bottomrule
\end{tabular}%
}
\caption*{Harmonica-anchor~($k=9$)}
\label{tab:anchor-sst-9}
\end{minipage}
\end{minipage}

\caption{The interpretation error of Harmonica-anchor algorithms evaluated on the SST-2 dataset for different $k$ with a radius ranging from $1$ to $\infty$ under $L^2$, $L^1$ and $L^0$ norms.}
\label{tab:sst-harmonica-anchor-interpretation-error}
\end{table}

\begin{table}[htbp]

\begin{minipage}[p]{0.99\textwidth}{\hspace{2ex}}
\centering
\begin{minipage}[p]{0.33\textwidth}
\centering
\resizebox{0.99\columnwidth}{!}{%
\begin{tabular}{cccc}
\toprule
Radius & $L^2$ norm & $L^1$ norm & $L^0$ norm \\ \midrule
1 & 0.0078 & 0.0062 & 0.0013 \\  
2 & 0.0088 & 0.0067 & 0.0005 \\  
4 & 0.0124 & 0.0088 & 0.0015 \\  
8 & 0.0199 & 0.0135 & 0.0074 \\  
16 & 0.0231 & 0.0154 & 0.0114 \\ 
32 & 0.0235 & 0.0157 & 0.0121 \\ 
$\infty$ & 0.0236 & 0.0157 & 0.0121 \\ \bottomrule
\end{tabular}%
}
\caption*{Harmonica-anchor~($k=3$)}
\label{tab:anchor-imdb-3}
\end{minipage}
\begin{minipage}[p]{0.33\textwidth}
\centering
\resizebox{0.99\columnwidth}{!}{%
\begin{tabular}{cccc}
\toprule
Radius & $L^2$ norm & $L^1$ norm & $L^0$ norm \\ \midrule
1 & 0.0062 & 0.0049 & 0.0011 \\  
2 & 0.0070 & 0.0051 & 0.0004 \\  
4 & 0.0100 & 0.0068 & 0.0010 \\  
8 & 0.0169 & 0.0110 & 0.0053 \\  
16 & 0.0200 & 0.0129 & 0.0090 \\ 
32 & 0.0204 & 0.0132 & 0.0097 \\ 
$\infty$ & 0.0204 & 0.0132 & 0.0098 \\ \bottomrule
\end{tabular}%
}
\caption*{Harmonica-anchor~($k=5$)}
\label{tab:anchor-imdb-5}
\end{minipage}
\end{minipage}

\begin{minipage}[p]{0.99\textwidth}{\hspace{2ex}}
\centering
\begin{minipage}[p]{0.33\textwidth}
\centering
\resizebox{0.99\columnwidth}{!}{%
\begin{tabular}{cccc}
\toprule
Radius & $L^2$ norm & $L^1$ norm & $L^0$ norm \\ \midrule
1 & 0.0057 & 0.0044 & 0.0010 \\  
2 & 0.0062 & 0.0045 & 0.0003 \\  
4 & 0.0088 & 0.0058 & 0.0008 \\  
8 & 0.0151 & 0.0097 & 0.0042 \\  
16 & 0.0181 & 0.0116 & 0.0078 \\ 
32 & 0.0186 & 0.0118 & 0.0085 \\ 
$\infty$ & 0.0186 & 0.0119 & 0.0086 \\ \bottomrule
\end{tabular}%
}
\caption*{Harmonica-anchor~($k=7$)}
\label{tab:anchor-imdb-7}
\end{minipage}
\begin{minipage}[p]{0.33\textwidth}
\centering
\resizebox{0.99\columnwidth}{!}{%
\begin{tabular}{cccc}
\toprule
Radius & $L^2$ norm & $L^1$ norm & $L^0$ norm \\ \midrule
1 & 0.0054 & 0.0042 & 0.0009 \\  
2 & 0.0057 & 0.0041 & 0.0003 \\  
4 & 0.0080 & 0.0052 & 0.0007 \\  
8 & 0.0139 & 0.0089 & 0.0036 \\  
16 & 0.0169 & 0.0107 & 0.0071 \\ 
32 & 0.0174 & 0.0110 & 0.0079 \\ 
$\infty$ & 0.0174 & 0.0110 & 0.0080 \\ \bottomrule
\end{tabular}%
}
\caption*{Harmonica-anchor~($k=9$)}
\label{tab:anchor-imdb-9}
\end{minipage}
\end{minipage}

\caption{The interpretation error of Harmonica-anchor algorithms evaluated on the IMDb dataset for different $k$ with a radius ranging from $1$ to $\infty$ under $L^2$, $L^1$ and $L^0$ norms.}
\label{tab:imdb-harmonica-anchor-interpretation-error}
\end{table}

\begin{table}[htbp]

\begin{minipage}[p]{0.99\textwidth}{\hspace{2ex}}
\centering
\begin{minipage}[p]{0.33\textwidth}
\centering
\resizebox{0.99\columnwidth}{!}{%
\begin{tabular}{cccc}
\toprule
Radius & $L^2$ norm & $L^1$ norm & $L^0$ norm \\ \midrule
1 & 0.0573 & 0.0473 & 0.1281 \\  
2 & 0.0681 & 0.0518 & 0.1524 \\  
4 & 0.1025 & 0.0779 & 0.2855 \\  
8 & 0.1382 & 0.1066 & 0.4240 \\  
$\infty$ & 0.1485 & 0.1145 & 0.4562 \\ \bottomrule
\end{tabular}%
}
\caption*{Harmonica-anchor~($k=3$)}
\label{tab:anchor-vision-3}
\end{minipage}
\begin{minipage}[p]{0.33\textwidth}
\centering
\resizebox{0.99\columnwidth}{!}{%
\begin{tabular}{cccc}
\toprule
Radius & $L^2$ norm & $L^1$ norm & $L^0$ norm \\ \midrule
1 & 0.0473 & 0.0389 & 0.0842 \\  
2 & 0.0592 & 0.0442 & 0.1153 \\  
4 & 0.0976 & 0.0728 & 0.2608 \\  
8 & 0.1361 & 0.1040 & 0.4104 \\  
$\infty$ & 0.1462 & 0.1118 & 0.4422 \\ \bottomrule
\end{tabular}%
}
\caption*{Harmonica-anchor~($k=5$)}
\label{tab:anchor-vision-5}
\end{minipage}
\end{minipage}

\begin{minipage}[p]{0.99\textwidth}{\hspace{2ex}}
\centering
\begin{minipage}[p]{0.33\textwidth}
\centering
\resizebox{0.99\columnwidth}{!}{%
\begin{tabular}{cccc}
\toprule
Radius & $L^2$ norm & $L^1$ norm & $L^0$ norm \\ \midrule
1 & 0.0442 & 0.0362 & 0.0719 \\  
2 & 0.0561 & 0.0418 & 0.1053 \\  
4 & 0.0954 & 0.0706 & 0.2496 \\  
8 & 0.1358 & 0.1032 & 0.4036 \\  
$\infty$ & 0.1461 & 0.1110 & 0.4352 \\ \bottomrule
\end{tabular}%
}
\caption*{Harmonica-anchor~($k=7$)}
\label{tab:anchor-vision-7}
\end{minipage}
\begin{minipage}[p]{0.33\textwidth}
\centering
\resizebox{0.99\columnwidth}{!}{%
\begin{tabular}{cccc}
\toprule
Radius & $L^2$ norm & $L^1$ norm & $L^0$ norm \\ \midrule
1 & 0.0430 & 0.0351 & 0.0672 \\  
2 & 0.0545 & 0.0404 & 0.0988 \\  
4 & 0.0952 & 0.0698 & 0.2452 \\  
8 & 0.1365 & 0.1032 & 0.4011 \\  
$\infty$ & 0.1468 & 0.1110 & 0.4320 \\ \bottomrule
\end{tabular}%
}
\caption*{Harmonica-anchor~($k=9$)}
\label{tab:anchor-vision-9}
\end{minipage}
\end{minipage}

\caption{The interpretation error of Harmonica-anchor algorithms evaluated on the ImageNet dataset for different $k$ with a radius ranging from $1$ to $\infty$ under $L^2$, $L^1$ and $L^0$ norms.}
\label{tab:vision-harmonica-anchor-interpretation-error}
\end{table}

\subsection{Harmonica-anchor-constrained}
Here we provide the interpretation error of Harmonica-anchor-constrained evaluated on SST-2 dataset in Figure~\ref{fig:sst2-interpretation-error-plot-harmonica-anchor-lambda-compare}. As we increase the constraint coefficient $\lambda_2$, the interpretation error increases, but still smaller than that of Harmonica. 
\begin{figure*}[htbp]
    \centering
    \subfigure[$L^2$ norm]{
        \includegraphics[width=0.31\textwidth]{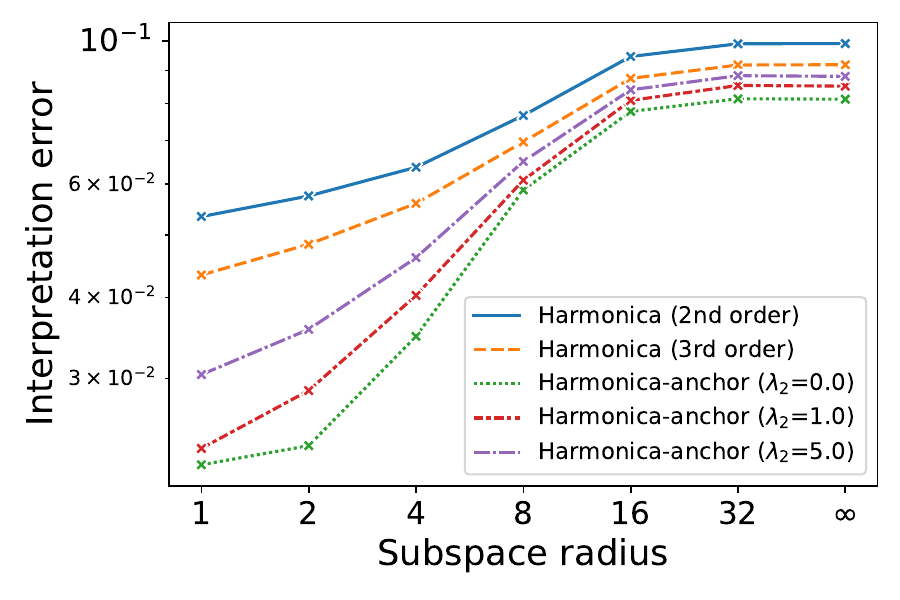}
        \label{fig:sst-interpretation-error-l2-harmonicaanchor-lambda}
    }
    \subfigure[$L^1$ norm]{
        \includegraphics[width=0.31\textwidth]{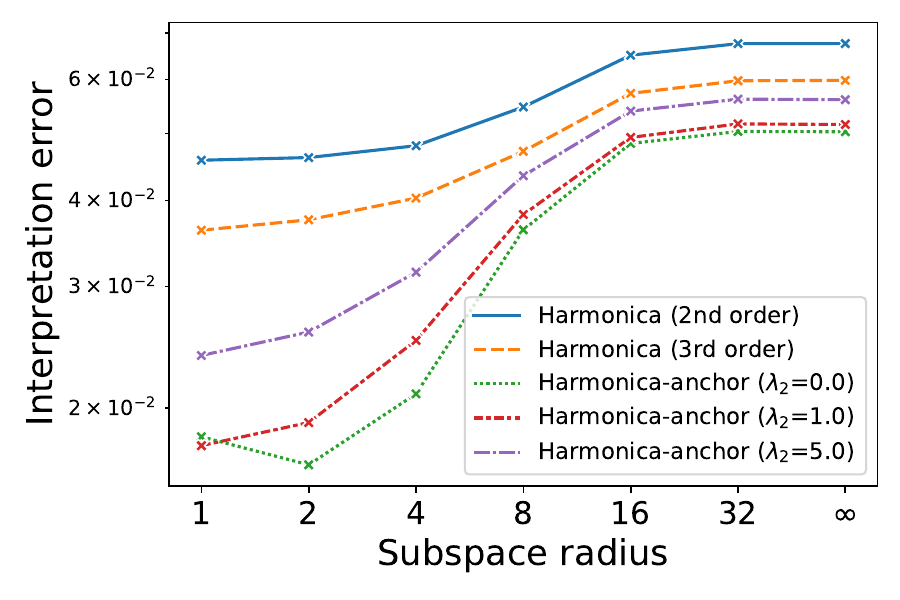}
        \label{fig:sst-interpretation-error-l1-harmonicaanchor-lambda}
    }
    \subfigure[$L^0$ norm]{
        \includegraphics[width=0.31\textwidth]{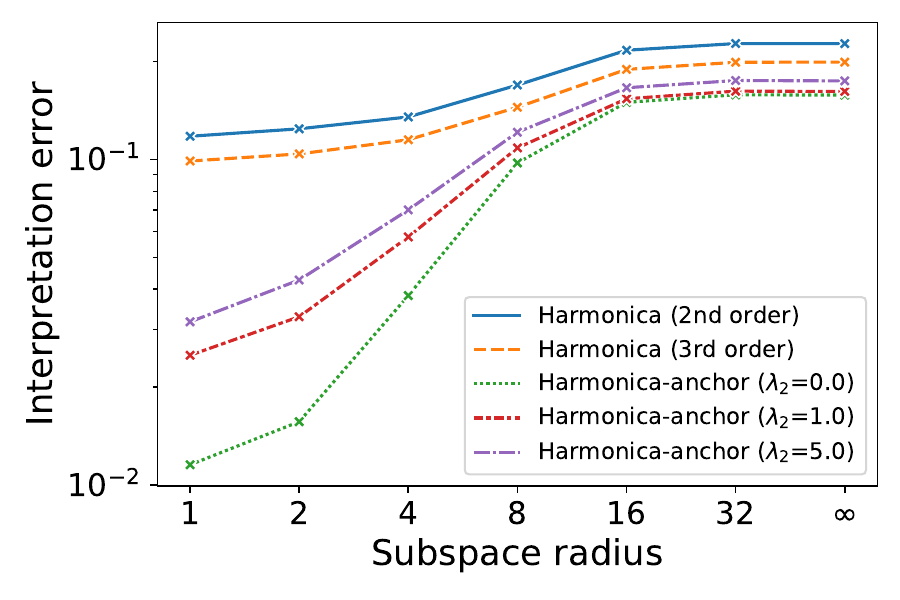}
        \label{fig:sst-interpretation-error-l0-harmonicaanchor-lambda}
    }
     \caption{Visualization of interpretation error $\bi_{p,\,\mathcal{N}_x}(f,g)$ evaluated on SST-2 dataset of Harmonica-anchor-constrained~(2nd order) with different coefficient $\lambda_2$. Here all the Harmonica-anchor-constrained algorithms use $k = 5$.}
     \label{fig:sst2-interpretation-error-plot-harmonica-anchor-lambda-compare}
\end{figure*}

\begin{table}[htbp]

\begin{minipage}[p]{0.99\textwidth}{\hspace{2ex}}
\centering
\begin{minipage}[p]{0.32\textwidth}
\centering
\resizebox{0.99\columnwidth}{!}{%
\begin{tabular}{cccc}
\toprule
Radius & $L^2$ norm & $L^1$ norm & $L^0$ norm \\ \midrule
1 & 0.0220 & 0.0182 & 0.0115 \\ 
2 & 0.0236 & 0.0166 & 0.0157 \\ 
4 & 0.0348 & 0.0210 & 0.0382 \\ 
8 & 0.0587 & 0.0363 & 0.0977 \\ 
16 & 0.0777 & 0.0484 & 0.1500\\ 
32 & 0.0813 & 0.0504 & 0.1581\\ 
$\infty$ & 0.0812 & 0.0503 & 0.1579 \\ \bottomrule
\end{tabular}%
}
\caption*{Harmonica-anchor-constrained~($\lambda_2=0$)}
\label{tab:anchor-constrained-sst-3}
\end{minipage}
\begin{minipage}[p]{0.32\textwidth}
\centering
\resizebox{0.99\columnwidth}{!}{%
\begin{tabular}{cccc}
\toprule
Radius & $L^2$ norm & $L^1$ norm & $L^0$ norm \\ \midrule
1 & 0.0234 & 0.0176 & 0.0250 \\ 
2 & 0.0287 & 0.0191 & 0.0329 \\ 
4 & 0.0403 & 0.0251 & 0.0579 \\ 
8 & 0.0608 & 0.0382 & 0.1086 \\ 
16 & 0.0808 & 0.0494 & 0.1538 \\
32 & 0.0853 & 0.0517 & 0.1623 \\
$\infty$ & 0.0851 & 0.0516 & 0.1620 \\ \bottomrule
\end{tabular}%
}
\caption*{Harmonica-anchor-constrained~($\lambda_2=1.0$)}
\label{tab:anchor-constrained-sst-5}
\end{minipage}
\begin{minipage}[p]{0.32\textwidth}
\centering
\resizebox{0.99\columnwidth}{!}{%
\begin{tabular}{cccc}
\toprule
Radius & $L^2$ norm & $L^1$ norm & $L^0$ norm \\ \midrule
1 & 0.0304 & 0.0238 & 0.0317 \\ 
2 & 0.0357 & 0.0258 & 0.0427 \\ 
4 & 0.0461 & 0.0315 & 0.0701 \\ 
8 & 0.0650 & 0.0435 & 0.1214 \\ 
16 & 0.0839 & 0.0539 & 0.1663 \\
32 & 0.0883 & 0.0561 & 0.1750 \\
$\infty$ & 0.0881 & 0.0560 & 0.1745 \\ \bottomrule
\end{tabular}%
}
\caption*{Harmonica-anchor-constrained~($\lambda_2=5.0$)}
\label{tab:anchor-constrained-sst-7}
\end{minipage}
\end{minipage}

\caption{The interpretation error of Harmonica-anchor-constrained algorithms evaluated on the SST-2 dataset for different $\lambda_2$ with a radius ranging from $1$ to $\infty$ under $L^2$, $L^1$ and $L^0$ norms.}
\label{tab:sst-harmonica-anchor-constrained-interpretation-error}
\end{table}

\section{Discussion on the Low-degree algorithm}
\label{sec:discussion-on-low-degree-algorithm}

We also investigate the sample complexity of Harmonica and Low-degree algorithms, which demonstrates that Harmonica achieves better performance with the same sample size.

\begin{figure*}[htb]
     \centering
     \subfigure[$L^2$ norm]{
        \includegraphics[width=0.31\textwidth]{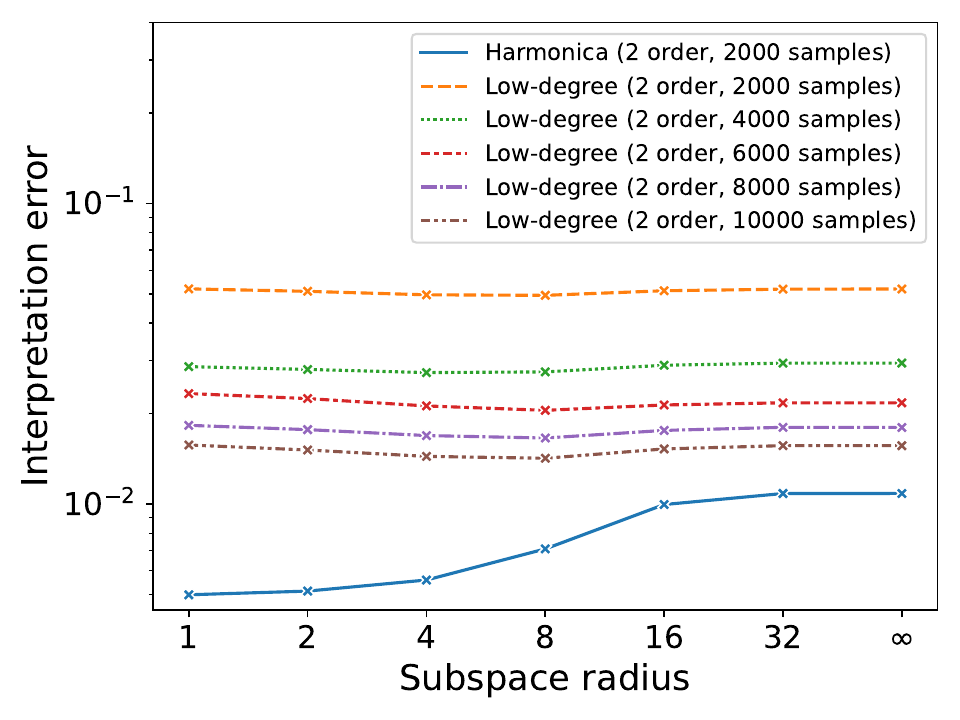} 
        \label{fig:sst-interpretation-error-l2-lowdegree}
    }
    \subfigure[$L^1$ norm]{
        \includegraphics[width=0.31\textwidth]{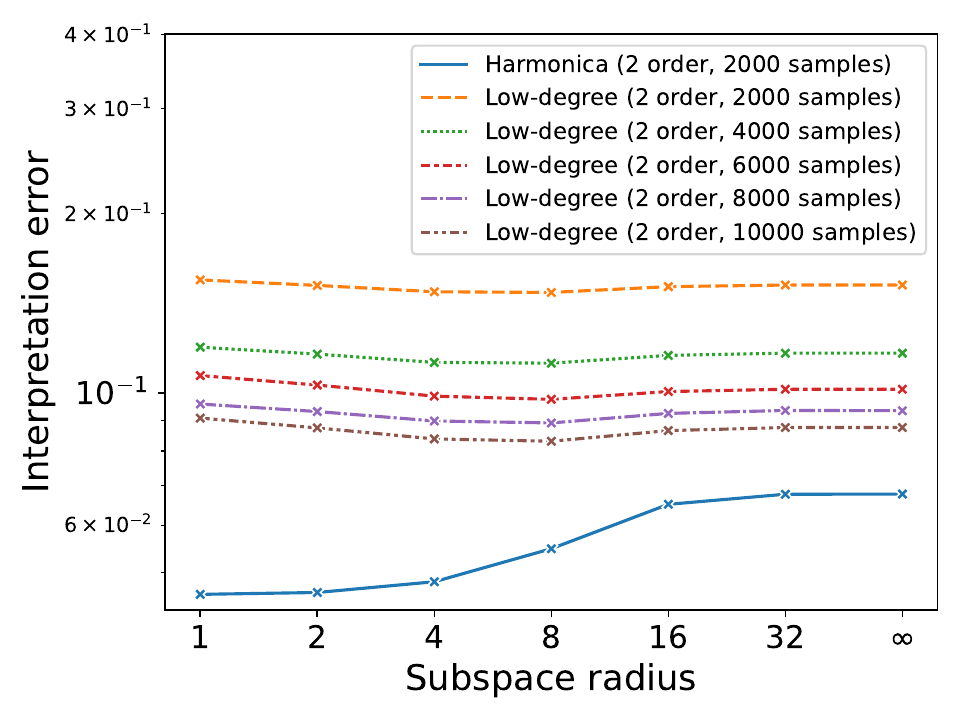} 
        \label{fig:sst-interpretation-error-l1-lowdegree}
    }
    \subfigure[$L^0$ norm]{
        \includegraphics[width=0.31\textwidth]{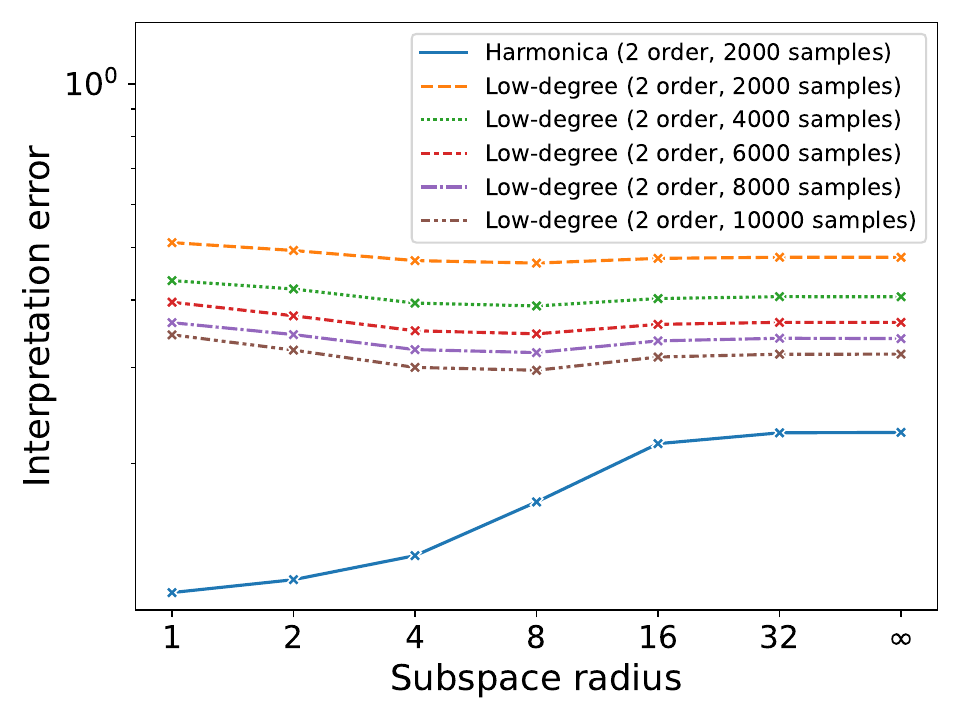} 
        \label{fig:sst-interpretation-error-l0-lowdegree}
    }
     \caption{Visualization of interpretation error $\bi_{p,\,\mathcal{N}_x}(f,g)$ evaluated on SST-2 dataset, while Harmonica and Low-degree algorithms using different sample size varying from 2000 to 10000.}
     \label{fig:sst2-interpretation-error-plot-lowdegree}
\end{figure*}

From Theorem~\ref{thm:harmonica} and Theorem~\ref{thm:low-degree}, we know that the sample complexity of the Harmonica algorithm~($\tilde{O}(\frac{1}{\epsilon})$) is much more efficient than the Low-degree algorithm~($\tilde{O}(\frac{1}{\epsilon^2})$). Figure~\ref{fig:sst2-interpretation-error-plot-lowdegree} shows that when evaluating the interpretation error on SST-2 dataset, with the same sample size, the Harmonica algorithm outperforms the Low-degree algorithm by a large margin. We further increase the sample size for the Low-degree algorithm and see that its interpretation error gradually approaches that of Harmonica. 
However, even with 5x sample size, the Low-degree algorithm still gives a larger interpretation error compared with Harmonica. 
However, since exactly computing the Low-degree algorithm is extremely time-consuming, here we present the results using five times the sample size to show the calculation difficulty of the Low-degree algorithm.

\end{document}